\theoremstyle{plain}
\newtheorem{theorem}{Theorem}[section]
\newtheorem{lemma}[theorem]{Lemma}
\newtheorem{corollary}[theorem]{Corollary}
\theoremstyle{definition}
\newtheorem{definition}[theorem]{Definition}
\newtheorem{assumption}[theorem]{Assumption}
\theoremstyle{remark}
\newtheorem{remark}[theorem]{Remark}
\title{Solving A Class of Non-Convex  Minimax Optimization in Federated Learning}
\author{%
  Xidong Wu$^\dagger $
  \\
  Electrical and Computer Engineering \\
  University of Pittsburgh\\
  Pittsburgh, PA 15213 \\
\texttt{xidong\_wu@outlook.com } \\
  \And
 Jianhui Sun$^\dagger $  \\
Computer Science \\
University of Virginia\\
Charlottesville, VA 22903 \\
\texttt{js9gu@virginia.edu}\\
  \AND
  Zhengmian Hu \\
  Computer Science \\
  University of Maryland\\
  College Park, MD 20742 \\
  \texttt{huzhengmian@gmail.com} \\
  \And
 Aidong Zhang$^\S$ \\
  Computer Science \\
  University of Virginia \\
Charlottesville, VA 22903 \\
\texttt{aidong@virginia.edu} \\
  \And
  Heng Huang$^\ast$\\
  Computer Science\\
  University of Maryland\\
  College Park, MD 20742 \\
  \texttt{
henghuanghh@gmail.com} \\
}
\begin{document}

\maketitle
\def\thefootnote{$\dagger$}\footnotetext{Equal contribution}
\def\thefootnote{$\S$}\footnotetext{This work was partially supported by NSF CNS 2213700 and CCF 2217071.}
\def\thefootnote{$\ast$}\footnotetext{This work was partially supported by NSF IIS 1838627, 1837956, 1956002, 2211492, CNS 2213701, CCF 2217003, DBI 2225775.}

\begin{abstract}
The minimax problems arise throughout machine learning applications, ranging from adversarial training and policy evaluation in reinforcement learning to AUROC maximization. To address the large-scale data challenges across multiple clients with communication-efficient distributed training, federated learning (FL) is gaining popularity. Many optimization algorithms for minimax problems have been developed in the centralized setting (\emph{i.e.} single-machine). Nonetheless, the algorithm for minimax problems under FL is still underexplored. In this paper, we study a class of federated nonconvex minimax optimization problems. We propose FL algorithms (FedSGDA+ and FedSGDA-M) and reduce existing complexity results for the most common minimax problems. For nonconvex-concave problems, we propose FedSGDA+ and reduce the communication complexity to $O(\varepsilon^{-6})$. Under nonconvex-strongly-concave and nonconvex-PL minimax settings, we prove that FedSGDA-M has the best-known sample complexity of $O(\kappa^{3} N^{-1}\varepsilon^{-3})$ and the best-known communication complexity of $O(\kappa^{2}\varepsilon^{-2})$. FedSGDA-M is the first algorithm to match the best sample complexity $O(\varepsilon^{-3})$ achieved by the single-machine method under the nonconvex-strongly-concave setting. Extensive experimental results on fair classification and AUROC maximization show the efficiency of our algorithms.
\end{abstract}

\section{Introduction} 
The nonconvex minimax optimization has been actively applied to solve enormous machine learning problems, such as adversarial training \cite{wang2021adversarial, wu2023adversarial}, generative adversarial networks (GANs) \cite{goodfellow2014generative,  gulrajani2017improved}, policy evaluation in reinforcement learning \citep{wai2019variance, han2022solution, he2023robust_ai4abm, he2023robustev}, robust optimization \cite{deng2021distributionally, wu2022retrievalguard, wu2023law, huang2023gradient}, AUROC (area under the ROC curve) maximization \cite{liu2019stochastic}, \emph{etc}. Many single-machine minimax optimization algorithms have been proposed to address these problems. 

Machine learning tasks with large-scale distributed datasets call for distributed training \cite{bao2022doubly, mei2023mac, gogineni2023accmer} because of its ability to shorten the calculation time and train models with data from various locations. At the same time, communication overhead has emerged as the most restrictive bottleneck of distributed training due to the increasing model and data size
To tackle the communication issue, various federated learning (FL) \cite{mcmahan2017communication} algorithms and analysis \cite{hu2023beyond} were proposed and FL has emerged as a promising technique since it avoids frequent transmission between worker nodes and the central server. In FL, worker nodes train and update their models locally, and the server aggregates and averages the model parameters from all worker nodes periodically.
Only models are shared among worker nodes and the training data are stored locally, which also provides a certain level of data privacy. In addition, FL also enhances computation power since it utilize more worker nodes to train models.

Although federated learning has gained popularity, most existing works focus on the standard stochastic minimization problem \cite{reddi2020adaptive, karimireddy2020scaffold, khanduri2021stem, wu2023faster}. Recently, some algorithms for non-minimization optimization in FL are proposed \cite{tarzanagh2022fednest, guo2023fedxl, gao2022convergence, sharma2022federated, wu2023federated}. However,  existing FL minimax algorithms have not achieved the complexity level reached by single-machine algorithms. To bridge this gap, we consider the federated nonconvex minimax optimization problem as follows:
\begin{align} \label{eq:1}
\min_{x \in \mathbb{R}^{d_1}} \max_{y \in \mathbb{R}^{d_2}} \left\{ F(x, y) = \frac{1}{N} \sum_{i=1}^N f_i(x, y) \right\}
\end{align}
where the function $f_{i}(x,y) = \mathbb{E}_{\xi_i \sim \mathcal{D}_i}[f_i(x, y; \xi_i)]: \mathbb{R}^{d_1} \times \mathbb{R}^{d_2} \rightarrow \mathbb{R}$ is the loss function of the $i^{th}$ worker node. We restrict our focus to the non-convex minimax problem, where $f_{i}(x,y)$ is nonconvex over $x \in \mathbb{R}^{d_1}$ and concave or nonconcave over $y \in \mathbb{R}^{d_2}$. $N$ is the total number of worker nodes. $\xi_i = (x_i,y_i) \sim \mathcal{D}_{i}$ denotes data point $\xi_i$ is sampled from the local data distribution $\mathcal{D}_{i}$ on machine $i$. In this paper, heterogeneous datasets are considered, namely, $\mathcal{D}_{i}$ and $\mathcal{D}_{j}$  ($i \neq j$ ) are not identical.   

Some recent works have attempted to solve federated minimax optimization for convex-concave setting \cite{deng2021distributionally,hou2021efficient, sun2022communication}. Due to the popularity of deep neural networks, nonconvex minimax has wider applications. More recently, some works  \cite{sharma2022federated,tarzanagh2022fednest,deng2021local,guo2020communication, yuan2021federated} extend single-machine algorithms, such as SGDA, to federated learning settings for nonconvex minimax optimization. However, theoretical understandings of federated minimax optimization remain limited in the literature. In the context of stochastic smooth nonconvex minimax problems, their analysis either relies on strict assumptions \cite{guo2020communication} or achieves suboptimal convergence results \cite{sharma2022federated}. For example, single-machine methods \cite{luo2020stochastic, huang2021efficient} achieve $O(\kappa^{3}\varepsilon^{-3})$ under nonconvex-strongly-concave setting, which is much better than  $O(\kappa^{4}\varepsilon^{-4})$ achieved by the best FL minimax algorithm \cite{sharma2022federated, yang2022sagda} in existing literature. Therefore, a natural question arises: 
\begin{center}
\begin{tcolorbox}
\vspace{-0.05in}
\textbf{Can we design stochastic gradient decent ascent methods with better sample and communication complexities than existing federated minimax algorithms for solving the problem \eqref{eq:1}? }
\vspace{-0.05in}
\end{tcolorbox}
\end{center}

In this paper, we provide an affirmative answer to the aforementioned question and propose a class of algorithms to solve the problem \eqref{eq:1} under different settings. In particular, we consider three most common classes of nonconvex minimax optimization problems: 1) NC-C: NonConvex in $x$, Concave in $y$; 2) NC-SC: NonConvex in $x$, Strongly-Concave in $y$; 3) NC- PL: NonConvex in $x$, PL-condition in $y$. For each of these problems, we propose a new algorithm with provably better convergence rate (please see Table \ref{tb1}) and provide a theoretical analysis. Our main contributions are four-fold:
\begin{itemize}
\item[1)] NC-C setting. We propose FedSGDA+, and prove it has sample complexity of $O(N^{-1}\varepsilon^{-8})$ and communication complexity of $O(\varepsilon^{-6})$. FedSGDA+ takes advantage of the structure of FL and reduces communication complexity to $O(\varepsilon^{-6})$ from $O(\varepsilon^{-7})$ in \cite{sharma2022federated}. It also achieves a linear speedup to the number of worker nodes. 
\item[2)] NC-PL setting. We propose a federated stochastic gradient ascent (FedSGDA-M) algorithm with the momentum-based variance reduction technique. It has the best sample complexity of $O(\kappa^{3}N^{-1}\varepsilon^{-3})$ and the best communication complexity of  $O(\kappa^{2}\varepsilon^{-2})$.
Compared with existing momentum-based variance reduction algorithms, our result employs a novel theoretical analysis framework that produces a tighter convergence rate (i.e., our rate gets rid of a logarithmic term appearing in existing works).

\item[3)] NC-SC setting. FedSGDA-M can be directly applied to the NC-SC setting since the PL condition is weaker than strong-concavity. Our algorithm is the first work to reach sample complexity of $O(\varepsilon^{-3})$ in federated learning. In addition, FedSGDA-M does not rely on a large batch size to reach optimal sample complexity compared with single-machine minimax algorithms \cite{huang2022accelerated, luo2020stochastic}. 

\item[4)] Extensive experimental results on fair classification and AUROC maximization confirm the effectiveness of our proposed algorithm.
\end{itemize}

\section{Related Works}
\subsection{Single-Machine Minimax}

\textbf{Nonconvex-Concave (NC-C) setting}.
\cite{jin2019minmax, rafique1810non, nouiehed2019solving, thekumparampil2019efficient,kong2021accelerated} proposed various deterministic and stochastic algorithms to solve the NC-C minimax problems. All of these algorithms, however, have a double-loop structure and are thus relatively complicated to implement. They decouple the minimax problem into a minimization problem and a maximization problem and use a nested loop to update variable $y$ while keeping variable $x$ constant. Subsequently, \cite{lin2020gradient} studied the complexity result of the single-loop algorithm (SGDA) for the NC-C minimax problem and proves the stochastic algorithm achieves $O(\varepsilon^{-8})$ complexity. SGDA is a direct extension of SGD from minimization optimization to minimax optimization problems. Recently, \cite{mahdavinia2022tight} providing a unified analysis for the
convergence of OGDA and EG methods in the nonconvex-strongly-concave
(NC-SC) and nonconvex-concave (NC-C) settings. 

\textbf{Nonconvex-Strongly-Concave (NC-SC) setting}. \cite{lin2020gradient} analyzed the stochastic gradient descent ascent (SGDA) algorithm and proved that SGDA has $O(\kappa^3 \varepsilon^{-4})$ stochastic gradient complexity. To reduce the convergence rate, \cite{luo2020stochastic} proposed a stochastic GDA algorithm (i.e. SREDA) with a double-loop structure based on the variance reduction technique of SPIDER \cite{fang2018spider} and reduce the complexity to $O(\kappa^3\varepsilon^{-3})$.  \cite{huang2022accelerated} used momentum-based variance reduction technique of STORM \cite{cutkosky2019momentum} and proposed Acc-MDA. Acc-MDA is a single-loop algorithm,  which gets the same convergence result as SREDA. Furthermore, adaptive minimax algorithms are introduced \cite{huang2021efficient, huang2023adagda} to solve the nonsmooth nonconvex-strongly-concave minimax problems based on dynamic mirror functions. \cite{yang2022nest} used a nested adaptive framework to design parameter-agnostic nonconvex minimax algorithm. \cite{zhang2022sapd+} proves that VR-based SAPD+ has the complexity of $O(\kappa^2\varepsilon^{-3})$. However, whether the best convergence result of $O(\varepsilon^{-3})$ in single-machine methods can be achieved in the federated setting is an open question. In addition, \cite{he2021gda} conducts an in-depth investigation of limitations of GDA algorithm (e.g., smaller learning rate, cycling/divergence issue) and gives a systematic analysis of how to improve GDA dynamics.

\begin{table}
\begin{threeparttable}[H]
\centering
\caption{Complexity comparison of existing nonconvex federated minimax algorithms for finding an $\varepsilon$-stationary point.
Sample complexity is the total number of the First-order Oracle (IFO) to reach an $\varepsilon$-stationary point. Communication complexity denotes the total number of back-and-forth communication times between worker nodes and the server.
Here, $N$ is the number of worker nodes, and $\kappa = L_f / \mu$ is the condition number.} \label{tb1}
  \begin{tabular}{c|c|c|c|c}
    \hline
  \textbf{Type}  &  \textbf{Algorithm}  & \textbf{Reference}  & \textbf{Sample} &  \textbf{Communication} \\
  \hline
\multirow{2}{5em}{Nonconvex concave} & Local SGDA+ & \cite{sharma2022federated} & $O\left(N^{-1}\varepsilon^{-8}\right)$ & $O(\varepsilon^{-7})$  \\ 
\cline{2-5} & FedSGDA+ & Ours & $O\left( N^{-1} \varepsilon^{-8}\right)$ & $O\left(\varepsilon^{-6}\right)$ \\\hline
\multirow{4}{5em}{Nonconvex \\Strongly\\Concave} & Local SGDA & \cite{sharma2022federated} & $O\left(\kappa^4 N^{-1}\varepsilon^{-4}\right)$ & $O(\kappa^{3}\varepsilon^{-3})$  \\ 
\cline{2-5} & Momentum Local SGDA  & \cite{sharma2022federated} & $O\left(\kappa^4 N^{-1} \varepsilon^{-4}\right)$ & $O(\kappa^{3}\varepsilon^{-3})$ \\ 
\cline{2-5} & FEDNEST  & \cite{tarzanagh2022fednest} & $O\left(\kappa^{3} \varepsilon^{-4}\right)$\tnote{a} & $O\left(\kappa^{2} \varepsilon^{-4}\right)$ \\ 
\cline{2-5} & FedSGDA & Ours & $O\left(\kappa^3 N^{-1} \varepsilon^{-3}\right)$ & $O\left(\kappa^2\varepsilon^{-2}\right)$ \\\hline
\multirow{4}{5em}{Nonconvex PL} 
& Local SGDA & \cite{sharma2022federated} & $O\left(\kappa^4 N^{-1}\varepsilon^{-4}\right)$ & $O(\kappa^{3}\varepsilon^{-3})$  \\ 
\cline{2-5} & Momentum Local SGDA  & \cite{sharma2022federated} & $O\left(\kappa^4 N^{-1} \varepsilon^{-4}\right)$ & $O(\kappa^{3}\varepsilon^{-3})$ \\ 
\cline{2-5} & SAGDA  & \cite{yang2022sagda} & $O\left(N^{-1} \varepsilon^{-4}\right)$ & $O(\varepsilon^{-2})$ \tnote{b}\\ 
\cline{2-5} & FedSGDA & Ours & $O\left(\kappa^3 N^{-1} \varepsilon^{-3}\right)$ & $O\left(\kappa^2\varepsilon^{-2}\right)$ \\ \hline
  \end{tabular}
 \begin{tablenotes}
       \item [a] Their theoretical analysis does not report the dependency on N.\\
       \item [b] Their theoretical analysis does not report the dependency on $\kappa$.
\end{tablenotes}
\end{threeparttable}
\end{table}

\textbf{Nonconvex-Nonconcave (NC-NC) setting}. 
There is extensive research on NC-NC problems \cite{diakonikolas2021efficient} and the Nonconvex-PL condition is a special class of functions that interests us the most. Polyak-Łojasiewicz (PL) condition does not require the objective to be concave and recent works show that the PL condition could hold in the training process of overparameterized neural networks with random initialization \cite{allen2019convergence, charles2018stability}. Recently, many deterministic methods  \cite{nouiehed2019solving, yang2020global, fiez2021global} are proposed for NC-NC problems under the NC-PL setting. 
\cite{guo2021novel} proposed a PDAda method for Nonconvex-PL
minimax optimization with the restriction of the concavity condition and \cite{chen2022faster} focus on the finite-sum Nonconvex-PL minimax optimization. Stochastic alternating GDA and stochastic smoothed GDA proposed in \cite{yang2022faster} achieve complexity of $O(\kappa^{4}\varepsilon^{-4})$ and $O(\kappa^{2}\varepsilon^{-4})$, respectively.

\subsection{Distributed/Federated Minimax}
Distributed training has rapid development in minimax optimization in recent years, driven by the need to train large-scale datasets \cite{liu2020decentralized}. Under the serverless decentralized setting, algorithms for nonconvex minimax has been studied extensively in nonconvex-strongly-concave setting \cite{beznosikov2021decentralized, xian2021faster, zhang2021taming, liu2023precision, wu2023decentralized} and nonconvex-PL setting \cite{huang2023near}. 

In the FL setting, some works analyzed algorithms for convex-concave problems \cite{deng2021distributionally, liao2021local, hou2021efficient, sun2022communication}. However, as nonconvex models (e.g., deep neural networks) being more and more prevalent, there is a growing need for federated nonconvex minimax optimization, such as federated adversarial training \cite{reisizadeh2020robust}, federated deep AUROC maximization \cite{guo2020communication} and federated GAN \cite{rasouli2020fedgan}. \cite{guo2020communication} and \cite{yuan2021federated} focus on imbalanced data tasks. They reformulated the AUROC maximization problem as the min-max problem under the FL setting. But their analysis relies on strict assumptions that deep models satisfy the PL condition and only focuses on PL-strong-concave minimax. \cite{reisizadeh2020robust} converted the robust federated learning into the minimax problem, where only model parameters, namely min variables, are exchanged among worker nodes via the server. \cite{deng2021local} proposed Local SGDA and Local SGDA+. Local SGDA is the local-update version of the SGDA algorithm in FL.  Different from local SGDA, in local SGDA+, max variable ${y}$ is updated with a constant min variable $\tilde{x}$ and the snapshot $\tilde{x}$ updates every S iteration. 
Afterward, \cite{sharma2022federated} improves sample complexity and communication complexity of Local SGDA for NC-SC and NC-PL settings, and Local SGDA+ for NC-C setting. \cite{sharma2022federated} also propose a Momentum Local SGDA, which achieves the same theoretical results as Local SGDA for NC-PL and NC-SC settings. In addition, \cite{tarzanagh2022fednest} design FEDNEST with two nested loops. Although FEDNEST is composed of FEDINN ( a federated stochastic variance reduction algorithm ), their convergence complexity is not improved over vanilla Local SGDA.  More recently, \cite{yang2022sagda} proposes SAGDA under NC-PL setting, which yields a better communication complexity (i.e., $O(\varepsilon^{-2})$). However, its analysis does not consider the effect of condition number $\kappa$.  

\textbf{Relation to Existing Works}.
We propose FedSGDA+ for NC-C setting and FedSGDA-M for NC-SC and NC-PL settings. In NC-C setting, we discover that the addition of a global step size leads to better communication complexity. Under this circumstance, theoretical analysis is more challenging as we not only need to consider the complicated structure of the minimax problem but also need to handle the local update and global update separately. For FedSGDA-M, 
we relax the requirement of step size (specifically designed unnatural step size is often required in STORM-like approaches \cite{cutkosky2019momentum, khanduri2021stem}), which requires novel proof techniques to obtain. 
Thus our result does not contain a logarithmic term and provides a tighter convergence rate (Seen in contributions 1 in \cite{khanduri2021stem}). In addition, with different theoretical frameworks, our better sample complexity doesn’t rely on a big batch size, while the single-machine minimax algorithm with variance reduction (Acc-MDA) in \cite{huang2022accelerated} (Table 2) needs a large batch to achieve the same sample complexity.

\section{Algorithms and Convergence Analysis}
Notation: $\|\cdot\|$ denotes
the $\ell_2$ norm for vectors.
$a=O(b)$ denotes that $a \leq C b$ for some constant $C>0$.
Given the mini-batch samples $\mathcal{B}=\{\xi_j\}_{j=1}^b$, we let $\nabla f_i(x, y;\mathcal{B})=\frac{1}{b}\sum_{j=1}^b \nabla f_i(x, y;\xi_i)$.  

\begin{assumption} \label{ass:1}
(i) Unbiased Gradient. The gradient of each component function $f_i(x, y;\xi)$ computed at each worker node is unbiased for all $\xi^{(i)} \sim \mathcal{D}_i$, $i \in [N]$:
\begin{align}
& \mathbb{E}[\nabla f_i(x, y;\xi^{(i)})] = \nabla f_i(x, y), \nonumber 
\end{align}

(ii) Variance Bound. The following inequalities hold for all $\xi^{(i)} \sim \mathcal{D}_i$, $i, j \in [N]$: 
\begin{align}
& \mathbb{E}\|\nabla f_i(x, y;\xi^{(i)}) - \nabla f_i(x, y)\|^2 \leq \sigma^2 \nonumber \\
& \frac{1}{N} \sum_{i=1}^{N}\left\|\nabla f_i(x, y) - \nabla F(x, y)\right\|^{2} \leq \zeta^{2}
\end{align}
\end{assumption}
The Assumption \ref{ass:1} is a standard assumption in stochastic optimization, which will be used throughout the rest of the paper. In FL algorithms, the Assumption \ref{ass:1} (ii) is frequently used to bound the variance and data heterogeneity. The heterogeneity parameter, $\zeta$, denotes the level of data heterogeneity. In the homogeneous data configuration, $\zeta = 0$.


\subsection{Nonconvex Concave (NC-C) Problems}
\begin{algorithm}[tb]
\caption{FedSGDA+ Algorithm}
\label{alg:2}
\begin{algorithmic}[1] 
\STATE {\bfseries Input:} $T$, local step sizes $\hat{c}, c$, global step sizes $\eta_x$, $\eta_y$, $k = 0$, numbers of inner updates $Q$ and outer update $S$, and mini-batch size $b$; N clients;\\
\STATE {\bfseries Initialize:} $x^i_{0} = \tilde{x}_0 = \bar{x}_0, y^i_{0} = \bar{y}_0$, 
\FOR{$t = 0, 1, \ldots, T - 1$} 
\FOR{$i = 1, 2, \ldots, N$}
\STATE {\bfseries Local Update:} 
\FOR{$q = 0, 1, \ldots, Q - 1$}
\STATE  Draw mini-batch samples $\mathcal{B}^i_{t,q}=\{\xi_i^j\}_{j=1}^{b}$ with $|\mathcal{B}^i_{t}|=b$ from $D_i$ locally
\\
\STATE $x^{i}_{t, q + 1} = x^{i}_{t, q} - \hat{c} \nabla_x f_i(x^{i}_{t,q}, y^{i}_{t,q}; \mathcal{B}^i_{t,q})$
\STATE $y^{i}_{t,q+1} = y^{i}_{t,q} + c \nabla_y f_i(\tilde{x}_{k}, y^{i}_{t,q}; \mathcal{B}^i_{t,q})$
\ENDFOR\\
\ENDFOR\\
\STATE $x^i_{t + 1, 0} = \bar{x}_{t + 1} = \bar{x}_{t} + \eta_x \frac{1}{N} \sum_{i=1}^{N}(x^{i}_{t, Q} - \bar{x}_{t}) $\\
\STATE $y^i_{t + 1, 0} = \bar{y}_{t + 1} = \bar{y}_{t} + \eta_y \frac{1}{N} \sum_{i=1}^{N}(y^{i}_{t, Q} - \bar{y}_{t}) $\\
\IF {$\mod(t + 1, S) = 0$} 
\STATE $k = k + 1$  
\STATE $\tilde{x}_{k} =  \bar{x}_{t+1} $ \\
\ENDIF
\ENDFOR
\STATE {\bfseries Output:} $x$ and $y$ chosen uniformly random from $\{(\bar{x}_t,\bar{y}_t)\}_{t=1}^{T}$.
\end{algorithmic}
\end{algorithm}
\begin{assumption} (Concavity). \label{ass:4} The nonconvex function $f(\cdot, \cdot)$ is concave in $y$ if for a fixed $x \in \mathbb{R}^{d_1}, \forall y,y^{\prime} \in \mathbb{R}^{d_2}$, we have
\begin{align}
f(x,y) \leq f(x, y^{\prime}) + \langle\nabla f(x, y^{\prime}), y - y^{\prime} \rangle
\end{align}
\end{assumption}
\vspace{-6pt}
To solve the problem \eqref{eq:1} under the NC-C setting and reduce the complexity, we propose FedSGDA+ (Seen in algorithm \ref{alg:2}). Although local SGDA+ \cite{sharma2022federated} achieves the same sample complexity as the single-machine method (SGDA), its communication complexity is not optimal. This unsatisfactory result is due to the fact that local SGDA+ simply extends the single-machine method into the distributed setting, and does not consider the complicated local-server structure in the FL.

In FedSGDA+, lines 5-10 are conducted in the local clients. The updates of variable x are similar to the standard stochastic algorithm for minimization problems, such as FedAvg. We sample data points and update the $x$ locally with the current variable $x^i_{t,q}$ and $y^i_{t,q}$. However, for the $y$-updates, stochastic gradients are calculated with the latest snapshot of $x (i.e., \tilde{x}_k)$ and in each local iteration, $y$ updates with the constant $\tilde{x}_k$ as seen in Line 7 in \cref{alg:2}. The $\tilde{x}_k$ is updated every S rounds ( Line 14-16 in \cref{alg:2}). 

In addition, we make use of the advantage of FL and introduce the global step size $\eta_x, \eta_y$, which provides the flexibility of FL training (Seen in lines 12-13). 
We now provide the convergence analysis of FedSGDA+ and introduce the necessary assumptions.
The details of the proofs are provided in the supplementary materials.
\begin{assumption} \label{ass:5} (Smoothness). 
Each local function $f_i(x, y)$ has a $L_f$-Lipschitz continuous gradients, i.e.,
$\forall x_1, x_2$ and $y_1, y_2$, 
we have
\begin{align}
& \|\nabla f_i(x_1, y_1) - \nabla f_i(x_2, y_2)\|\leq L_f \|(x_1, y_1) - (x_2, y_2)\| 
\end{align}
\end{assumption}
The Assumption \ref{ass:5} on the smoothness is a standard assumption in stochastic optimization \cite{bao2022accelerated, gu2023new}.
\begin{assumption} \label{ass:6} (Lipschitz continuity in x).  For the function $F$, there exists a constant $G_x$ , such that for each $y \in \mathbb{R}^{d_2}$ , and $\forall x,x^{\prime} \in \mathbb{R}^{d_1}$, we have 
\begin{align}
\| F(x,y) - F(x^{\prime},y) \| \leq G_x \|x - x^{\prime}\| \nonumber
\end{align}

\end{assumption}
Under the NC-C setting, the function $F(\cdot, \cdot)$ is concave in y.
Following \cite{davis2019stochastic}, we define $\Phi(x) = \max_{y} F (x, y)$ and the Moreau envelope of $\Phi(\cdot)$ is defined below:
\begin{definition} (Moreau Envelope) A function $\Phi_{\lambda}(\cdot)$ is the $\lambda$-Moreau envelope of $\Phi(\cdot)$, for $\lambda > 0$, if $\forall x \in R^{d_1}$,
\begin{align}
\Phi_{\lambda}(x) = \min_{z} \Phi(z) + \frac{1}{2 \lambda}\|z - x\|^2 \nonumber
\end{align}
\end{definition}
From \cite{lin2020gradient}, we know that when we have a point $x$ that is an $\varepsilon$-stationary point of $\Phi_{\lambda}(x)$, then $x$ is close to a point $x^{\prime}$ which is stationary for $\Phi(x)$. Hence, we focus on minimizing $\|\nabla\Phi_\lambda(x)\|$ under the NC-C setting.
\begin{theorem} \label{thm:2} Suppose Assumptions \ref{ass:1}, \ref{ass:4}, \ref{ass:5}, \ref{ass:6} holds and the sequences $\{x_t, y_t\}$ are generated by Algorithm \ref{alg:2}, $\max \{c \eta_y, c\} \leq \frac{1}{10 Q L_f}$  and let $\|\bar{y}_t\|^2 \leq D$ following \cite{deng2021local, sharma2022federated}, 
\begin{align}
&\frac{1}{T} \sum_{t=0}^{T - 1} \mathbb{E}\left\|\nabla \Phi_{1 / 2 L_f}\left(\bar{x}_t\right)\right\|^2 \leq 8 L_f \hat{c} \eta_x (Q G_x^2 + \frac{ \sigma^2}{N})  + 8 \frac{\mathbb{E}\left[\Phi_{1 / 2 L_f}\left(x_0\right)\right] - \mathbb{E}\left[\Phi_{1 / 2 L_f}\left(\bar{x}_{T }\right)\right] }{Q \hat{c} \eta_x T}   \nonumber\\ 
+& 48 L_f^2 Q[ \hat{c}^2 + c^2] (\sigma^2 + 6 Q \zeta^2) + 288 L_f^2 Q^2 \hat{c}^2 G_x^2 \nonumber \\
+&576 L^3_f Q^2 c^2 [\frac{D}{c \eta_y Q S}  +  \frac{c \eta_y \sigma^2}{N} + 6 L_f Q^2 c^2 (\sigma^2 + 6 \zeta^2)  ]\nonumber\\
+& 32 L_f  G_x \eta_x \hat{c} S Q \sqrt{G_x^2 + \frac{\sigma^2}{N}} + \frac{16 L_f D}{c \eta_y Q S}  +  \frac{16 L_f (c \eta_y) \sigma^2}{N} + 96 L^2_f Q^2 c^2 (\sigma^2 + 6 \zeta^2)]  \nonumber
\end{align}
\end{theorem}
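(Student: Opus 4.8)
The plan is to follow the standard Moreau-envelope analysis for nonconvex-concave minimax (as in \cite{lin2020gradient, davis2019stochastic}) but carefully track the extra terms introduced by (a) local updates with period $Q$, (b) the global step sizes $\eta_x,\eta_y$, and (c) the stale snapshot $\tilde x_k$ refreshed every $S$ rounds. First I would introduce the proximal point $\hat x_t = \arg\min_z \Phi(z)+L_f\|z-\bar x_t\|^2$ associated with the $1/2L_f$-Moreau envelope, and recall the key identities $\nabla\Phi_{1/2L_f}(\bar x_t) = 2L_f(\bar x_t-\hat x_t)$ and $\Phi_{1/2L_f}(\bar x_t)=\Phi(\hat x_t)+L_f\|\bar x_t-\hat x_t\|^2$. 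I would then write the one-step descent inequality for $\Phi_{1/2L_f}$ along the server update $\bar x_{t+1}=\bar x_t+\eta_x\frac1N\sum_i(x^i_{t,Q}-\bar x_t)$, using the $2L_f$-smoothness of $\Phi_{1/2L_f}$ (which holds because $\Phi$ is $L_f$-weakly convex). This produces a leading negative term proportional to $\langle \nabla\Phi_{1/2L_f}(\bar x_t), \text{(aggregated drift)}\rangle$ plus a quadratic error term from smoothness.

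Next I would decompose the aggregated drift $\frac1N\sum_i(x^i_{t,Q}-\bar x_t)$ into the ideal gradient direction $-Q\hat c\,\nabla_x F(\bar x_t,\bar y_t)$ plus error terms, and bound each error: (i) the \textbf{client drift} error $\mathbb E\|x^i_{t,q}-\bar x_t\|^2$, controlled by the standard FL argument using $\max\{c\eta_y,c\}\le \tfrac1{10QL_f}$ and the Lipschitz bound $G_x$ on $F$ in $x$ — this yields the $288 L_f^2 Q^2\hat c^2 G_x^2$ and related terms; (ii) the \textbf{stochastic noise} error, giving the $\sigma^2/N$ terms with linear speedup from mini-batch averaging across $N$ clients; (iii) the \textbf{heterogeneity} error, giving the $\zeta^2$ terms. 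The cross term $\langle \nabla\Phi_{1/2L_f}(\bar x_t), -Q\hat c\eta_x\nabla_x F(\bar x_t,\bar y_t)\rangle$ must be related back to $\|\nabla\Phi_{1/2L_f}(\bar x_t)\|^2$; here one uses the weak-convexity/Moreau machinery to show $\langle \nabla\Phi_{1/2L_f}(\bar x_t), \nabla_x F(\bar x_t,\bar y_t)\rangle \ge \tfrac14\|\nabla\Phi_{1/2L_f}(\bar x_t)\|^2 - L_f^2(\Phi(\bar x_t)-F(\bar x_t,\bar y_t))$, so a primal-dual gap term $\Phi(\bar x_t)-F(\bar x_t,\bar y_t)$ appears and must be carried along.

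Then I would handle that gap term by bounding how well $\bar y_t$ tracks $\arg\max_y F(\tilde x_k,y)$: since line 9 runs (local, averaged) gradient ascent on the \emph{concave} function $F(\tilde x_k,\cdot)$ with the frozen snapshot, a concavity-based argument (using Assumption~\ref{ass:4} and the bound $\|\bar y_t\|^2\le D$) gives $\frac1{S}\sum \big(\Phi(\tilde x_k)-F(\tilde x_k,\bar y_t)\big)\lesssim \frac{D}{c\eta_y QS}+\frac{c\eta_y\sigma^2}{N}+L_f Q^2 c^2(\sigma^2+\zeta^2)$, which is the source of the $D/(c\eta_y QS)$ and $c\eta_y\sigma^2/N$ terms. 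One also needs $\Phi(\bar x_t)-\Phi(\tilde x_k)\lesssim G_x\|\bar x_t-\tilde x_k\|$ and $\|\bar x_t-\tilde x_k\|\lesssim SQ\hat c\eta_x\sqrt{G_x^2+\sigma^2/N}$ (drift of the averaged iterate over at most $S$ rounds), producing the $32L_f G_x\eta_x\hat c SQ\sqrt{G_x^2+\sigma^2/N}$ term. Finally I would telescope the one-step descent inequality over $t=0,\dots,T-1$, divide by $Q\hat c\eta_x T$, and collect terms to obtain the stated bound.

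The main obstacle I expect is coupling the three different "clocks" — the local counter $q\in[0,Q)$, the communication round $t$, and the snapshot index $k$ tied to $S$ — while keeping the stale-snapshot gap $\Phi(\bar x_t)-F(\bar x_t,\bar y_t)$ under control: the $y$-iterate is updated with respect to $\tilde x_k$, not $\bar x_t$, so one must simultaneously bound (a) the optimization error of ascent on $F(\tilde x_k,\cdot)$, (b) the drift $\|\bar x_t - \tilde x_k\|$ accumulated since the last refresh, and (c) the client drift of both $x$- and $y$-iterates within a round, and then balance these against the negative descent term without the global step sizes $\eta_x,\eta_y$ causing the quadratic smoothness error to dominate. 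Getting the constants and the interplay with the step-size constraint $\max\{c\eta_y,c\}\le \tfrac1{10QL_f}$ exactly right is the delicate part; everything else is bookkeeping with Young's inequality and the variance bounds from Assumption~\ref{ass:1}.
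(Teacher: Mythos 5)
Your proposal follows essentially the same route as the paper's proof: a one-step Moreau-envelope descent built around the proximal point $\hat{x}_t$ and the identity $\nabla \Phi_{1/2L_f}(\bar{x}_t) = 2L_f(\bar{x}_t - \hat{x}_t)$ with the cross term converted into $-\tfrac14\|\nabla\Phi_{1/2L_f}(\bar{x}_t)\|^2$ plus the gap $\Phi(\bar{x}_t)-F(\bar{x}_t,\bar{y}_t)$ (Lemma \ref{lem:C1}), a client-drift recursion under $\max\{c\eta_y,c\}\le \tfrac{1}{10QL_f}$ (Lemma \ref{lem:C2}), and control of the gap by splitting off the snapshot drift $\|\bar{x}_t-\tilde{x}_k\|$ and running a concave-ascent argument on $F(\tilde{x}_k,\cdot)$ with $\|\bar{y}_t\|^2\le D$ (Lemma \ref{lem:C3}), followed by telescoping. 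The only cosmetic difference is that you set up the descent step via smoothness of the Moreau envelope, whereas the paper uses the minimizing property $\Phi_{1/2L_f}(\bar{x}_{t+1})\le \Phi(\hat{x}_t)+L_f\|\hat{x}_t-\bar{x}_{t+1}\|^2$ directly; both lead to the same bound.
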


\begin{corollary} \label{coro:2}
By setting
$c = \hat{c} = \frac{1}{10 L_f Q T^{1/3}}$, $Q = \frac{T^{1/3}}{N}$, $\hat{c} \eta_x =
\frac{N}{10 L_f T}$, $c \eta_y 
= \frac{1}{10 L_f Q} = \frac{N}{10 L_f T^{1/3}}$, $S 
= T^{1/3}$ , FedSGDA+ has the following convergence rate:
\begin{align}
&\frac{1}{T} \sum_{t=0}^{T - 1} \mathbb{E}\left\|\nabla \Phi_{1 / 2 L_f}\left(\bar{x}_t\right)\right\|^2   
\leq \frac{80 L_f \Delta}{T^{1/3}} + \frac{4 (G_x^2 + \sigma^2)}{5 T^{2/3}} + \frac{24 (\sigma^2 + 6 \zeta^2) }{25 T^{2/3}} + \frac{72 G_x^2}{25 T^{2/3}} +  \frac{24 (\sigma^2 + 6 \zeta^2)}{25 T^{2/3}}\nonumber\\
&+ \frac{144 L_f}{25 T^{2/3}} [\frac{10 L_f D}{T^{1/3}}  +  \frac{\sigma^2}{10 L_f T^{1/3}} +  \frac{3 (\sigma^2 + 6 \zeta^2)}{50 L_f T^{2/3}}] + \frac{16 G_x}{5 T^{1/3}} \sqrt{G_x^2 + \frac{\sigma^2}{N}} + \frac{160 L^2_f D}{T^{1/3}}  +  \frac{8 \sigma^2}{5 T^{1/3}} \nonumber
\end{align}
\end{corollary}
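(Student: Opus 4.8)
The plan is to treat \cref{coro:2} as a direct specialization of \cref{thm:2}: substitute the prescribed step sizes and loop lengths into each of the summands of the theorem and check that they collapse to the claimed $T^{-1/3}$ and $T^{-2/3}$ rates. Before substituting, I would first confirm that the choices are \emph{admissible}, i.e. that the hypothesis $\max\{c\eta_y, c\} \le \frac{1}{10 Q L_f}$ of \cref{thm:2} holds. With $c = \frac{1}{10 L_f Q T^{1/3}}$ we have $c \le \frac{1}{10 L_f Q}$ for $T \ge 1$, and $c\eta_y = \frac{1}{10 L_f Q}$ by construction, so the condition is satisfied (with equality in the $c\eta_y$ slot). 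Hence \cref{thm:2} applies verbatim and the remainder is algebraic.

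Next I would precompute the composite factors that recur throughout the bound, so that each term reduces to a single monomial in $T$. The key identities are
\begin{align}
\hat{c}\eta_x &= \frac{N}{10 L_f T}, & Q\hat{c}\eta_x T &= \frac{T^{1/3}}{10 L_f}, & \hat{c}^2 = c^2 &= \frac{1}{100 L_f^2 Q^2 T^{2/3}}, \nonumber\\
Q^2 c^2 &= \frac{1}{100 L_f^2 T^{2/3}}, & c\eta_y Q S &= \frac{T^{1/3}}{10 L_f}, & \eta_x \hat{c} S Q &= \frac{1}{10 L_f T^{1/3}}, \nonumber
\end{align}
each following from $Q = T^{1/3}/N$, $S = T^{1/3}$, and the two forms $c\eta_y = \frac{1}{10 L_f Q} = \frac{N}{10 L_f T^{1/3}}$. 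With these, the term-by-term matching is mechanical: the gap term $8(\Phi_{1/2L_f}(x_0)-\Phi_{1/2L_f}(\bar x_T))/(Q\hat c\eta_x T)$ becomes $80 L_f \Delta / T^{1/3}$; the term $8 L_f \hat c \eta_x (Q G_x^2 + \sigma^2/N)$ becomes $\frac{4}{5}(G_x^2 T^{-2/3} + \sigma^2 T^{-1})$; the term $288 L_f^2 Q^2 \hat c^2 G_x^2$ becomes $\frac{72}{25} G_x^2 T^{-2/3}$; the whole $576 L_f^3 Q^2 c^2[\cdots]$ block contracts to the bracketed $\frac{144 L_f}{25 T^{2/3}}[\cdots]$ expression; and the final line reproduces the $\frac{16 G_x}{5}T^{-1/3}\sqrt{G_x^2 + \sigma^2/N}$, $160 L_f^2 D\, T^{-1/3}$, and $\frac{8\sigma^2}{5}T^{-1/3}$ pieces together with the two $\frac{24}{25}(\sigma^2+6\zeta^2)T^{-2/3}$ contributions.

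Two elementary reductions bridge the exact substituted expressions and the stated bound. First, since $T \ge 1$ we use $T^{-1} \le T^{-2/3}$ to fold the $\sigma^2 T^{-1}$ piece of the first term into the $T^{-2/3}$ scale. Second, the variance term $48 L_f^2 Q(\hat c^2 + c^2)(\sigma^2 + 6 Q\zeta^2)$ produces a $\frac{24\sigma^2}{25 Q T^{2/3}}$ summand, and to reach the displayed $\frac{24(\sigma^2+6\zeta^2)}{25 T^{2/3}}$ one invokes $Q \ge 1$; this is exactly the linear-speedup regime $T \ge N^3$ implied by $Q = T^{1/3}/N \ge 1$, which I would state explicitly as the operating regime of the corollary.

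The computation has no genuine obstacle — it is bookkeeping — so the only care required is in (i) tracking how the worker count $N$ threads through $Q = T^{1/3}/N$, so that the advertised $1/N$ dependence is preserved where claimed and cancels out of the leading $T^{-1/3}$ terms, and (ii) keeping the numerical constants consistent (e.g. $\frac{48\cdot 6}{50} = \frac{144}{25}$, $\frac{288}{100} = \frac{72}{25}$, $\frac{576}{100} = \frac{144}{25}$) when collapsing each coefficient. Reading the resulting bound, the dominant contribution is $80 L_f \Delta/T^{1/3}$, so forcing the left-hand side below $\varepsilon^2$ requires $T = O(\varepsilon^{-6})$ communication rounds and, via $Q = T^{1/3}/N$ local steps per round, the $O(N^{-1}\varepsilon^{-8})$ sample complexity quoted in the introduction.
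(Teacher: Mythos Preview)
Your proposal is correct and follows essentially the same approach as the paper: both obtain the corollary by direct substitution of the prescribed parameters into \cref{thm:2} and term-by-term simplification. If anything, your treatment is more careful than the paper's (which simply states the parameters and the resulting bound) in that you explicitly verify the admissibility condition $\max\{c\eta_y,c\}\le\frac{1}{10QL_f}$ and flag the two coarsenings $T^{-1}\le T^{-2/3}$ and $Q\ge 1$ needed to match the displayed constants.
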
 
\begin{remark}
(Complexity) Based on \cref{coro:2}, to make $\frac{1}{T} \sum_{t=0}^{T - 1} \mathbb{E}\left\|\nabla \Phi_{1 / 2 L_f}\left(\bar{x}_t\right)\right\|^2  \leq \varepsilon^2$, communication complexity $T = O(\varepsilon^{-6})$. We choose $b = O(1)$, then we have sample complexity  $b Q T = N^{-1}\varepsilon^{-8}$. It also denotes that FedSGDA+ has linear speedup with respect to the number of worker nodes.
\end{remark}

\subsection{Nonconvex-PL (NC-PL) Problems}
\begin{algorithm}[tb]
\caption{FedSGDA-M Algorithm}
\label{alg:1}
\begin{algorithmic}[1] 
\STATE {\bfseries Input:} $T$, step sizes $\hat{c}, c, \eta$; momentum coefficient $\alpha, \beta$, the number of local updates $Q$, and mini-batch size $b$ and initial mini-batch size $B$; \\
\STATE {\bfseries Initialize:} $x^i_{0} = \bar{x}_0 = \frac{1}{N} \sum_{i=1}^{N} x^i_{0}, y^i_{0} = \bar{y}_0 = \frac{1}{N} \sum_{i=1}^{N} y^i_{0}$, 
$u^i_{1} = \nabla_x f(x^i_{0}, y^i_{0};\mathcal{B}^i_{0})$ and $v^i_{1} = \nabla_y f(x^i_{0}, y^i_{0};\mathcal{B}^i_{0})$
where $|\mathcal{B}^i_{0}| = B$ are drawn from $D_i$ for $i \in [N]$.  \\
\FOR{$t = 1, 2, \ldots, T$}
\FOR{$i = 1, 2, \ldots, N$}
\IF {$\mod(t, Q)=0$}
\STATE {\bfseries Sever Update:} 

\STATE $u^i_{t} = \bar{u}_{t}= \frac{1}{N} \sum_{j=1}^{N} u^j_{t}$ 
\STATE $v^i_{t} = \bar{v}_{t} = \frac{1}{N} \sum_{j=1}^{N} v^j_{t}$
\STATE $x^i_{t} =  \bar{x}_t = \frac{1}{N} \sum_{j=1}^{N}(x^j_{t-1} - \hat{c} \eta u^j_{t}) $\\
\STATE $y^i_{t} = \bar{y}_t = \frac{1}{N} \sum_{j=1}^{N}(y^j_{t - 1} + c\eta v^j_{t}) $\\
\ELSE
\STATE $x^i_{t} = x^i_{t-1} - \hat{c} \eta u^i_{t} $
\STATE $y^i_{t} = y^i_{t-1} + c \eta  v^i_{t} $
\ENDIF
\STATE  Draw mini-batch samples $\mathcal{B}^i_{t}=\{\xi_i^j\}_{j=1}^{b}$ with $|\mathcal{B}^i_{t}|=b$ from $D_i$ locally
\\
\STATE
$u^i_{t+1} = \nabla_x f_i(x^i_{t}, y^i_{t}; \mathcal{B}^i_{t}) + (1-\alpha)(u^i_{t} -  \nabla_x f_i(x^i_{t-1}, y^i_{t-1}; \mathcal{B}^i_{t}))$\\
\STATE
$v^i_{t+1} = \nabla_y f_i(x^i_{t}, y^i_{t}; \mathcal{B}^i_{t}) + (1-\beta)(v^i_{t} -  \nabla_y f_i(x^i_{t-1}, y^i_{t-1}; \mathcal{B}^i_{t})) $
\ENDFOR
\ENDFOR
\STATE {\bfseries Output:} $x$ and $y$ chosen uniformly random from $\{\bar{x}_t, \bar{y}_t\}_{t=1}^{T}$.
\end{algorithmic}
\end{algorithm}

\begin{assumption} \label{ass:2} (Polyak Łojasiewicz (PL) Condition in y). The function $F(x, y)$ is $\mu$-PL condition in $y (\mu > 0)$, if  $\forall x$: 1) $y^{*}(x) = \arg \max_{y} F(x,y)$ has a nonempty solution set; 2) $\|\nabla_y F(x,y) \|^2 \geq 2\mu(F(x, y^{*}(x)) - F(x,y)), \forall y$.
\end{assumption}
To solve problem \eqref{eq:1} with better convergence complexity under nonconvex-PL, we propose federated stochastic gradient ascent (FedSGDA-M) algorithm with the momentum-based variance reduction technique (Seen in in algorithm \ref{alg:1}). 

In FedSGDA-M, each worker node initializes the gradient estimators $\{u^i_1, v^i_1\}$ with stochastic gradient as seen in line 2 of Algorithm \ref{alg:1}. Following that, each worker node updates the model variables $\{x^i_t, y^i_t\}$ locally as standard stochastic gradient descent ascent method (lines 12-13 of Algorithm \ref{alg:1}). Compared with local momentum SGDA \cite{sharma2022federated}, the key difference is that worker nodes utilize variance reduction gradient estimators $\{u^i_t, v^i_t\}$, which are constructed in lines 15-17 of Algorithm \ref{alg:1}. For the update step of $\{u_t, v_t\}$, the coefficients should satisfy $0 < \alpha < 1$ and $0 < \beta < 1$. Every $Q$ iteration, worker nodes transmit model parameters and gradient estimators to the server, which computes $\{\bar{x}_t, \bar{y}_t, \bar{u}_t, \bar{v}_t\}$. Then the server sends averaged model variables and gradient estimators to each worker node to update the local variables, as shown in lines 5-10 of Algorithm \ref{alg:1}.

\begin{definition}
According to the Assumption \ref{ass:2}, there exists at least one solution to the problem $\max_{y} F(x, y)$ for any $x$. Here we define $\Phi(x) = F(x, y^*(x)) = \max_{y} F(x, y)$. We use $\varepsilon$-stationary point of $\Phi(x)$, i.e. $\|\nabla \Phi(x) \| \leq \varepsilon$ as the convergence metric. 
\end{definition}
We know $\Phi(x)$ is differentiable and $(L + \kappa L)$-smooth and $y^{*}(\cdot)$ is $\kappa$-Lipschitz from \cite{sharma2022federated}. Given that $\nabla_y F\left(\bar{x}_t, y^{*}(x_t)\right) = 0$, we have $\nabla \Phi\left(\bar{x}_t\right)=\nabla_x F\left(\bar{x}_t, y^{*}(x_t)\right) + \nabla_y F\left(\bar{x}_t, y^{*}(x_t)\right) \cdot \partial y^*\left(\bar{x}_t\right) 
= \nabla_x F\left(\bar{x}_t, y^{*}(x_t)\right)$
which is widely used in the analysis of nonconvex-PL \cite{deng2021local} and nonconvex-strongly-concave minimax optimization \cite{xian2021faster,lin2020gradient}. Then we discuss the convergence analysis of FedSGDA-M. The proofs are provided in the supplementary materials. 

\begin{assumption} \label{ass:3} (Lipschitz Smoothness)  
Each component function $f_i (x, y;\xi)$ has a $L_f$-Lipschitz gradient, i.e., $\forall x_1, x_2$ and $y_1, y_2$, we have
\begin{align}
\mathbb{E}\|\nabla f_i(x_1, y_1;\xi) - \nabla f_i(x_2, y_2;\xi)\|\leq L_f \|(x_1, y_1) - (x_2, y_2)\| 
\end{align}
\end{assumption}
$F(x , y)$ has an $L_f$-Lipschitz gradient based on the convexity of norm and Assumption \ref{ass:2}. We have 
\begin{align}
 \|\nabla_x F(x_1, y_1) - \nabla_x F(x_2, y_2)\|  &= \left\|\frac{1}{N} \sum_{i=1}^{N} \mathbb{E}\big[\nabla_x f_i(x_1, y_1; \xi) - \nabla_x f_i(x_2, y_2;\xi)]\right\|  \nonumber\\
 & \leq \frac{1}{N} \sum_{i=1}^{N}\mathbb{E} \|\nabla_x f_i(x_1, y_1;\xi) - \nabla_x f_i(x_2, y_2;\xi)\| \nonumber\\
 & \leq L_f \|(x_1, y_1) - (x_2, y_2)\| \nonumber
\end{align}

In optimization analysis, it is standard to use the Assumption \ref{ass:3}. Several widely used single-machine stochastic algorithms, such as SPIDER \cite{fang2018spider} and STORM \cite{cutkosky2019momentum}, make use of this assumption. It is also used by numerous FL algorithms, including MIME \cite{karimireddy2020mime} Fed-GLOMO \cite{das2020improved}, and STEM \cite{khanduri2021stem}.

\begin{theorem} \label{th:1}
Suppose that sequence $\{\bar{x}_t, \bar{y}_t\}_{t=0}^T$ is generated from Algorithm \ref{alg:1}. Under the above Assumptions (\ref{ass:1},\ref{ass:2},\ref{ass:3}), given $\eta = \frac{1}{20 Q L}$, $\alpha = c_1 \eta^2, \beta = c_2 \eta^2, c_1 = \frac{30 L^2}{b N \kappa^{1 - \nu}}, c_2 = \frac{30 L^2}{b N \kappa^{2 - 2 \nu}}, c = \frac{1}{6 \kappa^{1 - \nu}}, \hat{c} = \frac{1}{54 \kappa^{3 - \nu}}$ where $\nu \in [0, 1]$ we have
\begin{align}
&\frac{1}{T} \sum_{t=0}^{T-1}\mathbb{E} 
\left\|\nabla \Phi\left(\bar{x}_{t}\right)\right\|^2 
\leq \frac{2[\Phi(\bar{x}_{0})  - \Phi(\bar{x}_{T})]}{\hat{c} \eta T} + \frac{3 \sigma^2}{\alpha T B N} + \frac{36 L_f^2 \sigma^2}{\mu^2 \beta T BN}  + \frac{12 L_f^2}{c\eta \mu^2 T} [\Phi(\bar{x}_0) - F(\bar{x}_0, \bar{y}_t)] \nonumber\\
&+ \frac{6 \alpha \sigma^2 }{Nb} + \frac{72 \beta \sigma^2 L_f^2 }{Nb \mu^2}  + \left[\frac{ \sigma^{2} (c_1^{2} + c_2^{2})}{30 b L^{2}} + \frac{\zeta^{2} (c_1^{2} + c_2^{2})}{12 L^{2}} \right] \kappa^{2} \eta^{2} \nonumber
\end{align}
\end{theorem}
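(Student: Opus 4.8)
The plan is to track a Lyapunov function that combines the potential $\Phi(\bar{x}_t)$, the primal–dual gap $\delta_t := \Phi(\bar{x}_t) - F(\bar{x}_t,\bar{y}_t) \ge 0$, the squared errors of the variance‑reduced estimators, and the client‑drift (consensus) errors, and to show it admits an approximate one–step decrease. A useful preliminary observation is that, although the iterates are only synchronized every $Q$ rounds, the \emph{virtual} averaged sequences obey the clean recursions $\bar{x}_t = \bar{x}_{t-1} - \hat{c}\eta\,\bar{u}_t$ and $\bar{y}_t = \bar{y}_{t-1} + c\eta\,\bar{v}_t$ for every $t$ (the averaging of local steps and the server step agree), and $\bar{u}_{t+1}$, $\bar{v}_{t+1}$ satisfy STORM‑type identities with the ``target'' gradient $\tfrac1N\sum_i\nabla f_i(x^i_t,y^i_t)$ in place of $\nabla F(\bar{x}_t,\bar{y}_t)$. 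First I would write the descent inequality for $\Phi$: since $\Phi$ is $(L+\kappa L)$‑smooth and $\nabla\Phi(\bar{x}_t)=\nabla_x F(\bar{x}_t,y^*(\bar{x}_t))$, applying smoothness along $\bar{x}_{t+1}=\bar{x}_t-\hat{c}\eta\bar{u}_t$ and Young's inequality gives, schematically,
\begin{align}
\mathbb{E}\Phi(\bar{x}_{t+1}) \le \mathbb{E}\Phi(\bar{x}_t) - \tfrac{\hat{c}\eta}{2}\,\mathbb{E}\|\nabla\Phi(\bar{x}_t)\|^2 + \tfrac{\hat{c}\eta}{2}\,\mathbb{E}\|\bar{u}_{t+1}-\nabla_x F(\bar{x}_t,\bar{y}_t)\|^2 + \hat{c}\eta L_f^2\,\mathbb{E}\|\bar{y}_t - y^*(\bar{x}_t)\|^2 + (\text{drift}), \nonumber
\end{align}
and I would bound $\|\bar{y}_t-y^*(\bar{x}_t)\|^2 \le \tfrac{2}{\mu}\delta_t$ via the quadratic‑growth consequence of the PL condition (Assumption \ref{ass:2}), replacing the missing strong concavity.

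Second, I would derive a contraction recursion for $\delta_t$. Writing $\delta_{t+1}-\delta_t = [\Phi(\bar{x}_{t+1})-\Phi(\bar{x}_t)] - [F(\bar{x}_{t+1},\bar{y}_{t+1}) - F(\bar{x}_t,\bar{y}_t)]$, I would reuse the $\Phi$ bound above and lower–bound the increment in $F$ using $L_f$‑smoothness and the ascent step $\bar{y}_{t+1}=\bar{y}_t+c\eta\bar{v}_t$; the PL inequality $\|\nabla_y F(\bar{x}_t,\bar{y}_t)\|^2\ge 2\mu\delta_t$ yields $\mathbb{E}\delta_{t+1}\le(1-c\eta\mu)\mathbb{E}\delta_t + O(c\eta)\,\mathbb{E}\|\bar{v}_{t+1}-\nabla_y F(\bar{x}_t,\bar{y}_t)\|^2 + O(\tfrac{\hat{c}}{c})\,\mathbb{E}\|\nabla\Phi(\bar{x}_t)\|^2 + (\text{drift})$. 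The cross term of order $\tfrac{\hat{c}}{c}\|\nabla\Phi\|^2$ coming from the $x$‑movement must be absorbed by the descent term, which is precisely why $\hat{c}=\tfrac{1}{54\kappa^{3-\nu}}$ is taken much smaller than $c=\tfrac{1}{6\kappa^{1-\nu}}$.

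Third, I would close recursions for the remaining quantities. For the estimator errors $\mathcal{E}^u_{t+1}:=\mathbb{E}\|\bar{u}_{t+1}-\tfrac1N\sum_i\nabla_x f_i(x^i_t,y^i_t)\|^2$ (and $\mathcal{E}^v$ analogously), the martingale‑difference structure of the STORM update gives $\mathcal{E}^u_{t+1}\le(1-\alpha)^2\mathcal{E}^u_t + \tfrac{2\alpha^2\sigma^2}{Nb} + \tfrac{2(1-\alpha)^2 L_f^2}{Nb}\,\mathbb{E}[\|x^i_t-x^i_{t-1}\|^2+\|y^i_t-y^i_{t-1}\|^2]$, where the increments equal $\hat{c}^2\eta^2\|u^i_t\|^2+c^2\eta^2\|v^i_t\|^2$ and are fed back, via $\eta=\tfrac1{20QL}$, into the descent/contraction terms and the errors themselves; summing over $t$ with $\alpha=c_1\eta^2$, $\beta=c_2\eta^2$ telescopes these, and the $B$‑sized initialization produces the $\tfrac{\sigma^2}{\alpha TBN}$ and $\tfrac{L_f^2\sigma^2}{\mu^2\beta TBN}$ terms. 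In parallel I would bound the consensus errors $\Xi^x_t,\Xi^y_t,\Xi^u_t,\Xi^v_t$: between communications the local iterates drift over at most $Q$ steps, so $\Xi^x_t\lesssim Q\hat{c}^2\eta^2\sum_{\text{round}}\tfrac1N\sum_i\|u^i_s\|^2$ (similarly $y,u,v$), and $Q^2\eta^2=\tfrac1{400L^2}$ keeps these bounded; the data‑heterogeneity $\zeta^2$ enters only here, which is why it appears only in the final bracket multiplied by $(c_1^2+c_2^2)\kappa^2\eta^2$. The estimator‑consensus terms $\Xi^u_t,\Xi^v_t$ obey their own STORM‑type recursions that I would solve simultaneously with the rest.

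Finally, I would assemble a Lyapunov function of the form $\mathcal{L}_t = \mathbb{E}\Phi(\bar{x}_t) + \lambda_1\mathbb{E}\delta_t + \lambda_2\mathcal{E}^u_t + \lambda_3\mathcal{E}^v_t + \sum_{j}\mu_j\Xi^{\bullet}_t$, choosing the constants (guided by the contraction rates $c\eta\mu$ for $\delta_t$, $\alpha,\beta$ for the estimator errors, and the $Q$‑periodic reset for consensus) so that $\mathcal{L}_{t+1}\le\mathcal{L}_t-\tfrac{\hat{c}\eta}{4}\mathbb{E}\|\nabla\Phi(\bar{x}_t)\|^2 + (\text{residual})$; telescoping from $0$ to $T-1$, dividing by $\tfrac{\hat{c}\eta}{4}T$, and substituting $\eta=\tfrac1{20QL}$, $c,\hat{c},c_1,c_2$ yields the stated inequality (and the corollary's choice of $Q,b,B,T$ then gives the $\kappa^3 N^{-1}\varepsilon^{-3}$ sample and $\kappa^2\varepsilon^{-2}$ communication complexities). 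The hard part will be closing this coupled seven‑term system with a single Lyapunov function whose coefficients make every cross term absorbable while keeping the step sizes \emph{natural} — i.e. $\eta=\Theta(1/(QL))$ rather than a STORM‑style $t^{-1/3}$ schedule — since this is what removes the spurious logarithmic factor: one must show the estimator‑error recursions contract geometrically at rate $\alpha=\Theta(\eta^2)$ with their increments (which themselves contain $\|\nabla\Phi\|^2$ and the errors) dominated by the descent and PL‑contraction terms, and one must balance the $\kappa$‑powers across all seven quantities so nothing blows up; that delicate bookkeeping is the main obstacle.
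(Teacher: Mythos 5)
Your plan matches the paper's proof essentially step for step: the smoothness descent for $\Phi$ with the quadratic-growth bound $\|\bar y_t-y^*(\bar x_t)\|^2\le\tfrac{2}{\mu}\delta_t$ (Lemma \ref{lem:5}), the $(1-c\eta\mu/2)$ contraction for the gap $\delta_t$ with the $\hat c\le c/O(\kappa^2)$ requirement to absorb the cross term (Lemma \ref{lem:6}), the STORM recursions for the averaged estimator errors with the $B$-batch initialization producing the $\sigma^2/(\alpha TBN)$ terms (Lemma \ref{lem:7}), the $Q$-step drift bounds with $Q\eta=\Theta(1/L)$ carrying the $\zeta^2$ dependence (Lemma \ref{lem:A8}), and a Lyapunov function combining $\Phi$, $\delta_t$, and the two estimator errors that telescopes to the stated bound. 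The only cosmetic difference is that you fold the consensus errors into the Lyapunov function while the paper bounds them separately per communication round, exploiting their reset to zero at each synchronization.
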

\begin{corollary} \label{cor:1}
By setting $b=O(\kappa^{\nu})$ for $\nu \in [0, 1], c_1 = \frac{30 L^2}{b N \kappa^{1 - \nu}}, c_2 = \frac{30 L^2}{b N \kappa^{2 - 2 \nu}}, c = \frac{1}{6 \kappa^{1 - \nu}}, \hat{c} = \frac{1}{54 \kappa^{3 - \nu}}$, 
$T = \kappa^{3 - \nu} T_0$, $Q = \frac{T_0^{1/3}}{N^{2/3}}$, 
$\eta = \frac{1}{20 Q L} = \frac{N^{2/3}}{20 L T_0^{1/3}}$, $B =  \frac{T_0^{1/3} b \kappa^{1 - \nu}}{N^{2/3}}$, we have $\alpha = c_1 \eta^2 =  \frac{3 N^{1/3}}{40 T_0^{2/3} b \kappa^{1 - \nu}}$, $\beta = c_2 \eta^2 =  \frac{3 N^{1/3}}{40 T_0^{2/3} b \kappa^{2 - 2\nu}}$. 
\begin{align}
&\frac{1}{T} \sum_{t=0}^{T-1}\mathbb{E}
\left\|\nabla \Phi\left(\bar{x}_{t}\right)\right\|^2 
\leq \frac{2160 L[\Phi(\bar{x}_{0})  - \Phi^{*}]}{(N T_0)^{2/3}} + \frac{40 \sigma^2}{ \kappa^{3 - \nu} (N T_0)^{2/3} } + \frac{480 \sigma^2}{\kappa^2 (N T_0)^{2/3}}  \nonumber\\
&+ \frac{240 L_f }{ (N T_0)^{2/3}} [\Phi(\bar{x}_0) - F(\bar{x}_0, \bar{y}_0)]  + \frac{9 \sigma^2 }{20 b \kappa  (N T_0)^{2/3}} + \frac{27 \sigma^2}{5 (N T_0)^{2/3}} + \left[\frac{3 \sigma^{2}}{20 b} + \frac{15 \zeta^{2}}{40} \right]  \frac{1}{(N T_0)^{2/3}} \nonumber
\end{align}
where $\Phi^{*}$ is the optimal.
\end{corollary}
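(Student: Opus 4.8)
The plan is to derive Corollary \ref{cor:1} by plugging the stated parameter schedule into the seven-term bound of Theorem \ref{th:1} and simplifying each term separately; the only inputs beyond elementary algebra are the identity $L_f/\mu=\kappa$ (hence $L_f^2/\mu^2=\kappa^2$) and the choice $b=\Theta(\kappa^{\nu})$, which I use to absorb stray powers of $\kappa^{\nu}$ against factors of $b$.

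First I would record the compound quantities the schedule forces. With $Q=T_0^{1/3}/N^{2/3}$ and $\eta=1/(20QL)$ we get $\eta=N^{2/3}/(20LT_0^{1/3})$, and with $T=\kappa^{3-\nu}T_0$ and $\hat c=1/(54\kappa^{3-\nu})$ the governing product is
\[
\hat c\,\eta\,T \;=\; \frac{1}{54\kappa^{3-\nu}}\cdot\frac{N^{2/3}}{20LT_0^{1/3}}\cdot\kappa^{3-\nu}T_0 \;=\; \frac{(NT_0)^{2/3}}{1080\,L},
\]
so every power of $\kappa$ cancels in the dominant term and, using $\Phi(\bar x_T)\ge\Phi^{*}$, we obtain $\frac{2[\Phi(\bar x_0)-\Phi(\bar x_T)]}{\hat c\eta T}\le\frac{2160\,L[\Phi(\bar x_0)-\Phi^{*}]}{(NT_0)^{2/3}}$. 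The same computation gives $c\eta T=\Theta\!\big(\kappa^{2}(NT_0)^{2/3}/L\big)$, so combined with $L_f^2/\mu^2=\kappa^2$ the term $\frac{12L_f^2}{c\eta\mu^2 T}[\Phi(\bar x_0)-F(\bar x_0,\bar y_0)]$ also loses its $\kappa$-dependence and collapses to an $O\!\big(L_f[\Phi(\bar x_0)-F(\bar x_0,\bar y_0)]/(NT_0)^{2/3}\big)$ term.

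Next I would treat the variance terms. From $\alpha=\frac{3N^{1/3}}{40T_0^{2/3}b\kappa^{1-\nu}}$ and $B=\frac{T_0^{1/3}b\kappa^{1-\nu}}{N^{2/3}}$ one computes $\alpha T B N=\tfrac{3}{40}\kappa^{3-\nu}(NT_0)^{2/3}$, hence $\frac{3\sigma^2}{\alpha T B N}=\frac{40\sigma^2}{\kappa^{3-\nu}(NT_0)^{2/3}}$; similarly $\beta=\frac{3N^{1/3}}{40T_0^{2/3}b\kappa^{2-2\nu}}$ yields $\beta T B N=\tfrac{3}{40}\kappa^{2}(NT_0)^{2/3}$, so with $L_f^2/\mu^2=\kappa^2$ the term $\frac{36L_f^2\sigma^2}{\mu^2\beta T B N}$ reduces to order $\sigma^2/(NT_0)^{2/3}$. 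For $\frac{6\alpha\sigma^2}{Nb}$ and $\frac{72\beta\sigma^2 L_f^2}{Nb\mu^2}$ I substitute $\alpha,\beta$ directly; each becomes $O\!\big(\sigma^2\kappa^{r}/(b^2(NT_0)^{2/3})\big)$ for an explicit $r$, and cancelling one factor $b=\Theta(\kappa^{\nu})$ against $\kappa^{\nu}$ recovers the $\frac{9\sigma^2}{20b\kappa(NT_0)^{2/3}}$ and $\frac{27\sigma^2}{5(NT_0)^{2/3}}$ summands. For the last bracket, $\nu\le1$ gives $\kappa^{1-\nu}\le\kappa^{2-2\nu}$, hence $c_1\ge c_2$ and $c_1^2+c_2^2\le2c_1^2=\frac{1800L^4}{b^2N^2\kappa^{2-2\nu}}$; multiplying by $\kappa^2\eta^2=\kappa^2 N^{4/3}/(400L^2T_0^{2/3})$ the powers of $L$ cancel against the $L^2$ in the denominators $30bL^2$ and $12L^2$, and $\kappa^{2\nu}/b^2=\Theta(1)$ leaves precisely $\big[\tfrac{3\sigma^2}{20b}+\tfrac{15\zeta^2}{40}\big]\frac{1}{(NT_0)^{2/3}}$. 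Summing the seven simplified terms gives the stated inequality.

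The argument is essentially bookkeeping, so there is no genuinely hard step; the one place I would verify before anything else is the exact cancellation of the condition-number exponents in the dominant term --- that $\hat c=\Theta(\kappa^{-(3-\nu)})$ together with $T=\kappa^{3-\nu}T_0$ and $\eta=\Theta(N^{2/3}/(LT_0^{1/3}))$ forces $1/(\hat c\eta T)=\Theta(L/(NT_0)^{2/3})$ --- since this is exactly what produces the advertised $\kappa$-free $O((NT_0)^{-2/3})$ rate, and a slip in any single $\kappa$-exponent would break both the corollary and the $O(\kappa^3 N^{-1}\varepsilon^{-3})$ sample-complexity conclusion drawn from it.
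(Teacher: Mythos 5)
Your proposal is correct and follows exactly the paper's own route: the paper proves the corollary by substituting the stated schedule into the seven-term bound of Theorem \ref{th:1} and simplifying term by term, which is precisely your plan, and your key products ($\hat c\,\eta\,T=(NT_0)^{2/3}/(1080L)$, $\alpha TBN=\tfrac{3}{40}\kappa^{3-\nu}(NT_0)^{2/3}$, $\beta TBN=\tfrac{3}{40}\kappa^{2}(NT_0)^{2/3}$, and the $c_1^2+c_2^2\le 2c_1^2$ bound for the last bracket) all check out. If anything your bookkeeping is slightly more careful than the paper's display (e.g.\ the $36L_f^2\sigma^2/(\mu^2\beta TBN)$ term correctly comes out as $480\sigma^2/(NT_0)^{2/3}$ rather than the paper's $480\sigma^2/(\kappa^2(NT_0)^{2/3})$, and the $12L_f^2/(c\eta\mu^2T)$ term carries an $L=L_f(1+\kappa)$ that the paper silently writes as $L_f$), but these are constant-level typos in the paper that do not affect the $O((NT_0)^{-2/3})$ rate or the complexity conclusions.
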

\begin{remark}
(Complexity) To make the $\frac{1}{T} \sum_{t=0}^{T - 1} \mathbb{E}
\left\|\nabla \Phi\left(\bar{x}_{t}\right)\right\|^2 \leq \varepsilon^2$, we get $T_0 = O(N^{-1}\varepsilon^{-3})$ and $T = O(\kappa^{3 - \nu} N^{-1} \varepsilon^{-3})$. Considering the $b= \kappa^{\nu}$, we have communication complexity $\frac{T}{Q} = \kappa^{3 - \nu} (N T_0)^{2 / 3} = \kappa^{3 - \nu} \varepsilon^{-2}$ and sample complexity $b T = O(\kappa^3 N^{-1} \varepsilon^{-3})$. When $\nu = 1, b = \kappa$, communication Complexity $\frac{T}{Q} = \kappa^{2} \varepsilon^{-2}$ for finding an $\varepsilon$-stationary point. Sample complexity $b T = O(\kappa^3 N^{-1} \varepsilon^{-3})$ matches the complexity result achieved by the single-machine algorithms, such as SREDA and Acc-MDA in \cite{luo2020stochastic, huang2022accelerated} but we do not require a large batch size b compared with these algorithms. And $O(\kappa^3 N^{-1} \varepsilon^{-3})$ also exhibits a linear speed-up compared with the aforementioned single-machine algorithms.
\end{remark}
\subsection{Nonconvex-Strongly-Concave (NC-SC) Problems}
\begin{assumption} 
Each local function function $f_i(x,y)$ is $\mu$-strongly concave in $y\in \mathcal{Y}$, i.e., $\forall x\in \mathcal{X}$ and $y_1, y_2\in \mathcal{Y}$,
we have
\begin{align}
 \|\nabla_y f_i(x,y_1) - \nabla_y f_i(x,y_2)\| \geq \mu \|y_1-y_2\| \nonumber
\end{align}
\end{assumption}
When the function $F(x,y)$ is strongly concave in $y\in \mathcal{Y}$, there exists a unique solution to
the problem $\max_{y\in \mathcal{Y}} f(x,y)$ for any $x$. Since PL condition is weaker than strong concavity, the convergence result of FedSGDA-M in \Cref{alg:1} under NC-PL also apply to NC-SC problem and FedSGDA-M has the sample complexity of $O(\kappa^{3}n^{-1}\varepsilon^{-3})$ and the communication complexity of  $O(\kappa^{2}\varepsilon^{-2})$ under nonconvex-strongly-concave setting.

\section{Experiments}
We conduct experiments on AUROC maximization and fair classification tasks to verify the efficiency of our algorithms under nonconvex-strongly-concave and nonconvex-concave settings.  Experiments are completed on the computer cluster with AMD EPYC 7513 Processors and NVIDIA RTX A6000. 

\textbf{Datasets and Models}:
We test the performance of algorithms on three typical datasets: Fashion-MNIST  dataset, CIFAR-10 dataset and Tiny-ImageNet. Fashion-MNIST dataset has $70,000$ $28 \times 28$ gray images ($10$ categories, $60,000$ training images and $10,000$ testing images). CIFAR-10 dataset consists of $50,000$ training images and $10,000$ testing images. Each image is the $3 \times 32 \times 32$ arrays of color image. Tiny-ImageNet dataset has 200 classes of ($64 \times 64$) colored  images and each class has 500 training images, 50 validation images, and 50 test images. For Fashion MNIST and Cifar10 data sets, we choose convolutional neural network from \cite{huang2021efficient} (The details are shown in the supplementray materials). For Tiny-ImageNet, we choose ResNet-18 \cite{he2016deep} as the model. 

\subsection{Fair Classification}
First, we follow \cite{sharma2022federated} and train fair classification networks by minimizing the maximum loss over different categories. 
\begin{align} \label{eq:8}
 \min_{x} \max_{ y \in \mathcal{Y} } \ \frac{1}{N} \sum_{i=1}^{N} \sum_{c=1}^{C} y_c \mathcal{L}^{i}_c(x) \quad  \text{s.t.}   \mathcal{Y}= \big\{ y \ | \  y_i \geq 0, \ \sum_{i=1}^{C} y_{i}=1 \big\} \nonumber 
\end{align}
where $\mathcal{L}^{i}_{c}$ denotes the cross-entropy loss functions corresponding to the class $c$ in C different classes and $x$ denotes the CNN model parameters. Clearly, the problem in \eqref{eq:8} is nonConvex in x (deep model parameters), and Concave in y. We compare our algorithm (FedSGDA+) and local SGDA+ with varying model, datasets, local update numbers, step size. Although the constraint is not considered in the theoretical analysis of local SGDA+ and FedSGDA+, FedSGDA+ still shows good performance. 

The network has 20 worker nodes. The datasets are partitioned into disjoint sets across all worker nodes and each worker node holds part of the data from all the classes \cite{khanduri2021stem}. We initialize the renset18 with pre-trained weights in PyTorch. In experiments, we run grid search for step size, and choose the step size for primal variable in the set $\{0.01, 0.03, 0.05, 0.1, 0.3\}$ and that for dual variable in the set $\{0.001, 0.01, 0.1\}$. We choose the global step size in the set $\{0.1, 0.5, 1, 1.5, 2\}$. The batch-size $b$ is in 50
and the inner loop number $Q$ is seleted from $\{20, 50, 100\}$, The outer loop number $S$ is selected from $\{1, 5, 10\}$ for FedSGDA+ and $ \{1, 5, 10, Q\}$  for local SGDA+. 

Figure \ref{fig:2} shows that FedSGDA+ has a better convergence rate than local SGDA+. This confirms that our algorithm can effectively accelerate SGDA by using the structure of federated learning. Due to the page limititation, the ablation analysis of step size is presented in the supplementary materials.

\begin{figure}[ht]
\centering
\subfigure[Fashion-MNIST]{
\hspace{0pt}
\includegraphics[width=.31\textwidth]{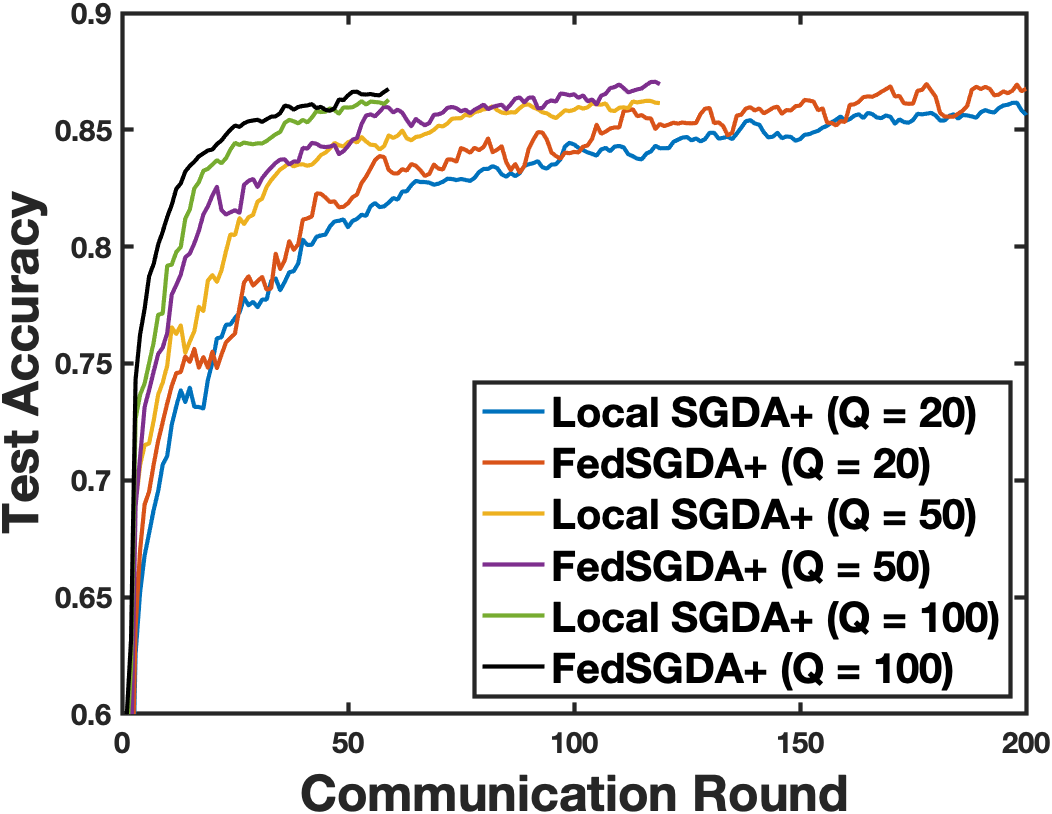}
}
\subfigure[CIFAR-10]{
\hspace{0pt}
\includegraphics[width=.31\textwidth]{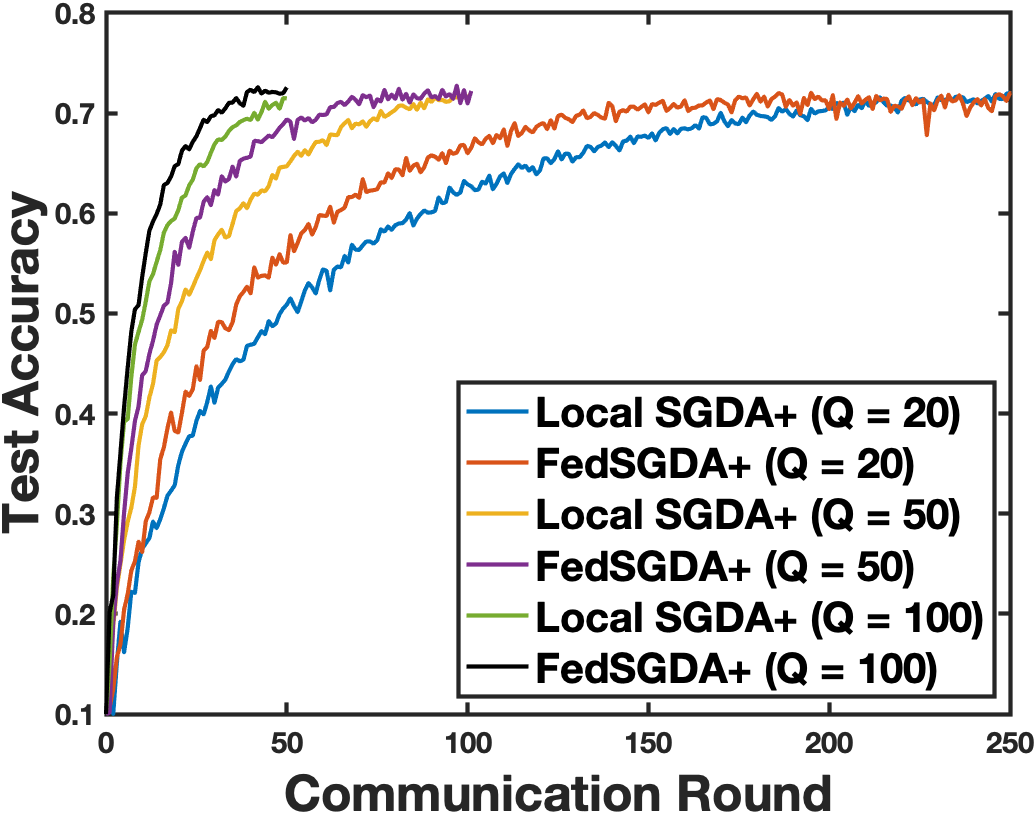}
}
\subfigure[Tiny ImageNet]{
\hspace{0pt}
\includegraphics[width=.31\textwidth]{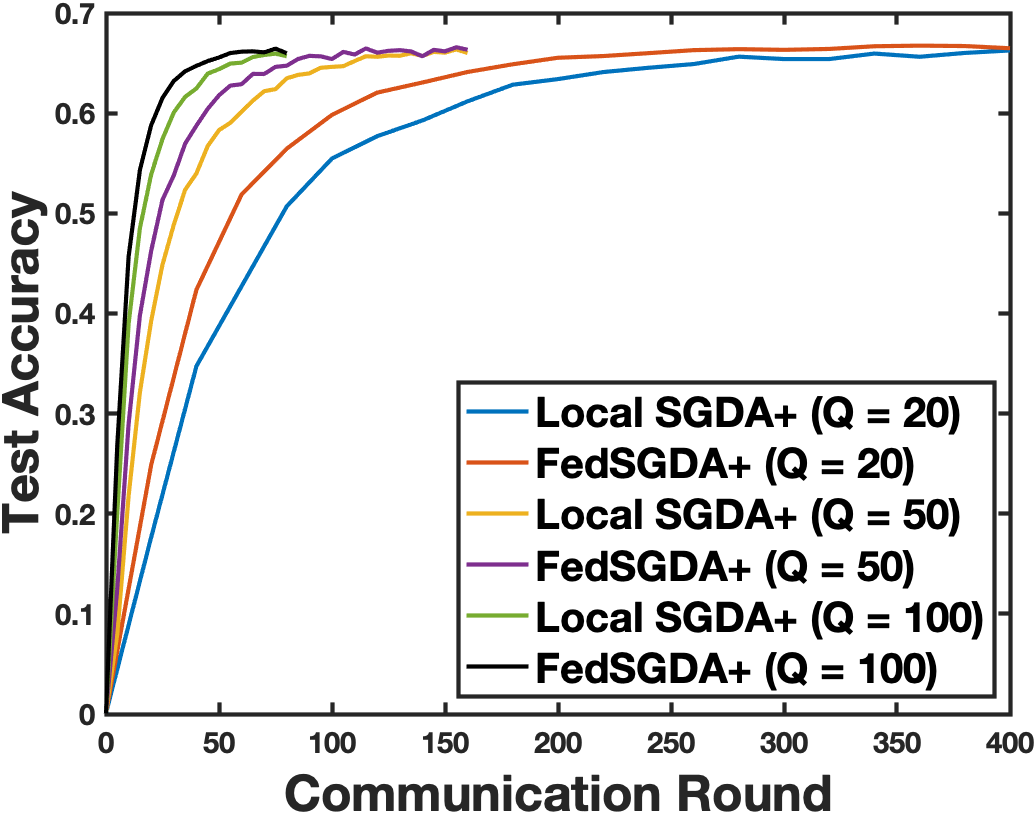}
}
\caption{Test Accuracy vs the number of communication rounds during the training phase.}
\label{fig:2}
\end{figure}

\subsection{AUROC Maximization}
\cite{liu2019stochastic} showed the AUROC maximization problem could be reformulated as the non-convex-strongly-concave minimax optimization, as below:
\begin{align} 
\min_{\substack{\mathbf{m} \in \mathbb{R}^d \\(a, b) \in \mathbb{R}^2}} \max _{w \in \mathbb{R}}  \frac{1}{N} \sum_{i=1}^N \mathbb{E}_{\mathbf{\xi}_i \sim \mathcal{D}_i}\left[f_i\left(\mathbf{m}, a, b, w ; \mathbf{\xi}\right)\right] 
\end{align}
where
\begin{align} 
&f\left(\mathbf{m}, a, b, w ; \mathbf{\xi}\right) = (1-p)\left(h\left(\mathbf{m} ; \mathbf{x}\right) - a\right)^2 \mathbb{I}_{\left[y=1\right]} + p\left(h\left(\mathbf{m}; \mathbf{x} \right) - b\right)^2 \mathbb{I}_{\left[y=-1\right]} \nonumber\\
&+2(1+w)[p h\left(\mathbf{m} ; \mathbf{x}\right) \mathbb{I}_{\left[y=-1\right]} - (1-p) h\left(\mathbf{m}, \mathbf{x}\right) \mathbb{I}_{[y=1]}] - p(1 - p) w^2 \nonumber
\end{align}
$\mathbf{\xi}_i = (\mathbf{x}, y) \sim \mathcal{D}_i$ denote a random data point and $\mathbf{x}$ represents the data features and $y \in \mathcal{Y} = \{-1, +1\}$ is the label. $h(\mathbf{m}; \mathbf{x})$ denotes the prediction score of the data point $\mathbf{x}$ calculated by a model with parameter $\mathbf{m}$. $ p = Pr(y = 1) = \mathbb{E}_y [\mathbb{I}_{[y=1]}]$ denotes the prior probability of the positive data.

\begin{figure*}[ht]
\centering
\subfigure[Fashion-MNIST]{
\hspace{0pt}
\includegraphics[width=.31\textwidth]{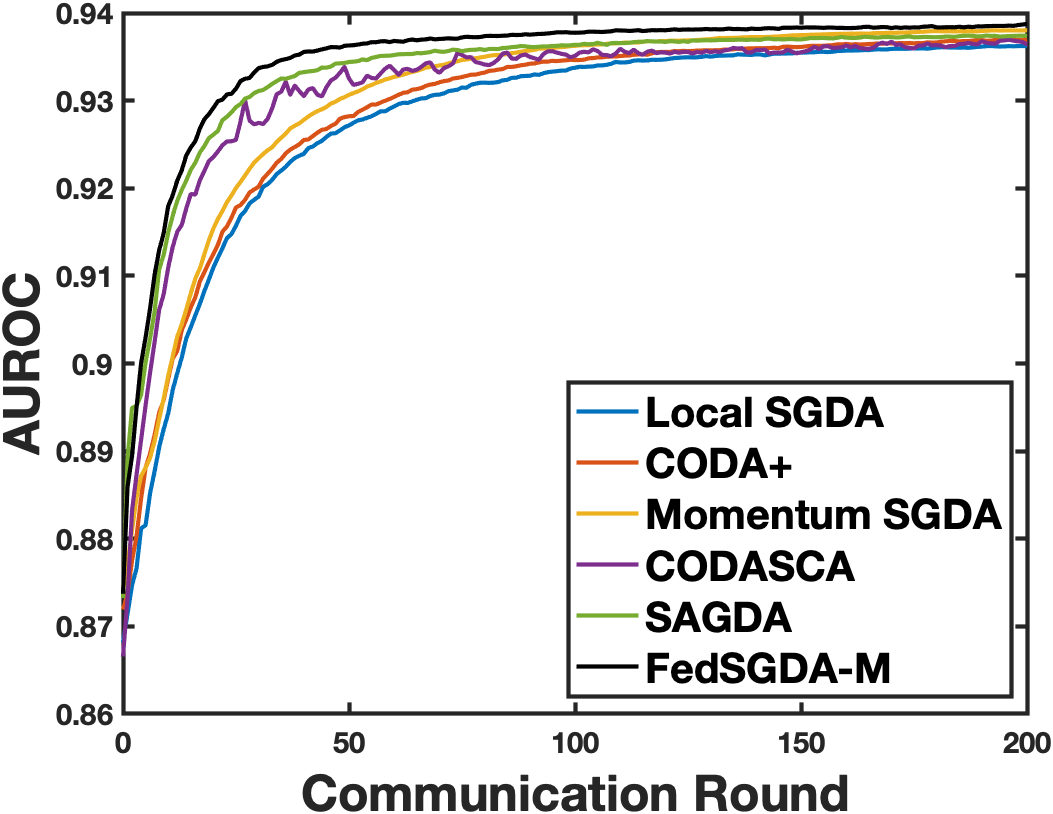}
}
\subfigure[CIFAR-10]{
\hspace{0pt}
\includegraphics[width=.31\textwidth]{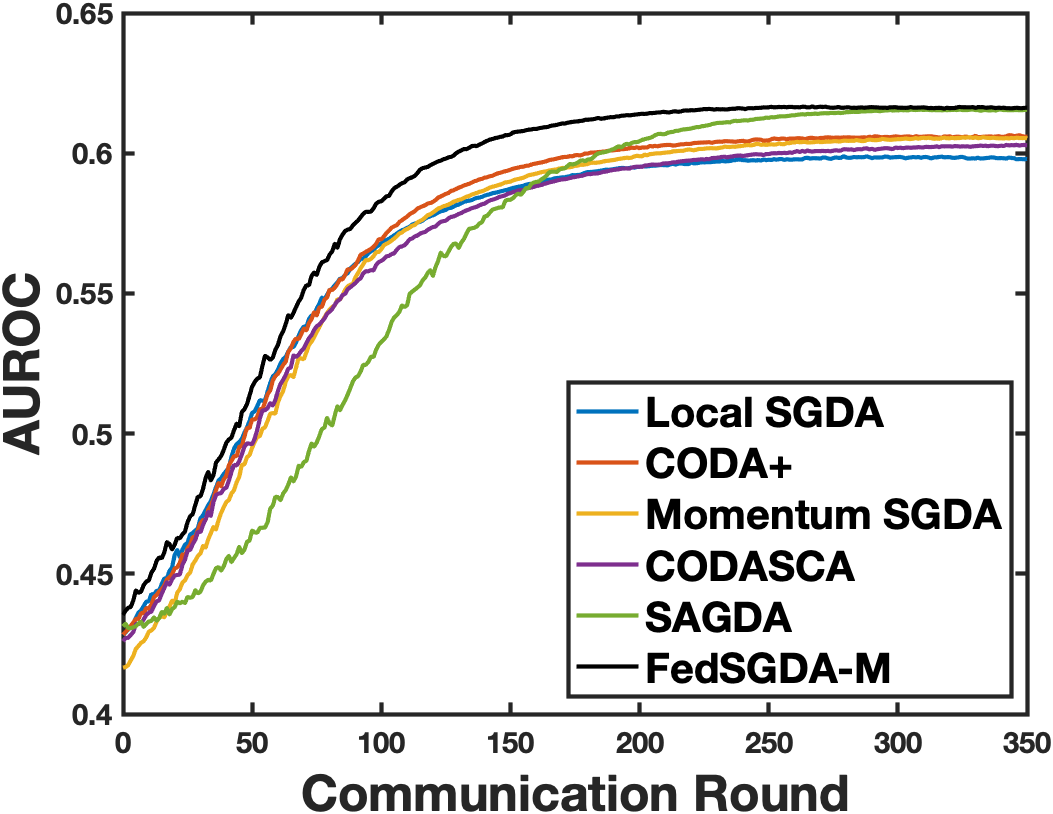}
}
\subfigure[Tiny ImageNet]{
\hspace{0pt}
\includegraphics[width=.31\textwidth]{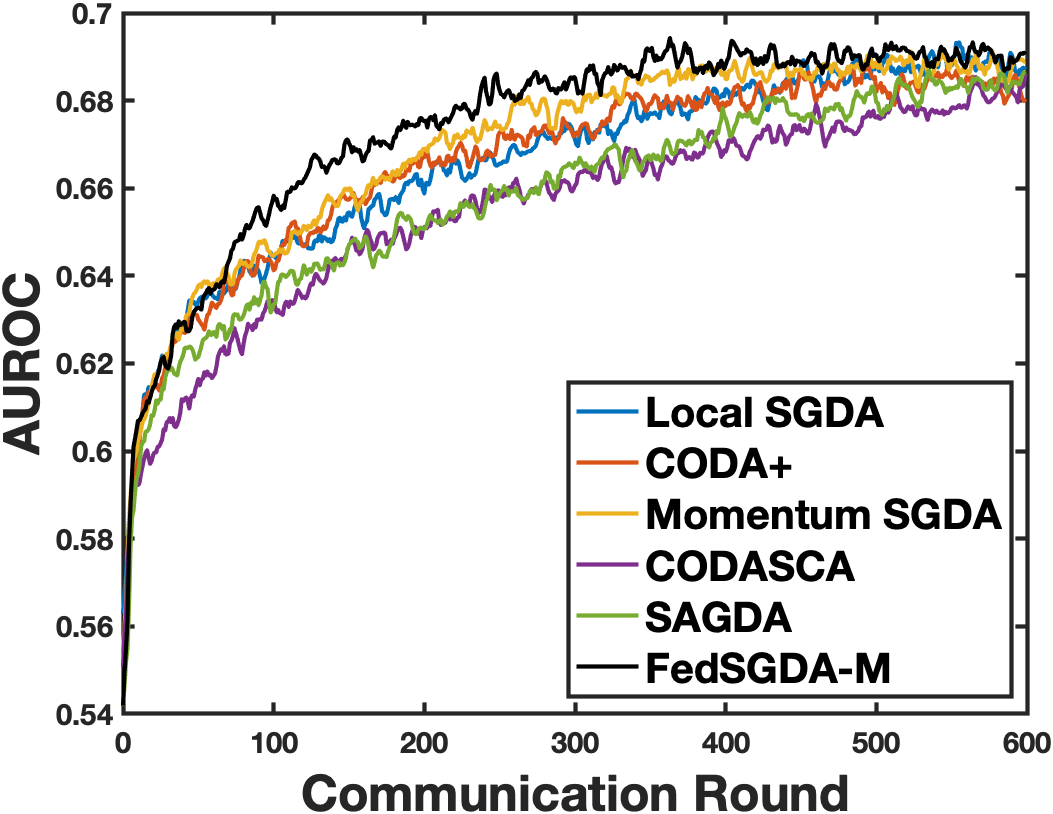}
}
\caption{AUROC scores on the test datasets vs the number of communication rounds during the training phase.}
\label{fig:1}
\end{figure*}

Following \cite{guo2020communication, yuan2021federated}, we constructed the imbalanced binary-class versions of datasets as follow: firstly, the first half of the classes (0 - 4) in the original Fashion-MNIST, CIFAR10 and classes (0 - 99) in Tiny-ImageNet datasets are converted into the negative class, and the rest half of classes are considered to be a positive class. 80\% of the negative data points are randomly dropped in each dataset. Then the datasets are evenly divided into disjoint sets across 16 worker nodes. In this case, each worker nodes share completely different imbalanced datasets. In the experiment, we use xavier normal initialization to deep models. 

We compare our algorithm (i.e., FedSGDA-M) with local SGDA \cite{deng2021local, sharma2022federated}, CODA+ \cite{guo2020communication, yuan2021federated}, Momentum SGDA \cite{sharma2022federated}, CODASCA \cite{yuan2021federated} and SAGDA \cite{yang2022sagda} as baselines in AUROC maximization.
In experiments, we carefully tune hyperparameters for all methods. We run a grid search for step size, and choose the step size for the primal variable in the set $\{0.001, 0.005, 0.01\}$ and that for dual variable in the set $\{0.0001, 0.001, 0.01\}$. We choose the global step size from $\{0.9, 1, 1.5, 2\}$ for CODASCA and SAGDA. We choose the momentum parameter in Local Momentum 
SGDA in the set $\{0.1, 0.5, 0.9\}$. The $\alpha$ and $\beta$ in FedSGDA-M are chosen from $ \{0.1, 0.5, 0.9\}$. The batch-size $b$ is 50 and the inner loop number $Q \in \{10, 20, 50\}$. 

As shown in Figure \ref{fig:1}, we compare the performance of FedSGDA-M and other baseline methods against the number of communication rounds. Figure \ref{fig:1} shows that our algorithms consistently outperform the other baseline algorithms on testing datasets, which validates the efficacy of our algorithms. Due to space limitation, other test results are
provided in the supplementary materials.

\textbf{Limitation} Minimax optimization has many applications and a more comprehensive discussion of our proposed algorithms on these tasks will be a future study because the theoretical analysis is the main contribution of this paper.

\section{Conclusion}
In this paper, we study a class of federated nonconvex minimax optimization problems \eqref{eq:1}. We consider the three most common settings (NC-SC, NC-PL, NC-C). Under the NC-C setting, we propose FedSGDA+ and prove it has the best communication complexity of $O(\varepsilon^{-6})$. It also achieves a linear speedup to the number of worker nodes. Under NC-PL and NC-PL settings, we propose FedSGDA-M with variance reduction technique and we prove that our algorithm (FedSGDA-M ) has the best sample complexity ($O(\kappa^{3}n^{-1}\varepsilon^{-3})$) and the best sample communication complexity ($O(\kappa^{2}\varepsilon^{-2})$). We prove that FedSGDA also enjoys linear speedup with respect to the number of worker nodes. Therefore, we reduce the existing complexity results for the most common nonconvex minimax optimization problems under the federated learning setting.


\pagebreak
\medskip

{
\small
\bibliography{example_paper}
\bibliographystyle{plainnat}
}

\newpage
\appendix

\section{Preliminary Results}
\begin{lemma} \label{lem:1}
\citep{nouiehed2019solving, deng2021local}

1) Under the above Assumptions \ref{ass:2} and \ref{ass:3}, the function $\Phi(x)=\max_{y} f(x, y) =f(x, y^*(x))$ and the mapping $y^*(x) = \arg\max_{y} f(x, y)$ have $L$-Lipschitz continuous gradient and $\kappa$-Lipschitz continuous respectively, such as for all $x_1, x_2 \in \mathbb{R}^{d_1}$
\begin{align}
 \|\nabla \Phi(x_1) - \nabla \Phi(x_2)\| \leq L\|x_1 - x_2\|, \quad \|y^*(x_1) - y^*(x_2)\| \leq \kappa \|x_1 - x_2\|,
\end{align}
where $L = L_f (1+ \kappa)$ and $\kappa = L_f/\mu$.

2) (Quadratic growth condition) If the function $g(x)$ satisfies Assumptions \ref{ass:2} and \ref{ass:3}, then $\forall x$, the following conditions holds
\begin{align}
g(x) - \min_z g(z) & \geq \frac{\mu}{2}\left\|x_p-x\right\|^2 \label{eq:10}\\ 
\|\nabla g(x)\|^2 & \geq 2 \mu\left(g(x) - \min _z g(z)\right) \label{eq:11}
\end{align}
3) If F is L–smooth then
\begin{align}
 \| \nabla_{y} F(x ,y) \|^2 \leq 2 L_f [\Phi(x) - F(x,y)]  \label{eq:12}
\end{align}
\end{lemma}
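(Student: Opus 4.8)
The plan is to handle the three parts in increasing order of difficulty, since the smoothness of $\Phi$ in Part~1 will rely on the quadratic-growth estimate established in Part~2. I would begin with Part~3, which is cleanest and needs only smoothness (Assumption~\ref{ass:3}) plus the definition $\Phi(x)=\max_{y'}F(x,y')$. Applying the quadratic lower bound from $L_f$-smoothness of $y'\mapsto F(x,y')$ at the point $y'=y+\tfrac{1}{L_f}\nabla_y F(x,y)$ gives
\[
F\!\left(x,\,y+\tfrac{1}{L_f}\nabla_y F(x,y)\right)\ \ge\ F(x,y)+\tfrac{1}{2L_f}\|\nabla_y F(x,y)\|^2 .
\]
Since the left side is at most $\Phi(x)$, rearranging yields $\|\nabla_y F(x,y)\|^2\le 2L_f[\Phi(x)-F(x,y)]$, which is exactly \eqref{eq:12}; note this step never uses the PL condition.

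Next, Part~2. The second inequality \eqref{eq:11} is merely the PL condition (Assumption~\ref{ass:2}) rewritten in minimization form, so there is nothing to prove. For the quadratic-growth bound \eqref{eq:10} I would derive it from PL by a gradient-flow argument (the Karimi--Nutini--Schmidt route). Consider the flow $\dot z_t=-\nabla g(z_t)$ started at $z_0=x$; PL implies $\tfrac{d}{dt}\sqrt{g(z_t)-g^*}\le -\tfrac{\sqrt{2\mu}}{2}\,\|\dot z_t\|$. Integrating over $[0,\infty)$ and using that the total path length dominates $\|x-x_p\|$ (with $x_p$ the projection of $x$ onto the minimizer set) gives $\sqrt{g(x)-g^*}\ge \tfrac{\sqrt{2\mu}}{2}\|x-x_p\|$, and squaring produces \eqref{eq:10}.

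Finally Part~1, the crux. The gradient identity $\nabla\Phi(x)=\nabla_xF(x,y^*(x))$ (already invoked in the main text via $\nabla_yF(x,y^*(x))=0$) reduces the $L$-smoothness of $\Phi$ to a Lipschitz estimate on $y^*$, since
\[
\|\nabla\Phi(x_1)-\nabla\Phi(x_2)\|\le L_f\big(\|x_1-x_2\|+\|y^*(x_1)-y^*(x_2)\|\big),
\]
so $\|y^*(x_1)-y^*(x_2)\|\le\kappa\|x_1-x_2\|$ immediately delivers $L=L_f(1+\kappa)$. To bound $\|y^*(x_1)-y^*(x_2)\|$ I would take $y^*(x_1)$ to be the projection of $y^*(x_2)$ onto the maximizer set $Y^*(x_1)$ and combine two estimates: quadratic growth of Part~2 applied to $-F(x_1,\cdot)$ gives $\Phi(x_1)-F(x_1,y^*(x_2))\ge\tfrac{\mu}{2}\|y^*(x_1)-y^*(x_2)\|^2$, while a telescoping estimate using the optimality of $y^*(x_2)$ for $F(x_2,\cdot)$ and the $L_f$-Lipschitzness of $\nabla_xF$ in $y$ gives $\Phi(x_1)-F(x_1,y^*(x_2))\le L_f\|y^*(x_1)-y^*(x_2)\|\,\|x_1-x_2\|$. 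Cancelling one factor of $\|y^*(x_1)-y^*(x_2)\|$ yields the $O(\kappa)$ Lipschitz bound.

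I expect Part~1 to be the main obstacle, for two reasons: under the PL condition alone the maximizer set $Y^*(x)$ need not be a singleton, so $y^*$ must be read as a Lipschitz selection (hence the projection device above), and the quadratic-growth step of Part~2 is essential precisely because strong concavity --- which would close the argument in one line --- is unavailable. Under the NC-SC specialization I would note the simpler route: $\mu$-strong monotonicity of $\nabla_yF$ together with the two stationarity conditions $\nabla_yF(x_i,y^*(x_i))=0$ and $L_f$-smoothness directly gives $\mu\|y^*(x_1)-y^*(x_2)\|\le L_f\|x_1-x_2\|$, recovering the $\kappa$-Lipschitz constant cleanly.
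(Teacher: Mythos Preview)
The paper does not prove this lemma at all: it is stated in the ``Preliminary Results'' appendix with citations to \cite{nouiehed2019solving, deng2021local} and no argument is given. So there is no ``paper's own proof'' to compare against; your sketch stands on its own.

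Your arguments for Parts~2 and~3 are correct and standard. Part~3 is exactly the one-step descent (ascent) lemma applied to $y'\mapsto F(x,y')$, and Part~2 is the Karimi--Nutini--Schmidt derivation of quadratic growth from PL; both are fine as written.

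For Part~1 your route is also correct, but note a constant slippage: combining the quadratic-growth lower bound $\Phi(x_1)-F(x_1,y^*(x_2))\ge\tfrac{\mu}{2}\|y^*(x_1)-y^*(x_2)\|^2$ with your telescoping upper bound $\Phi(x_1)-F(x_1,y^*(x_2))\le L_f\|y^*(x_1)-y^*(x_2)\|\,\|x_1-x_2\|$ yields
\[
\|y^*(x_1)-y^*(x_2)\|\ \le\ \tfrac{2L_f}{\mu}\|x_1-x_2\|\ =\ 2\kappa\,\|x_1-x_2\|,
\]
i.e.\ a factor $2\kappa$ rather than the $\kappa$ asserted in the statement. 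The sharp $\kappa$ is what your last paragraph recovers under strong concavity via monotonicity of $\nabla_y F$; under PL alone the quadratic-growth route naturally loses this factor of $2$. This does not break anything downstream in the paper (all occurrences of $\kappa$ in the rates are up to absolute constants), but if you want to match the stated constant exactly you would need to invoke the strongly-concave argument, which is indeed what the cited source \cite{nouiehed2019solving} does.
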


\begin{lemma} \label{lem:2} \cite{khanduri2021stem}
For a finite sequence  $x^n_t \in \mathbb{R}^{d}$, $n \in [N]$ and $\bar{x}_t \in \mathbb{R}^{d}$, we have 
\begin{align}
\sum_{n=1}^N\|x^n_t -  \bar{x}_t \|^{2} \leq \sum_{n=1}^N \|x_t^n\|^{2}
\end{align}
\end{lemma}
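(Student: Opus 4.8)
The plan is to recognize this as the elementary fact that centering a collection of vectors at their mean cannot increase the sum of squared norms — in other words, the ``variance'' about the mean is bounded by the raw second moment about the origin. Throughout I will take $\bar{x}_t = \frac{1}{N}\sum_{n=1}^N x^n_t$ to be the average of the sequence, which is the quantity to which the server aggregation in Algorithms \ref{alg:2} and \ref{alg:1} reduces; the inequality then controls the client-drift terms $\sum_n \|x^n_t - \bar{x}_t\|^2$ by quantities of the form $\sum_n \|x^n_t\|^2$ that appear naturally after translating the iterates so that the relevant reference point sits at the origin.

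First I would expand the left-hand side directly using $\|a-b\|^2 = \|a\|^2 - 2\langle a,b\rangle + \|b\|^2$, giving
\begin{align}
\sum_{n=1}^N \|x^n_t - \bar{x}_t\|^2 = \sum_{n=1}^N \|x^n_t\|^2 - 2\Big\langle \sum_{n=1}^N x^n_t,\ \bar{x}_t \Big\rangle + N\|\bar{x}_t\|^2. \nonumber
\end{align}
Next I would substitute the defining identity $\sum_{n=1}^N x^n_t = N\bar{x}_t$ into the cross term, so that $\langle \sum_n x^n_t, \bar{x}_t\rangle = N\|\bar{x}_t\|^2$. The two terms involving $\bar{x}_t$ then collapse, yielding the exact equality $\sum_{n=1}^N \|x^n_t - \bar{x}_t\|^2 = \sum_{n=1}^N \|x^n_t\|^2 - N\|\bar{x}_t\|^2$. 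Since $N\|\bar{x}_t\|^2 \geq 0$, dropping this nonnegative term produces the claimed bound.

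There is essentially no hard step here: the result is a one-line consequence of the orthogonality of the centered vectors to their mean (equivalently, of Jensen's inequality applied to the squared norm). If one prefers to avoid choosing the origin as the reference point, the same computation shows more generally that $\sum_n \|x^n_t - \bar{x}_t\|^2 \le \sum_n \|x^n_t - z\|^2$ for every fixed $z$, i.e.\ the mean is the minimizer of the sum of squared deviations; the stated lemma is the special case $z=0$. The only point requiring care in the write-up is to make explicit that $\bar{x}_t$ denotes the arithmetic mean, since the inequality is false for an arbitrary $\bar{x}_t$, but this is exactly the role $\bar{x}_t$ plays in the averaging steps of the algorithms.
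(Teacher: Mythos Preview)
Your argument is correct: the identity $\sum_{n=1}^N \|x^n_t - \bar{x}_t\|^2 = \sum_{n=1}^N \|x^n_t\|^2 - N\|\bar{x}_t\|^2$ for $\bar{x}_t = \tfrac{1}{N}\sum_n x^n_t$ immediately gives the inequality, and you rightly flag that the mean interpretation of $\bar{x}_t$ is essential. The paper does not supply its own proof of this lemma---it simply cites \cite{khanduri2021stem}---so there is nothing further to compare; your derivation is the standard one.
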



\section{Nonconvex-Concave (NS-C)}
In this part, we analyze the convergence result of FedSGDA+ in \cref{alg:2}. 
We define $\hat{x}_t =
\arg \min _{x} \Phi(x) + L_f\left\|x - \bar{x}_t\right\|^2$, and define $\Phi_{1 / 2 L_f}\left(\bar{x}_{t}\right) = \min_{x} \Phi(x) + L_f\left\|x - \bar{x}_{t}\right\|^2 $
\subsection{Important Conclusions}
\begin{lemma} \label{lem:C1} Suppose Assumptions \ref{ass:1}, \ref{ass:4}, \ref{ass:5}, \ref{ass:6} holds and the sequences $\{x_t, y_t\}$ are generated by Algorithm \ref{alg:2}, we have 
\begin{align}
& \mathbb{E}\left[\Phi_{1 / 2 L_f}\left(\bar{x}_{t+1}\right)\right] \leq  \mathbb{E}\left[\Phi_{1 / 2 L_f}\left(\bar{x}_t\right)\right] + Q L_f (\hat{c} \eta_x)^2 (Q G_x^2 + \frac{ \sigma^2}{N}) - \frac{Q \hat{c} \eta_x}{8} \mathbb{E}\left\|\nabla \Phi_{1 / 2 L_f}\left(\bar{x}_t\right)\right\|^2  \nonumber\\ 
&+ \frac{2 \hat{c} \eta_x L_f^2}{N} \sum_{i=1}^{N}\sum_{q=0}^{Q - 1} \mathbb{E} (\|x^i_{t,q} - \bar{x}_t \|^2 + \|y^i_{t,q} - \bar{y}_t\|^2 )  + 2 Q \hat{c} \eta_x L_f \mathbb{E}\left[\Phi\left(\bar{x}_t\right) - F\left(\bar{x}_t, \bar{y}_t\right)\right]  \nonumber
\end{align}

\begin{proof} 

Based on the $x$ update step in Algorithm \ref{alg:2}, we have
\begin{align}
& \mathbb{E}\left\|\hat{x}_t - \bar{x}_{t+1}\right\|^2 = \mathbb{E}\left\|\hat{x}_t - \bar{x}_t +  \frac{\hat{c} \eta_x}{N} \sum_{i=1}^N \sum_{q=0}^{Q - 1} \nabla_{x} f_i\left(x_{t,q}^i, y_{t,q}^i ; \mathcal{B}^i_{t,q}\right)\right\|^2 \nonumber\\
=& \mathbb{E}\left\|\hat{x}_t - \bar{x}_t\right\|^2 + (\hat{c} \eta_x)^2 \mathbb{E}\left\|\frac{1}{N} \sum_{i=1}^N \sum_{q=0}^{Q - 1} \nabla_{x} f_i\left(x_{t,q}^i, y_{t,q}^i ; \mathcal{B}^i_{t,q}\right) \right\|^2 \nonumber\\
&+ 2 \hat{c} \eta_x \mathbb{E}\left\langle\hat{x}_t - \bar{x}_t, \frac{1}{N} \sum_{i=1}^N \sum_{q=0}^{Q - 1} \left[\nabla_{x} f_i\left(x_{t,q}^i, y_{t,q}^i\right) - \nabla_{x} F \left(\bar{x}_{t}, \bar{y}_{t} \right) \right] \right\rangle \nonumber\\
&+ 2 \hat{c} \eta_x \mathbb{E}\left\langle\hat{x}_t - \bar{x}_t, Q \nabla_{x} F \left( \bar{x}_{t}, \bar{y}_{t} \right)\right\rangle \nonumber\\
\stackrel{(a)}{\leq} & \mathbb{E}\left\|\hat{x}_t - \bar{x}_t\right\|^2 + (\hat{c} \eta_x)^2 \mathbb{E} \left[\left\|\frac{1}{N} \sum_{i=1}^N \sum_{q=0}^{Q - 1} \nabla_{x} f_i\left(x_{t,q}^i, y_{t,q}^i \right) \right\|^2 \right.\nonumber\\
&+ \left. \left\|\frac{1}{N} \sum_{i=1}^N \sum_{q=0}^{Q - 1} [\nabla_{x} f_i\left(x_{t,q}^i, y_{t,q}^i \right) - \nabla_{x} f_i\left(x_{t,q}^i, y_{t,q}^i; \mathcal{B}_{t,q}^i\right)] \right\|^2 \right]\nonumber\\
&+ \hat{c} \eta_x \mathbb{E} \left[\frac{Q L_f}{2}\left\|\hat{x}_t - \bar{x}_t \right\|^2 + \frac{2}{Q L_f}\left\| \frac{1}{N} \sum_{i=1}^N \sum_{q=0}^{Q - 1} [\nabla_{x} f_i\left(x_{t,q}^i, y_{t,q}^i\right) - \nabla_{x} F\left(\bar{x}_t, \bar{y}_t \right)] \right\|^2\right] \nonumber\\
&+ 2 \hat{c} \eta_x Q  \mathbb{E}\left\langle\hat{x}_t - \bar{x}_t, \nabla_{x} F\left(\bar{x}_t,
\bar{y}_t\right)\right\rangle\nonumber\\
\stackrel{(b)}{\leq} & \mathbb{E} \left\| \hat{x}_t - \bar{x}_t\right\|^2 + Q (\hat{c} \eta_x)^2 (Q G^2_x + \frac{\sigma^2}{N}) + \frac{Q \hat{c} \eta_x L_f}{2} \left\|\hat{x}_t - \bar{x}_t\right\|^2 \nonumber\\
&+  \frac{2 \hat{c} \eta_x L_f}{N} \sum_{i=1}^{N} \sum_{q=0}^{Q - 1} \mathbb{E} (\|x^i_{t,q} - \bar{x}_t \|^2 + \|y^i_{t,q} - \bar{y}_t\|^2) 
+ 2 Q \hat{c} \eta_x \mathbb{E} \left\langle \hat{x}_t - \bar{x}_t, \nabla_{x} F\left(\bar{x}_t, \bar{y}_t\right)\right\rangle \label{eq:43}
\end{align}

where (a) follows from \cref{ass:1} (i)  unbiasedness of stochastic gradients and Young’s inequality; (b) follows from Assumption \ref{ass:6} to bound the gradient and \ref{ass:1} (ii) bound the variance and the smooth of $f_i(x,y)$ in \cref{ass:5} . Next, we bound the last term in \eqref{eq:43}. 
\begin{align}
\mathbb{E}\left\langle\hat{x}_t - \bar{x}_t, \nabla_{x} F\left(\bar{x}_t, \bar{y}_t\right) \right \rangle & \stackrel{(a)}{\leq} \mathbb{E}\left[F\left(\hat{x}_t, \bar{y}_t\right) - F\left(\bar{x}_t, \bar{y}_t\right) + \frac{ L_f}{2} \left\|\hat{x}_t - \bar{x}_t\right\|^2\right] \nonumber\\
& \stackrel{(b)}{\leq}\mathbb{E}\left[\Phi\left(\hat{x}_t\right) - F\left(\bar{x}_t, \bar{y}_t\right) + \frac{L_f}{2}\left\|\hat{x}_t - \bar{x}_t\right\|^2\right] \nonumber\\
& =\mathbb{E}\left[\Phi\left(\hat{x}_t\right)+ L_f\left\|\hat{x}_t - \bar{x}_t\right\|^2\right] - \mathbb{E} F\left(\bar{x}_t, \bar{y}_t\right) - \frac{L_f}{2} \mathbb{E}\left\|\hat{x}_t - \bar{x}_t\right\|^2 \nonumber\\
& \stackrel{(c)}{\leq} \mathbb{E}\left[\Phi\left(\bar{x}_t\right) + L_f\left\|\bar{x}_t - \bar{x}_t\right\|^2\right] - \mathbb{E} F\left(\bar{x}_t, \bar{y}_t\right) - \frac{L_f}{2} \mathbb{E}\left\|\hat{x}_t - \bar{x}_t\right\|^2 \nonumber \\
& = \mathbb{E}\left[\Phi\left(\bar{x}_t\right) - F\left(\bar{x}_t, \bar{y}_t\right) - \frac{L_f}{2}\left\|\hat{x}_t - \bar{x}_t\right\|^2\right] \label{eq:44}
\end{align}

where (a) holds by $L_f$-smoothness of $f(x, \cdot)$ (\Cref{ass:5}); (b) uses the definition of $\Phi(x)$; (c) uses the definition of $\hat{x}$. We also have 
\begin{align}
\Phi_{1 / 2 L_f}\left(\bar{x}_{t+1}\right) = \min_{x} \Phi(x) + L_f\left\|x - \bar{x}_{t+1}\right\|^2 \leq \Phi(\hat{x}_t) + L_f\left\|\hat{x}_t - \bar{x}_{t+1}\right\|^2 \label{eq:45}
\end{align}

Combining the above \eqref{eq:43}, \eqref{eq:44} and \eqref{eq:45}, we get
\begin{align}
&\mathbb{E}\left[\Phi_{1 / 2 L_f}\left(\bar{x}_{t+1}\right)\right] \nonumber\\
\leq& \mathbb{E} \Phi\left(\hat{x}_t\right) + L_f\left[\mathbb{E}\left\|\hat{x}_t - \bar{x}_t\right\|^2 + Q (\hat{c} \eta_x)^2 (Q G_x^2 + \frac{ \sigma^2}{N})\right] + \frac{Q \hat{c} \eta_x L_f^2}{2} \left\|\hat{x}_t - \bar{x}_t\right\|^2 \nonumber\\
+&  \frac{2 \hat{c} \eta_x L_f^2}{N} \sum_{i=1}^{N}\sum_{q=0}^{Q - 1} \mathbb{E} (\|x^i_{t,q} - \bar{x}_t \|^2 + \|y^i_{t,q} - \bar{y}_t\|^2 ) + 2 Q \hat{c} \eta_x L_f \mathbb{E}\left[\Phi\left(\bar{x}_t\right) - F\left(\bar{x}_t, \bar{y}_t\right) - \frac{L_f}{2}\left\|\hat{x}_t - \bar{x}_t\right\|^2\right] \nonumber\\ 
\leq & \mathbb{E}\left[\Phi_{1 / 2 L_f}\left(\bar{x}_t\right)\right] + Q L_f (\hat{c} \eta_x)^2 (Q G_x^2 + \frac{ \sigma^2}{N}) + \frac{2 \hat{c} \eta_x L_f^2}{N} \sum_{i=1}^{N}\sum_{q=0}^{Q - 1} \mathbb{E} (\|x^i_{t,q} - \bar{x}_t \|^2 + \|y^i_{t,q} - \bar{y}_t\|^2 )  \nonumber\\
&- \frac{Q \hat{c} \eta_x L_f^2}{2} \mathbb{E}\left\|\hat{x}_t - \bar{x}_t\right\|^2 +  2 Q \hat{c} \eta_x L_f \mathbb{E}\left[\Phi\left(\bar{x}_t\right) - F\left(\bar{x}_t, \bar{y}_t\right)\right] \nonumber\\ 
\stackrel{(a)}{=} & \mathbb{E}\left[\Phi_{1 / 2 L_f}\left(\bar{x}_t\right)\right] + Q L_f (\hat{c} \eta_x)^2 (Q G_x^2 + \frac{ \sigma^2}{N}) + \frac{2 \hat{c} \eta_x L_f^2}{N} \sum_{i=1}^{N}\sum_{q=0}^{Q - 1} \mathbb{E} (\|x^i_{t,q} - \bar{x}_t \|^2 + \|y^i_{t,q} - \bar{y}_t\|^2 )  \nonumber\\ 
&- \frac{Q \hat{c} \eta_x}{8} \mathbb{E}\left\|\nabla \Phi_{1 / 2 L_f}\left(\bar{x}_t\right)\right\|^2  + 2 Q \hat{c} \eta_x L_f \mathbb{E}\left[\Phi\left(\bar{x}_t\right) - F\left(\bar{x}_t, \bar{y}_t\right)\right]  \nonumber
\end{align}
where (a) holds due to the fact $\nabla \Phi_{1/2L_f} (\hat{x}_t) = 2 L_f (\hat{x}_t - \bar{x}_t)$ from Lemma 2.2 in \cite{davis2019stochastic}
\end{proof}
\end{lemma}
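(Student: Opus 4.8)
The plan is to establish the one-step descent inequality of \cref{lem:C1} directly, tracking how the Moreau envelope $\Phi_{1/2L_f}(\bar{x}_t)$ changes after one communication round of $Q$ local $x$-updates. The key object is the proximal point $\hat{x}_t = \arg\min_x \Phi(x) + L_f\|x-\bar{x}_t\|^2$, which I would hold fixed throughout the round, since by definition $\Phi_{1/2L_f}(\bar{x}_{t+1}) \leq \Phi(\hat{x}_t) + L_f\|\hat{x}_t - \bar{x}_{t+1}\|^2$ (this is \eqref{eq:45}, just using suboptimality of $\hat{x}_t$ for the $t+1$ minimization). So the whole argument reduces to controlling $\mathbb{E}\|\hat{x}_t - \bar{x}_{t+1}\|^2$ and comparing it to $\|\hat{x}_t - \bar{x}_t\|^2$.

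First I would expand $\|\hat{x}_t - \bar{x}_{t+1}\|^2$ using the global $x$-update $\bar{x}_{t+1} = \bar{x}_t - \hat{c}\eta_x \frac{1}{N}\sum_i\sum_q \nabla_x f_i(x^i_{t,q},y^i_{t,q};\mathcal{B}^i_{t,q})$, giving a cross term plus a squared-gradient term. The cross term I would split as in \eqref{eq:43}: one piece pairing $\hat{x}_t - \bar{x}_t$ against the deviation $\nabla_x f_i(x^i_{t,q},y^i_{t,q}) - \nabla_x F(\bar{x}_t,\bar{y}_t)$, handled by Young's inequality with weight $QL_f$ so that the smoothness bound on the deviation produces the client-drift sum $\frac{2\hat c\eta_x L_f^2}{N}\sum_i\sum_q(\|x^i_{t,q}-\bar{x}_t\|^2 + \|y^i_{t,q}-\bar{y}_t\|^2)$, and a second piece pairing against $Q\nabla_x F(\bar{x}_t,\bar{y}_t)$. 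The stochastic noise is peeled off using unbiasedness (\cref{ass:1}(i)) and the variance bound (\cref{ass:1}(ii)), while the full-gradient magnitude is controlled by the Lipschitz-in-$x$ bound $G_x$ (\cref{ass:6}), yielding the $Q(\hat c\eta_x)^2(QG_x^2 + \sigma^2/N)$ term. This is the content of step (b) in \eqref{eq:43}.

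Next I would bound the surviving inner product $\mathbb{E}\langle \hat{x}_t - \bar{x}_t, \nabla_x F(\bar{x}_t,\bar{y}_t)\rangle$, which is exactly \eqref{eq:44}: apply $L_f$-smoothness of $F(\cdot,\bar{y}_t)$ to get $F(\hat{x}_t,\bar{y}_t) - F(\bar{x}_t,\bar{y}_t) + \frac{L_f}{2}\|\hat{x}_t-\bar{x}_t\|^2$, bound $F(\hat{x}_t,\bar{y}_t)\leq \Phi(\hat{x}_t)$ by definition of $\Phi$ as the max over $y$, then use optimality of $\hat{x}_t$ for the proximal problem to replace $\Phi(\hat{x}_t)+L_f\|\hat{x}_t-\bar{x}_t\|^2$ by $\Phi(\bar{x}_t)$. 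The crucial gain here is the surplus $-\frac{L_f}{2}\|\hat{x}_t-\bar{x}_t\|^2$, which after multiplication by $2Q\hat c\eta_x L_f$ becomes a negative $-\frac{Q\hat c\eta_x L_f^2}{2}\|\hat{x}_t-\bar{x}_t\|^2$ term; I then convert this into the descent term $-\frac{Q\hat c\eta_x}{8}\|\nabla\Phi_{1/2L_f}(\bar{x}_t)\|^2$ via the identity $\nabla\Phi_{1/2L_f}(\bar{x}_t) = 2L_f(\hat{x}_t-\bar{x}_t)$ from \cite{davis2019stochastic}. Assembling \eqref{eq:43}, \eqref{eq:44}, \eqref{eq:45} and noting $\Phi(\hat{x}_t)+L_f\|\hat{x}_t-\bar{x}_t\|^2 = \Phi_{1/2L_f}(\bar{x}_t)$ collapses the $\|\hat{x}_t-\bar{x}_t\|^2$ terms into the Moreau envelope at time $t$.

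I expect the main obstacle to be the bookkeeping in the cross-term split: the concavity in $y$ is what lets us route the gradient gap through $\Phi$ and the proximal comparison without a matching $y$-ascent descent term, so the inequality in \eqref{eq:44} is where the NC-C structure is genuinely used, and getting the Young's-inequality weights to produce precisely the stated coefficients ($QL_f$ on one side, $2/(QL_f)$ on the other) so the $\frac{Q\hat c\eta_x L_f}{2}\|\hat{x}_t-\bar{x}_t\|^2$ self-term is absorbed rather than left dangling is the delicate part. The residual client-drift and primal-dual gap terms are left unsimplified in the statement, to be bounded in subsequent lemmas, so I would not attempt to expand them here.
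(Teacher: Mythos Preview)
Your proposal is correct and follows essentially the same approach as the paper's proof: expand $\|\hat{x}_t-\bar{x}_{t+1}\|^2$, split the cross term via Young's inequality with weight $QL_f/2$ to isolate the client-drift sum, bound the remaining inner product by the chain \eqref{eq:44} (smoothness, $F\le\Phi$, proximal optimality), and then use $\|\nabla\Phi_{1/2L_f}(\bar{x}_t)\|^2 = 4L_f^2\|\hat{x}_t-\bar{x}_t\|^2$ together with \eqref{eq:45} to close. The only minor remark is that step (b) of \eqref{eq:44} uses merely $F(\hat{x}_t,\bar{y}_t)\le\Phi(\hat{x}_t)$ from the definition of $\Phi$, not concavity per se; concavity enters elsewhere in the analysis.
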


\begin{lemma} \label{lem:C2}
Suppose Assumptions \ref{ass:1}, \ref{ass:4}, \ref{ass:5}, \ref{ass:6} holds and the sequences $\{x_t, y_t\}$ are generated by Algorithm \ref{alg:2}, $c \leq \frac{1}{10 Q L_f}$ and $\hat{c} \leq \frac{1}{10 Q L_f}$, we have 
\begin{align}
&\sum_{i=1}^{N} \sum_{q=0}^{Q-1} \mathbb{E} \left[\left\|x_{t, q}^i - \bar{x}_t\right\|^2 + \left\|y_{t, q}^i - \bar{y}_t\right\|^2 \right] \nonumber\\
\leq& \sum_{i=1}^{N} \left[ 3 Q^2(\hat{c}^2 + c^2) (\sigma^2 + 6 Q \zeta^2) + 18 Q^3 \hat{c}^2 G_x^2 + 36 L_f Q^3 c^2  \mathbb{E}[ \Phi(\Tilde{x}_{k}) - F(\Tilde{x}_{k}, \bar{y}_t) ] \right] \nonumber 
\end{align} 
\begin{proof}

\begin{align}
& \mathbb{E}\left\|x_{t,q}^i - \bar{x}_t\right\|^2  = \mathbb{E}\left\|x_{t, Q - 1}^i - \bar{x}_t - \hat{c} \nabla_x f_i(x^i_{t,Q-1}, y^i_{t,Q-1}; \mathcal{B}^i_{t,Q-1})\right\|^2 \nonumber\\
\leq & \mathbb{E}\| x_{t, Q - 1}^i -\bar{x}_t - \hat{c} [\nabla_x f_i(x^i_{t, Q - 1}, y^i_{t, Q - 1}; \mathcal{B}^i_{t, Q - 1}) - \nabla_x f_i(x^i_{t, Q - 1}, y^i_{t, Q - 1}) + \nabla_x f_i(x^i_{t, Q - 1}, y^i_{t, Q - 1}) \nonumber\\ 
&- \nabla_x f_i(\bar{x}_{t}, \bar{y}_{t}) + \nabla_x f_i(\bar{x}_{t}, \bar{y}_{t}) - \nabla_x f(\bar{x}_{t}, \bar{y}_{t}) + \nabla_x f(\bar{x}_{t}, \bar{y}_{t})] \|^2 \nonumber\\
\leq & (1 + \frac{1}{2 Q - 1}) \mathbb{E}\left\|x_{t, Q - 1}^i - \bar{x}_t\right\|^2 + \mathbb{E}\left\|\hat{c} (\nabla_x f_i(x^i_{t, Q - 1}, y^i_{t, Q - 1}; \mathcal{B}^i_{t, Q - 1}) - \nabla_x f_i(x^i_{t, Q - 1}, y^i_{t, Q - 1}))\right\|^2 \nonumber\\
& + 6 Q \mathbb{E}\left\|\hat{c} \left( \nabla_x f_i(x^i_{t, Q - 1}, y^i_{t, Q - 1}) - \nabla_x f_i(\bar{x}_{t}, \bar{y}_{t}) \right)\right\|^2 + 6 Q \mathbb{E} \| \hat{c} \left(\nabla_x f_i(\bar{x}_{t}, \bar{y}_{t}) - \nabla_x F(\bar{x}_{t}, \bar{y}_{t}) \right) \|^2 \nonumber\\
& + 6 Q \| \hat{c} \nabla F\left(\bar{x}_t, \bar{y}_t\right) \|^2 \nonumber
\end{align}
\begin{align}
&\sum_{i=1}^{N}\mathbb{E}\left\|x_{t,q}^i - \bar{x}_t\right\|^2 \leq \sum_{i=1}^{N} \left[(1+\frac{1}{2 Q - 1}) \mathbb{E}\left\|x_{t, Q - 1}^i - \bar{x}_t\right\|^2 + \hat{c}^2 \sigma^2 \right. \nonumber\\
+& \left. 6 Q \hat{c}^2 L_f^2 \mathbb{E}\left[\left\|x_{t, Q - 1}^i - \bar{x}_t\right\|^2 + \left\|y_{t, Q - 1}^i - \bar{y}_t\right\|^2\right] + 6 Q \hat{c}^2 \zeta^2 + 6 Q \hat{c}^2 \|  \nabla_x F\left(\bar{x}_t, \bar{y}_t\right) \|^2 \right]
\end{align}
The error bound $\left\|y_{t, q}^i - \bar{y}_t\right\|^2$ follow a similar analysis but y updates with a fixed $\Tilde{x}_k$, 
\begin{align}
&\sum_{i=1}^{N} \mathbb{E} \left\|y_{t, q}^i - \bar{y}_t\right\|^2 = \sum_{i=1}^{N} \mathbb{E}\left[\left\|y_{t, Q - 1}^i - \bar{y}_t - c \nabla_y f_i(\Tilde{x}_{k}, y^i_{t,Q-1}; \mathcal{B}^i_{t,Q-1})\right\|^2\right] \nonumber\\
\leq & \sum_{i=1}^{N} \left[(1+\frac{1}{2 Q - 1}) \mathbb{E}\left\|y_{t, Q - 1}^i - \bar{y}_t\right\|^2 + c^2 \sigma^2 + 6 Q c^2 L_f^2 \mathbb{E} \left\|y_{t, Q - 1}^i - \bar{y}_t\right\|^2 + 6 Q c^2 \zeta^2 \right.] \nonumber\\
&+ \left. 6 Q c^2  \mathbb{E} \|  \nabla_y F\left(\Tilde{x}_k, \bar{y}_t\right) \|^2   \right] \label{eq:47}
\end{align}

and we can get
\begin{align}
&\sum_{i=1}^{N} \mathbb{E}\left[\left\|x_{t, q}^i - \bar{x}_t\right\|^2  + \left\|y_{t, q}^i - \bar{y}_t\right\|^2\right] \nonumber\\
\leq & \sum_{i=1}^{N} \left[\left(1 + \frac{1}{2 Q-1} + 6 Q \hat{c}^2 L_f^2 + 6 Q c^2 L_f^2\right) \mathbb{E}\left[\left\|x_{t,Q-1}^i - \bar{x}_t\right\|^2 + \left\|\bar{y}_{t, Q-1}^i - \bar{y}_t\right\|^2\right] + [\hat{c}^2 + c^2] \sigma^2 \right. \nonumber\\
&+ \left. 6 Q [\hat{c}^2 + c^2] \zeta^2 + 6 Q \hat{c}^2  \mathbb{E}  \| \nabla_x F\left(\bar{x}_t, \bar{y}_t\right) \|^2 + 6 Q c^2  \mathbb{E} \| \nabla_y F\left(\Tilde{x}_k, \bar{y}_t\right) \|^2 \right]\nonumber\\
\leq & \sum_{i=1}^{N} \left[ \left(1+\frac{1}{Q - 1}\right)  \mathbb{E} \left[\left\|\bar{x}_{t,Q-1}^i - \bar{x}_t\right\|^2 + \left\|\bar{y}_{t, Q-1}^i - \bar{y}_t\right\|^2\right] + (\hat{c}^2 + c^2) \sigma^2 + 6 Q (\hat{c}^2 + c^2) \zeta^2 \right. \nonumber\\
&+ \left. 6 Q\hat{c}^2  \mathbb{E}  \| \nabla_x F\left(\bar{x}_t, \bar{y}_t\right) \|^2 + 6 Q c^2  \mathbb{E}  \| \nabla_y F\left(\Tilde{x}_k, \bar{y}_t\right) \|^2 \right] \nonumber
\end{align}

where the last inequality follows from the fact that $\frac{1}{4 Q - 2} + 6 Q (\hat{c})^2 L^2 \leq \frac{1}{2 Q - 2}$ if $(\hat{c})^2 \leq \frac{1}{6\left(2 (Q)^2 - 3 Q + 1\right) L^2}$. Similarly, $\frac{1}{4 Q - 2} + 6 Q (c)^2 L^2 \leq \frac{1}{2 Q - 2}$ if $(c)^2 \leq \frac{1}{6\left(2 (Q)^2 - 3 Q + 1\right) L^2}$. Unrolling the recursion, we obtain:

\begin{align} \label{eq:48}
&\sum_{i=1}^{N} \mathbb{E} \left[\left\|x_{t, q}^i - \bar{x}_t\right\|^2 + \left\|y_{t, q}^i - \bar{y}_t\right\|^2 \right] \nonumber\\
\leq& \sum_{i=1}^{N} \sum_{q=0}^{Q-1}\left(1 + \frac{1}{Q - 1}\right)^q \left[(\hat{c}^2 + c^2) \sigma^2 + 6 Q (\hat{c}^2 + c^2) \zeta^2 + 6 Q \hat{c}^2  \mathbb{E}  \| \nabla_x F\left(\bar{x}_t, \bar{y}_t\right) \|^2 \right. \nonumber\\
&\left. \left. + 6 Qc^2  \mathbb{E}  \| \nabla_y F\left(\bar{x}_t, \bar{y}_t\right) \|^2\right]\right] \nonumber\\
\leq& \sum_{i=1}^{N}\left. \left(Q - 1\right)\left[\left(1 + \frac{1}{Q - 1}\right)^{Q} - 1\right]\left[(\hat{c}^2 + c^2) (\sigma^2 + 6 Q \zeta^2) + 6 Q\hat{c}^2  \mathbb{E} \| \nabla_x F\left(\bar{x}_t, \bar{y}_t\right) \|^2 \right.\right.\nonumber\\
&+\left.\left. 6 Qc^2  \mathbb{E}  \| \nabla_y F\left(\bar{x}_t, \bar{y}_t\right) \|^2 \right] \right] \nonumber\\
\leq& \sum_{i=1}^{N} \left[3 Q( \hat{c}^2 + c^2) (\sigma^2 + 6 Q \zeta^2) + 18 Q^2 \hat{c}^2  \mathbb{E} \| \nabla_x F\left(\bar{x}_t, \bar{y}_t\right) \|^2 + 18 Q^2 c^2  \mathbb{E} \| \nabla_y F\left(\Tilde{x}_k, \bar{y}_t\right) \|^2\right]
\end{align}
where the third inequality holds due to  $(1 + \frac{1}{Q - 1})^{Q - 1} \leq e \leq 3$. Furthermore, based on assumption \cref{ass:6} and \cref{lem:1} (3), we have 
\begin{align} 
& \mathbb{E}\left\|\nabla_x F\left(\bar{x}_{t}, \bar{y}_{t}\right)\right\|^2  \leq G_x^2 \nonumber\\
&\mathbb{E}\left\|\nabla_y F\left(\Tilde{x}_k,  \bar{y}_{t}\right)\right\|^2  \leq 2L_f \mathbb{E} [\Phi(\Tilde{x}_{k}) - F(\Tilde{x}_{k}, \bar{y}_t)] \nonumber
\end{align}
Putting , we get the final result. 
\begin{align} 
&\sum_{i=1}^{N} \sum_{q=0}^{Q-1} \mathbb{E} \left[\left\|x_{t, q}^i - \bar{x}_t\right\|^2 + \left\|y_{t, q}^i - \bar{y}_t\right\|^2 \right]\nonumber\\
\leq& \sum_{i=1}^{N}\left[ 3 Q^2(\hat{c}^2 + c^2) (\sigma^2 + 6 Q \zeta^2) + 18 Q^3 \hat{c}^2 G_x^2 + 36 L_f Q^3 c^2  \mathbb{E}[ \Phi(\Tilde{x}_{k}) - F(\Tilde{x}_{k}, \bar{y}_t) ]\right] \nonumber
\end{align}

\end{proof}
\end{lemma}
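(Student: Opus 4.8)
The plan is to prove this \emph{bounded-drift} estimate by deriving a one-step recursion for the per-worker consensus errors $\|x^i_{t,q}-\bar x_t\|^2$ and $\|y^i_{t,q}-\bar y_t\|^2$, coupling the two, unrolling them over the $Q$ local steps, and then summing over the local index $q$ and the worker index $i$. The crucial starting observation is that lines 12--13 of \cref{alg:2} overwrite every local iterate by the server average at the beginning of each round, so $x^i_{t,0}=\bar x_t$ and $y^i_{t,0}=\bar y_t$ and the drift is zero at $q=0$; this is what closes the recursion. During the inner loop $\bar x_t,\bar y_t$ are frozen, so $x^i_{t,q+1}-\bar x_t=(x^i_{t,q}-\bar x_t)-\hat c\,\nabla_x f_i(x^i_{t,q},y^i_{t,q};\mathcal B^i_{t,q})$ and likewise for $y$.

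For the $x$-recursion I would expand $\mathbb E\|x^i_{t,q+1}-\bar x_t\|^2$ after inserting $\pm\nabla_x f_i(x^i_{t,q},y^i_{t,q})$, $\pm\nabla_x f_i(\bar x_t,\bar y_t)$ and $\pm\nabla_x F(\bar x_t,\bar y_t)$ inside the stochastic gradient: the cross term with the conditionally zero-mean noise vanishes by \cref{ass:1}(i), and a Young's inequality with weight $2Q-1$ puts the coefficient $1+\tfrac1{2Q-1}$ on $\|x^i_{t,q}-\bar x_t\|^2$ while inflating each of the three remaining squared increments by at most $2Q\cdot 3=6Q$. These are then bounded respectively by $\hat c^2\sigma^2$ (variance, \cref{ass:1}(ii)), by $6Q\hat c^2L_f^2(\|x^i_{t,q}-\bar x_t\|^2+\|y^i_{t,q}-\bar y_t\|^2)$ ($L_f$-smoothness, \cref{ass:5}), by $6Q\hat c^2\zeta^2$ (heterogeneity, \cref{ass:1}(ii), after averaging over $i$), and by $6Q\hat c^2\|\nabla_x F(\bar x_t,\bar y_t)\|^2$. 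The $y$-recursion is analogous, the only difference being that $\nabla_y f_i$ is evaluated at the \emph{frozen snapshot} $\tilde x_k$, so the smoothness term produces only $6Qc^2L_f^2\|y^i_{t,q}-\bar y_t\|^2$ (no $x$-drift appears, since both gradients use $\tilde x_k$) and the centered term is $6Qc^2\|\nabla_y F(\tilde x_k,\bar y_t)\|^2$, which I would control by \cref{lem:1}(3) as $12L_fQc^2[\Phi(\tilde x_k)-F(\tilde x_k,\bar y_t)]$ instead of a uniform Lipschitz bound; this is precisely why that term survives in the final estimate.

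Adding the two recursions and writing $a^i_q:=\mathbb E\|x^i_{t,q}-\bar x_t\|^2+\mathbb E\|y^i_{t,q}-\bar y_t\|^2$, I obtain $a^i_{q+1}\le\big(1+\tfrac1{2Q-1}+6Q(\hat c^2+c^2)L_f^2\big)a^i_q$ plus a $q$-independent additive term. Here the step-size hypothesis $\hat c,c\le\tfrac1{10QL_f}$ is used to verify $\tfrac1{4Q-2}+6Q\hat c^2L_f^2\le\tfrac1{2Q-2}$ (and the same with $c$), so the geometric factor is at most $1+\tfrac1{Q-1}$. Unrolling from $a^i_0=0$ and using $(1+\tfrac1{Q-1})^{Q-1}\le e\le 3$ (hence $(Q-1)[(1+\tfrac1{Q-1})^{Q}-1]\le 3Q$) bounds $a^i_q$, uniformly in $q$, by $3Q$ times the additive term; replacing $\|\nabla_x F(\bar x_t,\bar y_t)\|^2$ by $G_x^2$ (\cref{ass:6}) and $\|\nabla_y F(\tilde x_k,\bar y_t)\|^2$ by $2L_f[\Phi(\tilde x_k)-F(\tilde x_k,\bar y_t)]$, then summing over $q=0,\dots,Q-1$ (an extra factor $Q$) and over $i$ gives the claimed bound with its $Q^2$ and $Q^3$ powers.

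The step I expect to be the main obstacle is the bookkeeping around the contraction factor: the $L_f$-smoothness estimate couples the $x$- and $y$-drifts, so the two recursions must be combined \emph{before} unrolling, and one then has to check that the prescribed step sizes keep the joint geometric factor below $1+\tfrac1{Q-1}$ so that $(1+\tfrac1{Q-1})^{Q}$ stays an absolute constant. A secondary subtlety is the snapshot asymmetry in the $y$-update, which forces the term $\mathbb E[\Phi(\tilde x_k)-F(\tilde x_k,\bar y_t)]$ into the bound rather than a clean gradient-norm term; this is harmless here but must be handled later using the snapshot period $S$.
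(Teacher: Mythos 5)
Your proposal follows essentially the same route as the paper's own proof: the same one-step recursion with inserted intermediate gradients, the same Young's inequality producing the $(1+\tfrac{1}{2Q-1})$ contraction factor and the $6Q$ inflation of the remaining terms, the same use of the step-size condition to keep the joint factor below $1+\tfrac{1}{Q-1}$, the same unrolling from zero drift via $(1+\tfrac{1}{Q-1})^{Q-1}\le e\le 3$, and the same final substitutions $\|\nabla_x F\|^2\le G_x^2$ and $\|\nabla_y F(\tilde x_k,\bar y_t)\|^2\le 2L_f[\Phi(\tilde x_k)-F(\tilde x_k,\bar y_t)]$ before summing over $q$ and $i$. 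The argument is correct and matches the paper, including the snapshot asymmetry you flag in the $y$-update.
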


\begin{lemma} \label{lem:C3}
Suppose Assumptions \ref{ass:1}, \ref{ass:4}, \ref{ass:5}, \ref{ass:6} holds and the sequences $\{\bar{x}_t, \bar{y}_t\}$ are generated by Algorithm \ref{alg:2}, $\max \{c \eta_y, c\} \leq \frac{1}{10 Q L_f}$  and let $\|\bar{y}_t\|^2 \leq D$, we have
\begin{align}
& \frac{1}{S} \sum_{t=k S}^{(k+1) S - 1} \mathbb{E}\left[\Phi\left(\bar{x}_t\right) - F\left(\bar{x}_t, \bar{y}_t\right)\right] \nonumber\\
\leq& 2  G_x \eta_x \hat{c} (S - 1) Q \sqrt{G_x^2 + \frac{\sigma^2}{N}} + \frac{D}{c \eta_y Q S}  +  \frac{c \eta_y \sigma^2}{N} + [2 (c \eta_y) Q L_f^2 +  L_f] [3 Q(c)^2 (\sigma^2 + 6 Q \zeta^2)]  \nonumber
\end{align}
\begin{proof}
When $t = k S$ to $(k + 1) S - 1$, where $k$ is a positive integer. Let $\Tilde{\mathbf{x}}_k$ is the latest snapshot iterate in Algorithm \ref{alg:2}. Then
\begin{align}
& \mathbb{E}\left[\Phi\left(\bar{x}_t\right)-F\left(\bar{x}_t, \bar{y}_t\right)\right] \nonumber\\
=& \mathbb{E}\left[F\left(\bar{x}_t, y^{*}\left(\bar{x}_t\right)\right) - F\left(\Tilde{x}_k, y^*\left(\Tilde{x}_k\right)\right)\right] + \mathbb{E}\left[F\left(\Tilde{x}_k, y^*\left(\Tilde{x}_k\right)\right) - F\left(\Tilde{x}_k, \bar{y}_t\right)\right] + \mathbb{E}\left[F\left(\Tilde{x}_k, \bar{y}_t\right) - F\left(\bar{x}_t, \bar{y}_t\right)\right] \nonumber\\
\leq& 2 G_x \mathbb{E}\left\|\Tilde{x}_k-\bar{x}_t\right\| + \mathbb{E}\left[F\left(\Tilde{x}_k, y^*\left(\Tilde{x}_k\right)\right) - F\left(\Tilde{x}_k, \bar{y}_t\right)\right] \nonumber\\
=& 2 G_x \mathbb{E}\left\|\Tilde{x}_k-\bar{x}_t\right\| + \mathbb{E}\left[\Phi\left(\Tilde{x}_k\right) - F\left(\Tilde{x}_k, \bar{y}_t\right)\right] \label{eq:49}
\end{align}
where, the last inequality follows from $G_x$-Lipschitz continuity of $F(\cdot, y)$ (Assumption \ref{ass:6}). For the first term, we have

\begin{align}
2 G_x \mathbb{E} \left\|\bar{x}_t - \Tilde{x}_k\right\| =& 2 G_x \mathbb{E} \|\frac{\hat{c} \eta_x}{N} \sum_{t = k S + 1}^{(k + 1) S - 1} \sum_{q = 0}^{Q - 1} \sum_{i = 1}^{N} \nabla_{x} f_i\left(x_{t,q}^i, y_{t,q}^i ; \mathcal{B}^i_{t,q}\right) \|\nonumber\\
=& 2 G_x \hat{c} \eta_x \sum_{t = k S + 1}^{(k + 1) S - 1} \sum_{q = 0}^{Q - 1} \mathbb{E} \|\frac{1}{N}  \sum_{i = 1}^{N} \nabla_{x} f_i\left(x_{t,q}^i, y_{t,q}^i ; \mathcal{B}^i_{t,q}\right) \|\nonumber\\
 \leq&  2  G_x \eta_x \hat{c} (S - 1) Q \sqrt{G_x^2+\frac{\sigma^2}{N}} \label{eq:50}
\end{align}

Next, we bound the second term in \eqref{eq:49}. During the updates of $\left\{y_t^i\right\}$, from $t=k S$ to $(k+1) S - 1$, the variable $\Tilde{x}_k$ keep constant. The update of variable $y$ likes maximizing a concave function $f\left(\Tilde{x}_k, \cdot\right)$ in the federated learning setting with local update as Q. Based on update step of variable $y$, we have

\begin{align}
&\left\|\bar{y}_{t+1} - y^*(\Tilde{x}_k)\right\|^2 =\left\|\bar{y}_{t} + c \eta_y \frac{1}{N} \sum_{i=1}^N \sum_{q=0}^{Q - 1} \nabla_{y} f_i\left(\Tilde{x}_{k}, y_{t,q}^i ; \mathcal{B}^i_{t,q}\right) - y^*(\Tilde{x}_k) \right\|^2 \nonumber\\
=&\left\|\bar{y}_{t} - y^*(\Tilde{x}_k) \right\|^2 + (c \eta_y)^2 \left\| \frac{1}{N} \sum_{i=1}^N \sum_{q=0}^{Q - 1} \nabla_{y} f_i\left(\Tilde{x}_{k}, y_{t,q}^i ; \mathcal{B}^i_{t,q}\right) \right\|^2 \nonumber\\
&+ 2 c \eta_y \left\langle \bar{y}_{t} - y^*(\Tilde{x}_k), \frac{1}{N} \sum_{i=1}^N \sum_{q=0}^{Q - 1} \nabla_{y} f_i\left(\Tilde{x}_{k}, y_{t, q}^i; \mathcal{B}^i_{t,q} \right)\right\rangle \nonumber
\end{align}
Taking expectation on the both side and we have
\begin{align}
\mathbb{E} \left\|\bar{y}_{t+1} - y^*(\Tilde{x}_k)\right\|^2 =& \mathbb{E}  \left\|\bar{y}_{t} - y^*(\Tilde{x}_k) \right\|^2 + (c \eta_y)^2 \mathbb{E}  \left\| \frac{1}{N} \sum_{i=1}^N \sum_{q=0}^{Q - 1} \nabla_{y} f_i\left(\Tilde{x}_{k}, y_{t,q}^i ; \mathcal{B}^i_{t,q}\right) \right\|^2 \nonumber\\
&+ 2 c \eta_y \mathbb{E} \left\langle \bar{y}_{t} - y^*(\Tilde{x}_k), \frac{1}{N} \sum_{i=1}^N \sum_{q=0}^{Q - 1} \nabla_{y} f_i\left(\Tilde{x}_{k}, y_{t,q}^i\right) \right \rangle 
\end{align}

Let $r_t = \bar{y}_{t} - y^*(\Tilde{x}_k)$, and using Lemma \ref{lem:1} (3) that 
\begin{align}
\mathbb{E}  \| \nabla_{y} F\left(\Tilde{x}_{k}, \bar{y}_{t}\right) \| \leq 2 L_f \mathbb{E} [\Phi(\Tilde{x}_{k}) - F(\Tilde{x}_{k}, \bar{y}_t)]
\end{align}

\begin{align}
&\mathbb{E}\left\|r_{t+1}\right\|^2 \nonumber\\
& \leq \left\|r_t\right\|^2 + (c \eta_y)^2  \mathbb{E} \| \frac{1}{N} \sum_{i=1}^N \sum_{q=0}^{Q - 1} [\nabla_{y} f_i\left(\Tilde{x}_{k}, y_{t,q}^i ; \mathcal{B}^i_{t,q}\right) - \nabla_{y} f_i\left(\Tilde{x}_{k}, y_{t,q}^i\right) + \nabla_{y} f_i\left(\Tilde{x}_{k}, y_{t,q}^i\right) \nonumber\\
&- \nabla_{y} f_i\left(\Tilde{x}_{k}, \bar{y}_{t}\right) + \nabla_{y} f_i\left(\Tilde{x}_{k}, \bar{y}_{t}\right)\|^2 + 2 c \eta_y \mathbb{E} \left\langle \bar{y}_{t} - y^*(\Tilde{x}_k), \frac{1}{N} \sum_{i=1}^N \sum_{q=0}^{Q - 1} \nabla_{y} f_i\left(\Tilde{x}_{k}, y_{t,q}^i\right) \right \rangle \nonumber\\
& \leq \left\|r_t\right\|^2 + (c \eta_y)^2 \mathbb{E} \|\frac{1}{N} \sum_{i=1}^N \sum_{q=0}^{Q - 1} [\nabla_{y} f_i\left(\Tilde{x}_{k}, y_{t,q}^i ; \mathcal{B}^i_{t,q}\right) - \nabla_{y} f_i\left(\Tilde{x}_{k}, y_{t,q}^i\right) \|^2 \nonumber\\
&+ 2 (c \eta_y)^2 \mathbb{E} \|\frac{1}{N} \sum_{i=1}^N \sum_{q=0}^{Q - 1} [\nabla_{y} f_i\left(\Tilde{x}_{k}, y_{t,q}^i\right) - \nabla_{y} f_i\left(\Tilde{x}_{k}, \bar{y}_{t}\right) ]\|^2 + 2 (c \eta_y)^2  \mathbb{E} \| \frac{1}{N} \sum_{i=1}^N \sum_{q=0}^{Q - 1} \nabla_{y} f_i\left(\Tilde{x}_{k}, \bar{y}_{t}\right)\|^2 \nonumber\\
& + 2 c \eta_y \frac{1}{N} \sum_{i=1}^N \sum_{q=0}^{Q - 1} \mathbb{E} \left\langle \bar{y}_{t} - y^i_{t,q},  \nabla_{y} f_i\left(\Tilde{x}_{k}, y_{t,q}^i\right) \right \rangle + 2 c \eta_y \frac{1}{N} \sum_{i=1}^N \sum_{q=0}^{Q - 1} \mathbb{E} \left\langle y^i_{t,q} - y^*(\Tilde{x}_k),  \nabla_{y} f_i\left(\Tilde{x}_{k}, y_{t,q}^i\right) \right \rangle \nonumber\\ 
& \stackrel{(a)}{\leq} \left\|r_t \right\|^2 +  \frac{(c \eta_y)^2 \sigma^2 Q}{N}  + \frac{2 (c \eta_y)^2 Q L_f^2}{N} \sum_{i=1}^N \sum_{q=0}^{Q - 1} \mathbb{E} \| y_{t,q}^i -  \bar{y}_{t}\|^2 + 4 (c \eta_y)^2 Q^2  L_f \mathbb{E} [\Phi(\Tilde{x}_{k}) - F(\Tilde{x}_{k}, \bar{y}_t)] \nonumber\\
& + 2 c \eta_y \frac{1}{N} \sum_{i=1}^N \sum_{q=0}^{Q - 1} \mathbb{E} [f_i\left(\Tilde{x}_{k}, \bar{y}_{t}\right) - f_i\left(\Tilde{x}_{k}, y_{t,q}^i\right) + \frac{L_f}{2} \|\bar{y}_{t} - y_{t,q}^i \|^2 + f_i\left(\Tilde{x}_{k}, y_{t,q}^i\right) - f_i\left(\Tilde{x}_{k}, y^*(\Tilde{x}_k)\right)] \nonumber\\ 
&= \left\|r_t\right\|^2 +  \frac{(c \eta_y)^2 \sigma^2 Q}{N}  + \frac{2 (c \eta_y)^2 Q L_f^2}{N} \sum_{i=1}^N \sum_{q=0}^{Q - 1} \mathbb{E} \| y_{t,q}^i -  \bar{y}_{t}\|^2 + 4 (c \eta_y)^2 Q^2  L_f \mathbb{E} [\Phi(\Tilde{x}_{k}) - F(\Tilde{x}_{k}, \bar{y}_t)] \nonumber\\
& + 2 c \eta_y \frac{1}{N} \sum_{i=1}^N \sum_{q=0}^{Q - 1} \mathbb{E} [F\left(\Tilde{x}_{k}, \bar{y}_{t}\right) + \frac{L_f}{2} \|\bar{y}_{t} - y_{t,q}^i \|^2 - F\left(\Tilde{x}_{k}, y^*(\Tilde{x}_k)\right)] \nonumber\\
&= \left\|r_t\right\|^2 +  \frac{(c \eta_y)^2 \sigma^2 Q}{N}  + \frac{2 (c \eta_y)^2 Q L_f^2}{N} \sum_{i=1}^N \sum_{q=0}^{Q - 1} \mathbb{E} \| y_{t,q}^i -  \bar{y}_{t}\|^2 + 4 (c \eta_y)^2 Q^2 L_f \mathbb{E} [\Phi(\Tilde{x}_{k}) - F(\Tilde{x}_{k}, \bar{y}_t)] \nonumber\\
& + c \eta_y L_f \frac{1}{N} \sum_{i=1}^N \sum_{q=0}^{Q - 1} \mathbb{E} \|\bar{y}_{t} - y_{t,q}^i \|^2  - 2c \eta_y Q \mathbb{E} [ \Phi\left(\Tilde{x}_{k}\right) - F\left(\Tilde{x}_{k}, \bar{y}_{t}\right)] \nonumber\\
&\leq \left\|r_t\right\|^2 +  \frac{(c \eta_y)^2 \sigma^2 Q}{N}  + \frac{2 (c \eta_y)^2 Q L_f^2 + c \eta_y L_f}{N} \sum_{i=1}^N \sum_{q=0}^{Q - 1} \mathbb{E} \| y_{t,q}^i -  \bar{y}_{t}\|^2 \nonumber\\
&- 2c \eta_y Q [1- 2 c \eta_y Q L_f] \mathbb{E} [\Phi(\Tilde{x}_{k}) - F(\Tilde{x}_{k}, \bar{y}_t)] \nonumber\\
&\stackrel{(b)}{\leq} \left\|r_t\right\|^2 +  \frac{(c \eta_y)^2 \sigma^2 Q}{N}  + [2 (c \eta_y)^2 Q L_f^2 + c \eta_y L_f] [3 Q^2(c )^2 (\sigma^2 + 6 Q \zeta^2)] \nonumber\\
&- 2c \eta_y Q [1 - 2 c \eta_y Q L_f - 36 L_f^3 Q^3(c \eta_y) c^2 - 18 L_f^2 Q^2 c^2] \mathbb{E} [\Phi(\Tilde{x}_{k}) - F(\Tilde{x}_{k}, \bar{y}_t)] \nonumber\\
&\stackrel{(c)}{\leq}\left\|r_t\right\|^2 +  \frac{(c \eta_y)^2 \sigma^2 Q}{N} + [2 (c \eta_y)^2 Q L_f^2 + c \eta_y L_f] [3 Q^2 c^2 (\sigma^2 + 6 Q \zeta^2)] \nonumber\\
&- c \eta_y Q \mathbb{E} [\Phi(\Tilde{x}_{k}) - F(\Tilde{x}_{k}, \bar{y}_t)] \label{eq:53}
\end{align}
where (a) holds due to Lemma \ref{lem:1} (3) that 
\begin{align}
\mathbb{E}  \| \nabla_{y} F\left(\Tilde{x}_{k}, \bar{y}_{t}\right) \| \leq 2 L_f \mathbb{E} [\Phi(\Tilde{x}_{k}) - F(\Tilde{x}_{k}, \bar{y}_t)] \nonumber
\end{align}
and the smoothness of $f_i(\Tilde{x}_k, \cdot)$
\begin{align}
\mathbb{E} \left\langle \bar{y}_{t} - y^i_{t,q},  \nabla_{y} f_i\left(\Tilde{x}_{k}, y_{t,q}^i\right) \right \rangle \leq f_i\left(\Tilde{x}_{k}, \bar{y}_{t}\right) - f_i\left(\Tilde{x}_{k}, y_{t,q}^i\right) + \frac{L_f}{2} \|\bar{y}_{t} - y_{t,q}^i \|^2 \nonumber
\end{align} and concave property that 
\begin{align}
\mathbb{E} \left\langle y^i_{t,q} - y^*(\Tilde{x}_k),  \nabla_{y} f_i\left(\Tilde{x}_{k}, y_{t,q}^i\right) \right \rangle \leq f_i\left(\Tilde{x}_{k}, y_{t,q}^i\right) - f_i\left(\Tilde{x}_{k}, y^*(\Tilde{x}_k)\right) \nonumber
\end{align}
and (b) holds due to the \eqref{eq:47} 
and following the analysis in \eqref{eq:48}, we have
\begin{align} 
&\frac{1}{N} \sum_{i=1}^{N} \sum_{q=0}^{Q-1} \mathbb{E} \left\|y_{t, q}^i - \bar{y}_t\right\|^2 \leq 3 Q^2 c^2 (\sigma^2 + 6 Q \zeta^2) + 36 L_f Q^3 c^2  \mathbb{E}[ \Phi(\Tilde{x}_{k}) - F(\Tilde{x}_{k}, \bar{y}_t) ] \nonumber
\end{align}
and (c) holds due to $\max \{c \eta_y, c\} \leq \frac{1}{10 Q L_f}$ and $[1 - 2 c \eta_y Q L_f - 36 L_f^3 Q^3(c \eta_y) (c)^2 - 18 L_f^2 Q^2 (c)^2] \geq \frac{1}{2}$

Furthermore, rearranging the terms in \eqref{eq:53}, and summing from $t=k S$ to $(k+1) S - 1$, we have
\begin{align}
&\frac{1}{S} \sum_{t=k S}^{(k+1) S - 1} \mathbb{E}\left[\Phi\left( \Tilde{x}_k \right) - f\left(\Tilde{x}_k, \bar{y}_t \right) \right] \nonumber\\
\leq&
\frac{1}{S} \sum_{t=k S}^{(k+1) S - 1}  \left[\frac{\left\|r_{t}\right\|^2 - \left\|r_{t + 1}\right\|^2}{c \eta_y Q}  +  \frac{(c \eta_y) \sigma^2}{N} + [2 (c \eta_y) Q L_f^2 +  L_f] [3 Q c^2 (\sigma^2 + 6 Q \zeta^2)] \right] \label{eq:54}
\end{align}
Putting \eqref{eq:50} and \eqref{eq:54} into \eqref{eq:49}, let $\|\bar{y}_t\|^2 \leq D$ for all t. we have
\begin{align}
& \frac{1}{S} \sum_{t=k S}^{(k+1) S - 1} \mathbb{E}\left[\Phi\left(\bar{x}_t\right)-F\left(\bar{x}_t, \bar{y}_t\right)\right] \nonumber\\
\leq& 2  G_x \eta_x \hat{c} (S - 1) Q \sqrt{G_x^2 + \frac{\sigma^2}{N}} + \frac{1}{S} \sum_{t=k S}^{(k+1) S - 1} \mathbb{E}\left[\Phi\left( \Tilde{x}_k \right) - F\left(\Tilde{x}_k, \bar{y}_t\right)\right] \nonumber\\
\leq& 2  G_x \eta_x \hat{c} (S - 1) Q \sqrt{G_x^2 + \frac{\sigma^2}{N}} + \frac{D}{c \eta_y Q S}  +  \frac{c \eta_y \sigma^2}{N} + [2 (c \eta_y) Q L_f^2 +  L_f] [3 Q(c)^2 (\sigma^2 + 6 Q \zeta^2)] \nonumber
\end{align}
\end{proof}
\end{lemma}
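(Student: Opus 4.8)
The plan is to reduce the quantity $\Phi(\bar x_t)-F(\bar x_t,\bar y_t)$, which is measured at the drifting iterate $\bar x_t$, to the same gap evaluated at the \emph{fixed} snapshot $\tilde x_k$, plus a controllable drift error, and then to exploit the fact that throughout the window $t\in[kS,(k+1)S-1]$ the $y$-updates are pure ascent steps on the concave map $F(\tilde x_k,\cdot)$. First I would insert $F(\tilde x_k,y^*(\tilde x_k))$ and $F(\tilde x_k,\bar y_t)$ and split the gap into three pieces. The outer two pieces, $\Phi(\bar x_t)-\Phi(\tilde x_k)$ and $F(\tilde x_k,\bar y_t)-F(\bar x_t,\bar y_t)$, are each bounded by $G_x\|\bar x_t-\tilde x_k\|$ using the $G_x$-Lipschitz continuity of $F(\cdot,y)$ from \Cref{ass:6} (and hence of $\Phi$), while the middle piece is exactly $\Phi(\tilde x_k)-F(\tilde x_k,\bar y_t)$ after using $F(\tilde x_k,y^*(\tilde x_k))=\Phi(\tilde x_k)$. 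This yields the clean decomposition $\mathbb{E}[\Phi(\bar x_t)-F(\bar x_t,\bar y_t)]\le 2G_x\,\mathbb{E}\|\bar x_t-\tilde x_k\|+\mathbb{E}[\Phi(\tilde x_k)-F(\tilde x_k,\bar y_t)]$.

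For the drift term I would write $\bar x_t-\tilde x_k$ as the accumulated sum of the averaged local $x$-updates since the last snapshot, apply the triangle inequality followed by Jensen's inequality $\mathbb{E}\|Z\|\le\sqrt{\mathbb{E}\|Z\|^2}$, and bound each averaged stochastic gradient by $\sqrt{G_x^2+\sigma^2/N}$ using \Cref{ass:6} together with the variance bound in \Cref{ass:1}. Since at most $S-1$ rounds of $Q$ local steps have elapsed, this produces the uniform bound $2G_x\,\mathbb{E}\|\bar x_t-\tilde x_k\|\le 2G_x\eta_x\hat c(S-1)Q\sqrt{G_x^2+\sigma^2/N}$, which survives averaging over the window unchanged.

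The core of the argument is controlling the averaged snapshot gap $\tfrac1S\sum_t\mathbb{E}[\Phi(\tilde x_k)-F(\tilde x_k,\bar y_t)]$. I would track $r_t=\bar y_t-y^*(\tilde x_k)$ and expand $\|r_{t+1}\|^2$ through the $\bar y$-update. After taking expectations, the cross term is handled by splitting each inner product $\langle\bar y_t-y^*(\tilde x_k),\nabla_y f_i(\tilde x_k,y^i_{t,q})\rangle$ into $\langle\bar y_t-y^i_{t,q},\cdot\rangle$, bounded via smoothness of $f_i(\tilde x_k,\cdot)$, and $\langle y^i_{t,q}-y^*(\tilde x_k),\cdot\rangle$, bounded by concavity (\Cref{ass:4}); together these telescope into $-(\Phi(\tilde x_k)-F(\tilde x_k,\bar y_t))$ plus a $y$-consensus error. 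The gradient-squared term is absorbed using \Cref{lem:1}(3), $\|\nabla_y F(\tilde x_k,\bar y_t)\|^2\le 2L_f[\Phi(\tilde x_k)-F(\tilde x_k,\bar y_t)]$, and the sampling noise contributes $(c\eta_y)^2\sigma^2 Q/N$. Substituting the $y$-consensus bound derived exactly as in \eqref{eq:47} and \eqref{eq:48} of \Cref{lem:C2}, and invoking $\max\{c\eta_y,c\}\le 1/(10QL_f)$ to force $1-2c\eta_y QL_f-36L_f^3Q^3(c\eta_y)c^2-18L_f^2Q^2c^2\ge\tfrac12$, collapses the recursion to $\mathbb{E}\|r_{t+1}\|^2\le\|r_t\|^2+(c\eta_y)^2\sigma^2Q/N+[2(c\eta_y)^2QL_f^2+c\eta_y L_f][3Q^2c^2(\sigma^2+6Q\zeta^2)]-c\eta_y Q\,\mathbb{E}[\Phi(\tilde x_k)-F(\tilde x_k,\bar y_t)]$.

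Finally I would rearrange to isolate the gap term, divide by $c\eta_y Q$, and sum over $t=kS,\dots,(k+1)S-1$. The $\|r_t\|^2$ differences telescope to $\|r_{kS}\|^2-\|r_{(k+1)S}\|^2$, which I upper-bound by $\|r_{kS}\|^2\le D$ via the boundedness assumption $\|\bar y_t\|^2\le D$; after dividing by $S$ this gives the $D/(c\eta_y QS)$ term, while the constant-per-step residuals give $c\eta_y\sigma^2/N$ and $[2(c\eta_y)QL_f^2+L_f][3Qc^2(\sigma^2+6Q\zeta^2)]$. Adding the drift bound yields the claim. The main obstacle I anticipate is the feedback coupling in the third step: the $y$-consensus error itself contains a positive multiple of $\Phi(\tilde x_k)-F(\tilde x_k,\bar y_t)$, so the net descent coefficient on the gap stays strictly positive only after carefully collecting all four negative-coefficient contributions, which is precisely where the stepsize restriction $\max\{c\eta_y,c\}\le 1/(10QL_f)$ must be used sharply.
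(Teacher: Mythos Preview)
Your proposal is correct and follows essentially the same route as the paper: the three-term decomposition via the snapshot $\tilde x_k$ using $G_x$-Lipschitz continuity, the drift bound on $\|\bar x_t-\tilde x_k\|$ via accumulated updates, the $\|r_t\|^2$ recursion where the cross term is split by smoothness and concavity and $\|\nabla_y F\|^2$ is controlled by \Cref{lem:1}(3), substitution of the $y$-consensus estimate from \Cref{lem:C2}, the step-size check to keep a net $\tfrac12$ coefficient on the gap, and the final telescoping with $\|r_{kS}\|^2\le D$ are all exactly the steps the paper carries out.
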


\subsection{Proof of Theorem}
In this part, we show the Proof of Theorem \ref{thm:2}.
\begin{proof}
Recall Lemma \ref{lem:C1}
\begin{align}
& \mathbb{E}\left[\Phi_{1 / 2 L_f}\left(\bar{x}_{t+1}\right)\right] \leq  \mathbb{E}\left[\Phi_{1 / 2 L_f}\left(\bar{x}_t\right)\right] + Q L_f (\hat{c} \eta_x)^2 (Q G_x^2 + \frac{ \sigma^2}{N}) - \frac{Q \hat{c} \eta_x}{8} \mathbb{E}\left\|\nabla \Phi_{1 / 2 L_f}\left(\bar{x}_t\right)\right\|^2  \nonumber\\ 
&+ \frac{2 \hat{c} \eta_x L_f^2}{N} \sum_{i=1}^{N}\sum_{q=0}^{Q - 1} \mathbb{E} (\|x^i_{t,q} - \bar{x}_t \|^2 + \|y^i_{t,q} - \bar{y}_t\|^2 )  + 2 Q \hat{c} \eta_x L_f \mathbb{E}\left[\Phi\left(\bar{x}_t\right) - F\left(\bar{x}_t, \bar{y}_t\right)\right]  \nonumber
\end{align}

\begin{align}
&\frac{1}{T} \sum_{t=0}^{T - 1} \mathbb{E}\left\|\nabla \Phi_{1 / 2 L_f}\left(\bar{x}_t\right)\right\|^2  \leq 8 \frac{\mathbb{E}\left[\Phi_{1 / 2 L_f}\left(\bar{x}_0\right)\right] - \mathbb{E}\left[\Phi_{1 / 2 L_f}\left(\bar{x}_{T }\right)\right] }{Q \hat{c} \eta_x T} + 8 L_f (\hat{c} \eta_x) (Q G_x^2 + \frac{ \sigma^2}{N})  \nonumber\\ 
&+ \frac{16 L_f^2}{Q N T} \sum_{t=0}^{T - 1} \sum_{i=1}^{N}\sum_{q=0}^{Q - 1} \mathbb{E} (\|x^i_{t,q} - \bar{x}_t \|^2 + \|y^i_{t,q} - \bar{y}_t\|^2 )  + \frac{16 L_f}{T} \sum_{t=0}^{T - 1} \mathbb{E}\left[\Phi\left(\bar{x}_t\right) - F\left(\bar{x}_t, \bar{y}_t\right)\right]  \nonumber\\
& \stackrel{(a)}{\leq} 8 \frac{\mathbb{E}\left[\Phi_{1 / 2 L_f}\left(\bar{x}_0\right)\right] - \mathbb{E}\left[\Phi_{1 / 2 L_f}\left(\bar{x}_{T }\right)\right] }{Q \hat{c} \eta_x T} + 8 L_f (\hat{c} \eta_x) (Q G_x^2 + \frac{ \sigma^2}{N})  \nonumber\\ 
&+ 48 L_f^2 Q[\hat{c}^2 + c^2] (\sigma^2 + 6 Q \zeta^2) + 288 L_f^2 Q^2 \hat{c}^2 G_x^2 \nonumber\\
&+ \frac{576 L^3_f Q^2 c^2 }{T} \sum_{k=0}^{T / S} \sum_{t = k S}^{(k + 1) S - 1}\mathbb{E}[ \Phi(\Tilde{x}_{k}) - F(\Tilde{x}_{k}, \bar{y}_t) ] + \frac{16 L_f}{T} \sum_{t=0}^{T - 1} \mathbb{E}\left[\Phi\left(\bar{x}_t\right) - F\left(\bar{x}_t, \bar{y}_t\right)\right]  \nonumber\\
& \stackrel{(b)}{\leq} 8 \frac{\mathbb{E}\left[\Phi_{1 / 2 L_f}\left(\bar{x}_0\right)\right] - \mathbb{E}\left[\Phi_{1 / 2 L_f}\left(\bar{x}_{T }\right)\right] }{Q \hat{c} \eta_x T} + 8 L_f (\hat{c} \eta_x) (Q G_x^2 + \frac{ \sigma^2}{N})  \nonumber\\ 
&+ 48 L_f^2 Q[ \hat{c}^2 + c^2] (\sigma^2 + 6 Q \zeta^2) + 288 L_f^2 Q^2 \hat{c}^2 G_x^2 \nonumber\\
&+ 576 L^3_f Q^2 c^2 [\frac{D}{c \eta_y Q S}  +  \frac{(c \eta_y) \sigma^2}{N} + 6 L_f Q^2 c^2 (\sigma^2 + 6 \zeta^2)  ]\nonumber\\
&+ 32 L_f  G_x \eta_x \hat{c} S Q \sqrt{G_x^2 + \frac{\sigma^2}{N}} + \frac{16 L_f D}{c \eta_y Q S}  +  \frac{16 L_f (c \eta_y) \sigma^2}{N} + 96 L^2_f Q^2 c^2 (\sigma^2 + 6 \zeta^2)]  \nonumber
\end{align}
where (a) holds due to \cref{lem:C2}; (b) is using the \eqref{eq:54} and \cref{lem:C3}.

Let 
$c = \hat{c} = \frac{1}{10 L_f Q T^{1/3}}$, $Q = \frac{T^{1/3}}{N}$, $\hat{c} \eta_x =
\frac{N}{10 L_f T}$, $c \eta_y 
= \frac{1}{10 L_f Q} = \frac{N}{10 L_f T^{1/3}}$, $S 
= T^{1/3}$ 

\begin{align}
&\frac{1}{T} \sum_{t=0}^{T - 1} \mathbb{E}\left\|\nabla \Phi_{1 / 2 L_f}\left(\bar{x}_t\right)\right\|^2   \nonumber\\
\leq& \frac{80 L_f \Delta}{T^{1/3}} + \frac{4 (G_x^2 + \sigma^2)}{5 T^{2/3}} + \frac{24 (\sigma^2 + 6 \zeta^2) }{25 T^{2/3}} + \frac{72 G_x^2}{25 T^{2/3}} \nonumber\\
+& \frac{144 L_f}{25 T^{2/3}} [\frac{10 L_f D}{T^{1/3}}  +  \frac{\sigma^2}{10 L_f T^{1/3}} +  \frac{3 (\sigma^2 + 6 \zeta^2)}{50 L_f T^{2/3}}]\nonumber\\
+& \frac{16 G_x}{5 T^{1/3}} \sqrt{G_x^2 + \frac{\sigma^2}{N}} + \frac{160 L^2_f D}{T^{1/3}}  +  \frac{8 \sigma^2}{5 T^{1/3}} +  \frac{24 (\sigma^2 + 6 \zeta^2)}{25 T^{2/3}}
\end{align}

To make $\frac{1}{T} \sum_{t=0}^{T - 1} \mathbb{E}\left\|\nabla \Phi_{1 / 2 L_f}\left(\bar{x}_t\right)\right\|^2  \leq \varepsilon^2$, we have communication complexity $T = O(\varepsilon^{-6})$. Set $b = O(1)$,  $QT = N^{-1}\varepsilon^{-8}$
\end{proof}

\section{FedSGDA-M in Nonconvex-PL (NC-PL) setting}
In this section, we provide the detailed convergence analysis of FedSGDA-M. For convenience, in the subsequent analysis, we define $f_{t,i} = f_i(x^i_{t}, y^i_{t})$, $\nabla_{x} f_{t,i} = \nabla_{x}  f_i(x^i_{t}, y^i_{t})$ and $\nabla_{y} f_{t,i} = \nabla_{y}  f_i(x^i_{t}, y^i_{t})$. 
\begin{align}
\bar{x}_t = \frac{1}{N}\sum_{i=1}^{N} x^i_{t} \quad \bar{y}_t = \frac{1}{N}\sum_{i=1}^{N} y^i_{t} \quad \bar{u}_t = \frac{1}{N}\sum_{i=1}^{N} u^i_{t} \quad \bar{v}_t = \frac{1}{N}\sum_{i=1}^{N} v^i_{t} \nonumber 
\end{align}
\begin{align}
\nabla_{x} \bar{F}_t = \frac{1}{N}\sum_{i=1}^{N} \nabla_{x} f_{t,i} = \frac{1}{N}\sum_{i=1}^{N}  \nabla_{x} f_i(x^i_{t}, y^i_{t}) \quad \nabla_{y} \bar{F}_t = \frac{1}{N}\sum_{i=1}^{N} \nabla_{y} f_{t,i} = \frac{1}{N}\sum_{i=1}^{N}  \nabla_{y} f_i(x^i_{t}, y^i_{t}) \nonumber
\end{align}
\begin{align}
\nabla_x F(\bar{x}_{t}, \bar{y}_{t}) = \frac{1}{N} \sum_{i=1}^{N} \nabla_{x} f_i(\bar{x}_{t}, \bar{y}_{t}) \quad \nabla_y F(\bar{x}_{t}, \bar{y}_{t}) = \frac{1}{N}\sum_{i=1}^{N} \nabla_{y} f_i(\bar{x}_{t}, \bar{y}_{t}) \nonumber
\end{align}
$s_t$ denotes the $s_{t}=\lfloor t / Q\rfloor $. 

\subsection{Important Conclusions}
\begin{lemma} \label{lem:3}
Assume sequences $\{\bar{x}_t, \bar{y}_t\}_{t=0}^{T}$ are generated from Algorithm \ref{alg:1},  $i \in [N] $, we have  
\begin{align}
&\mathbb{E}\|\nabla_x f_i (x^i_{t}, y^i_{t}; \mathcal{B}^i_{t}) - \nabla_x f_{t,i} \|^2 \leq \frac{\sigma^2}{b} \label{eq:14}\\ 
&\mathbb{E}\|\nabla_y f_i (x^i_{t}, y^i_{t}; \mathcal{B}^i_{t}) - \nabla_y f_{t,i} \|^2 \leq \frac{\sigma^2}{b} \label{eq:15}\\
&\sum_{i=1}^{N}\mathbb{E}\|\nabla_x f_{t,i} -   \nabla_x \bar{F}_{t} \|^2
\leq 6 L_f^{2} \sum_{i=1}^{N}\mathbb{E}[\|x^i_{t} - \bar{x}_{t}\|^{2} + \|y^i_{t} - \bar{y}_{t}\|^{2}] + 3N \zeta^{2} \label{eq:16}\\
&\sum_{i=1}^{N} \mathbb{E}\|\nabla_y f_{t,i} -  \nabla_y \bar{F}_{t} \|^2
\leq 6 L_f^{2} \sum_{i=1}^{N}\mathbb{E}[\|x^i_{t} - \bar{x}_{t}\|^{2} + \|y^i_{t} - \bar{y}_{t}\|^{2}] + 3N \zeta^{2} \label{eq:17}
\end{align}
\end{lemma}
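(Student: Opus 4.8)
The plan is to treat the four inequalities in two groups: \eqref{eq:14}--\eqref{eq:15} are standard mini-batch variance bounds, and \eqref{eq:16}--\eqref{eq:17} are ``consensus-error'' bounds relating local population gradients to their network average. In each case I will only spell out the $\nabla_x$ version, since the $\nabla_y$ version follows verbatim after replacing $\nabla_x$ by $\nabla_y$.

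For \eqref{eq:14}, recall from the notation that $\nabla_x f_i(x^i_t,y^i_t;\mathcal{B}^i_t) = \frac1b\sum_{j=1}^b \nabla_x f_i(x^i_t,y^i_t;\xi_i^j)$, so
\[
\nabla_x f_i(x^i_t,y^i_t;\mathcal{B}^i_t) - \nabla_x f_{t,i} = \frac1b\sum_{j=1}^b\bigl(\nabla_x f_i(x^i_t,y^i_t;\xi_i^j) - \nabla_x f_i(x^i_t,y^i_t)\bigr).
\]
Conditioning on the history (so that $x^i_t,y^i_t$ are fixed), each summand is mean-zero by Assumption \ref{ass:1}(i) and the $\xi_i^j$ are i.i.d.; hence the cross terms vanish in expectation and the squared norm reduces to the $b$ diagonal terms. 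Each diagonal term has second moment at most $\sigma^2$ by Assumption \ref{ass:1}(ii) (using that the $x$-block norm is bounded by the full-gradient norm), so the total is at most $\tfrac{1}{b^2}\cdot b\sigma^2 = \sigma^2/b$. Taking total expectation closes \eqref{eq:14}, and \eqref{eq:15} is identical.

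For \eqref{eq:16}, I fix $t$ and use the three-way decomposition, for each $i$,
\[
\nabla_x f_{t,i} - \nabla_x\bar F_t = \underbrace{\bigl(\nabla_x f_i(x^i_t,y^i_t) - \nabla_x f_i(\bar x_t,\bar y_t)\bigr)}_{A_i} + \underbrace{\bigl(\nabla_x f_i(\bar x_t,\bar y_t) - \nabla_x F(\bar x_t,\bar y_t)\bigr)}_{B_i} + \underbrace{\bigl(\nabla_x F(\bar x_t,\bar y_t) - \nabla_x\bar F_t\bigr)}_{C},
\]
apply $\|A_i+B_i+C\|^2 \le 3\|A_i\|^2+3\|B_i\|^2+3\|C\|^2$, and sum over $i\in[N]$. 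For $\sum_i\|A_i\|^2$, Assumption \ref{ass:3} and $\|(x^i_t,y^i_t)-(\bar x_t,\bar y_t)\|^2 = \|x^i_t-\bar x_t\|^2+\|y^i_t-\bar y_t\|^2$ give $\sum_i\|A_i\|^2 \le L_f^2\sum_i(\|x^i_t-\bar x_t\|^2+\|y^i_t-\bar y_t\|^2)$. For $\sum_i\|B_i\|^2$, Assumption \ref{ass:1}(ii) directly yields $\sum_i\|B_i\|^2 \le N\zeta^2$. For the last term, $C = \tfrac1N\sum_{j=1}^N\bigl(\nabla_x f_j(\bar x_t,\bar y_t)-\nabla_x f_j(x^j_t,y^j_t)\bigr)$ does not depend on $i$, so by Jensen (convexity of $\|\cdot\|^2$) and Assumption \ref{ass:3}, $\|C\|^2 \le \tfrac1N\sum_j L_f^2(\|x^j_t-\bar x_t\|^2+\|y^j_t-\bar y_t\|^2)$, whence $\sum_i\|C\|^2 = N\|C\|^2 \le L_f^2\sum_j(\|x^j_t-\bar x_t\|^2+\|y^j_t-\bar y_t\|^2)$. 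Adding the three contributions with their prefactor $3$ gives exactly $6L_f^2\sum_i(\|x^i_t-\bar x_t\|^2+\|y^i_t-\bar y_t\|^2)+3N\zeta^2$; taking expectations yields \eqref{eq:16}, and \eqref{eq:17} is the same with $\nabla_y$.

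The only real subtlety, rather than an obstacle, is the bookkeeping in \eqref{eq:16}--\eqref{eq:17}: one must use the three-term split (not a two-term split) so the smoothness contribution appears \emph{both} in $A_i$ and, after the Jensen step, in $C$, which is what produces the stated constant $6L_f^2$ instead of $2L_f^2$; and one must check that $\sum_i\|C\|^2$ collapses to a single sum over $j$, the $\sum_i$ cancelling the $1/N$ inside $C$. No analytic input beyond Assumptions \ref{ass:1} and \ref{ass:3} is required.
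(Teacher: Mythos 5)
Your proof is correct and follows essentially the same route as the paper: the same i.i.d. mean-zero expansion for the mini-batch variance bounds, and the same three-term decomposition (local-vs-averaged-iterate smoothness term, heterogeneity term, and the Jensen-collapsed average term) with the factor $3$ from Young's inequality producing the $6L_f^2$ and $3N\zeta^2$ constants. Your explicit remark on why the smoothness contribution appears twice is a faithful account of what the paper's computation does implicitly.
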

\begin{proof}
(1) we have
\begin{align}
&\mathbb{E}\|\nabla_x f_i (x^i_{t}, y^i_{t}; \mathcal{B}^i_{t}) - \nabla_x f_{t,i} \|^2 = \mathbb{E}\|\frac{1}{b} \sum_{\xi_{t,i} \in \mathcal{B}^i_{t}}(\nabla_x f_i (x^i_{t}, y^i_{t}; \xi_{t,i}) - \nabla_x f_{t,i}) \|^2 \nonumber\\
=& \frac{1}{b^2} \sum_{\xi_{t,i} \in \mathcal{B}^i_{t}} \mathbb{E}\|\nabla_x f_i (x^i_{t}, y^i_{t}; \xi_{t,i}) - \nabla_x f_{t,i} \|^2 
\leq \frac{\sigma^2}{b}  \nonumber
\end{align}
where the second equality is due to $ \mathbb{E}_{\mathcal{\xi}_{t,i}}[\nabla_x f_i (x^i_{t}, y^i_{t}; \xi_{t,i}) - \nabla_x f_{t,i}] = 0$ and the last inequality follows Assumptions \ref{ass:1}. Similarly, we get the \eqref{eq:15}

(2)
\begin{align}
&\sum_{i=1}^{N} \mathbb{E}\|\nabla_x f_{t,i} - \nabla_x \bar{F}_{t} \|^2 \nonumber\\
\leq& 3 \sum_{i=1}^{N}\mathbb{E}\left[\|\nabla_x f_{t,i} - \nabla_x f_i(\bar{x}_{t}, \bar{y}_{t})\|^{2} + \|\nabla_x F(\bar{x}_{t}, \bar{y}_{t}) - \nabla_x \bar{F}_{t} \|^2 + \| \nabla_x f_i(\bar{x}_{t}, \bar{y}_{t}) - \nabla_x F(\bar{x}_{t}, \bar{y}_{t})\|^2 \right]\nonumber\\
\leq& 6 L_f^{2} \sum_{i=1}^{N}[\mathbb{E}\|x^i_{t} - \bar{x}_{t}\|^{2} + \mathbb{E}\|y^i_{t} - \bar{y}_{t}\|^{2}] + 3N \zeta^{2} \nonumber
\end{align}
where the last inequality is due to Assumption  \ref{ass:5} and \ref{ass:1}. Similarly, we get the \eqref{eq:17}
\end{proof}



\begin{lemma} \label{lem:5}
Suppose sequences $\{\mathbf{x}_t, \mathbf{y}_t\}_{t=0}^{T}$ are generated from Algorithms \ref{alg:1}. We have
\begin{align}
\mathbb{E} \Phi(\bar{x}_{t+1}) &\leq \mathbb{E} \Phi(\bar{x}_{t}) - \left(\frac{\hat{c}\eta}{2} - \frac{\hat{c}^2\eta^2 L}{2}\right)\mathbb{E} \left\|\bar{u}_{t+1}\right\|^2 + \frac{3\hat{c}\eta}{2} \mathbb{E} \|\bar{u}_{t+1} - \frac{1}{N} \sum_{i=1} \nabla_x f_i \left(x^i_{t}, y^i_{t}\right)\|^2 \nonumber \\
- \frac{\hat{c}\eta}{2} & \mathbb{E} \left\|\nabla \Phi\left(\bar{x}_{t}\right)\right\|^2 + \frac{3\hat{c}\eta L_f^{2}}{2N} \sum_{i=1}^{N} \mathbb{E} [\|x^i_{t} - \bar{x}_{t}\|^{2} + \|y^i_{t} - \bar{y}_{t}\|^{2} ]
+ \frac{3 \hat{c}\eta L_f^2}{\mu}\mathbb{E} [\Phi(\bar{x}_t) - F(\bar{x}_{t}, \bar{y}_{t})] \nonumber
\end{align}
\end{lemma}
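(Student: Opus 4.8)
The plan is to run the standard descent inequality for the $L$-smooth function $\Phi$ along the averaged trajectory, treating $\bar{u}_{t+1}$ as a biased search direction. First I would note that, regardless of whether $t+1$ is a synchronization round, averaging the per-client updates in lines 9--13 of \Cref{alg:1} collapses to the single recursion $\bar{x}_{t+1} = \bar{x}_t - \hat{c}\eta\,\bar{u}_{t+1}$, so no cross terms are introduced by the server step. Using the $L$-smoothness of $\Phi$ (\Cref{lem:1}, with $L=L_f(1+\kappa)$),
\begin{align}
\Phi(\bar{x}_{t+1}) \le \Phi(\bar{x}_t) - \hat{c}\eta\langle \nabla\Phi(\bar{x}_t), \bar{u}_{t+1}\rangle + \frac{\hat{c}^2\eta^2 L}{2}\|\bar{u}_{t+1}\|^2 . \nonumber
\end{align}

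Next I would rewrite the cross term with the polarization identity $-\langle a,b\rangle = \tfrac12(\|a-b\|^2 - \|a\|^2 - \|b\|^2)$ for $a=\nabla\Phi(\bar{x}_t)$, $b=\bar{u}_{t+1}$. This yields $-\tfrac{\hat{c}\eta}{2}\|\nabla\Phi(\bar{x}_t)\|^2$, the term $-\tfrac{\hat{c}\eta}{2}\|\bar{u}_{t+1}\|^2$ (which combines with the smoothness term into the coefficient $\tfrac{\hat{c}\eta}{2}-\tfrac{\hat{c}^2\eta^2 L}{2}$), and the residual $\tfrac{\hat{c}\eta}{2}\|\nabla\Phi(\bar{x}_t) - \bar{u}_{t+1}\|^2$, which is the only quantity left to bound.

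For that residual I would insert $\nabla_x\bar{F}_t = \tfrac1N\sum_i\nabla_x f_i(x^i_t,y^i_t)$ and $\nabla_x F(\bar{x}_t,\bar{y}_t)$ and split via $\|a+b+c\|^2\le 3(\|a\|^2+\|b\|^2+\|c\|^2)$ into: (i) the estimator error $\|\bar{u}_{t+1}-\nabla_x\bar{F}_t\|^2$, left intact since it is controlled by a separate momentum-variance lemma; (ii) the consensus error $\|\nabla_x\bar{F}_t - \nabla_x F(\bar{x}_t,\bar{y}_t)\|^2$, bounded by Jensen and $L_f$-smoothness of each $f_i$ (\Cref{ass:3}) by $\tfrac{L_f^2}{N}\sum_i(\|x^i_t-\bar{x}_t\|^2+\|y^i_t-\bar{y}_t\|^2)$; and (iii) the dual-suboptimality error $\|\nabla_x F(\bar{x}_t,\bar{y}_t)-\nabla\Phi(\bar{x}_t)\|^2$, where I use $\nabla\Phi(\bar{x}_t)=\nabla_x F(\bar{x}_t,y^*(\bar{x}_t))$, $L_f$-Lipschitzness of $\nabla_x F(\bar{x}_t,\cdot)$, and the quadratic-growth/PL property (\Cref{lem:1} part 2 applied to $-F(\bar{x}_t,\cdot)$) to get $\le \tfrac{2L_f^2}{\mu}(\Phi(\bar{x}_t)-F(\bar{x}_t,\bar{y}_t))$. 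Multiplying these three bounds by $\tfrac{\hat{c}\eta}{2}$ reproduces exactly the remaining terms in the statement, and taking expectations completes the proof.

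I do not anticipate a genuine obstacle: this is a one-step smoothness descent estimate. The only points requiring care are verifying that the server-averaging step preserves $\bar{x}_{t+1}=\bar{x}_t-\hat{c}\eta\bar{u}_{t+1}$ exactly, and selecting $y^*(\bar{x}_t)$ in piece (iii) as the projection of $\bar{y}_t$ onto the solution set so that the quadratic-growth inequality applies verbatim; the latter is precisely what introduces the $1/\mu$ factor in the coefficient of the $\Phi-F$ gap term.
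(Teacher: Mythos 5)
Your proposal is correct and follows essentially the same route as the paper's own proof: the $L$-smoothness descent step for $\Phi$, the polarization identity on the cross term, the three-way splitting of $\|\bar{u}_{t+1}-\nabla\Phi(\bar{x}_t)\|^2$ into estimator, consensus, and dual-suboptimality errors, and the same Jensen/smoothness and quadratic-growth bounds for the latter two pieces. The two points you flag for care (the averaged update collapsing to $\bar{x}_{t+1}=\bar{x}_t-\hat{c}\eta\bar{u}_{t+1}$, and the $\kappa$-type factor entering through the PL quadratic-growth bound on $\|y^*(\bar{x}_t)-\bar{y}_t\|^2$) are exactly the steps the paper relies on.
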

\begin{proof}
\begin{align}
\Phi(\bar{x}_{t+1}) \leq& \Phi(\bar{x}_{t})+\langle\nabla \Phi(\bar{x}_{t}), \bar{x}_{t+1} - \bar{x}_{t}\rangle + \frac{L}{2}\left\|\bar{x}_{t+1} - \bar{x}_{t}\right\|^{2}  \nonumber\\
=& \Phi(\bar{x}_{t}) + \hat{c}\eta \langle\nabla \Phi(\bar{x}_{t}), \bar{u}_{t+1}\rangle + \frac{L \hat{c}^2 \eta^2}{2}\left\|\bar{u}_{t+1}\right\|^{2}  \nonumber\\
=& \Phi(\bar{x}_{t}) - \frac{\hat{c} \eta}{2} \|\bar{u}_{t+1}\|^2 - \frac{\hat{c} \eta}{2}\left\|\nabla \Phi\left(\bar{x}_{t}\right)\right\|^2 + \frac{\hat{c} \eta}{2}\left\|\bar{u}_{t+1}-\nabla \Phi\left(\bar{x}_{t}\right)\right\|^2 + \frac{\hat{c}^2 \eta^2 L}{2} \left\|\bar{u}_{t+1}\right\|^2 \nonumber\\ 
\leq& \Phi\left(\bar{x}_{t}\right) - \frac{\hat{c}\eta}{2} \left\|\nabla \Phi\left(\bar{x}_{t}\right)\right\|^2 - \left(\frac{\hat{c}\eta}{2} - \frac{\hat{c}^2\eta^2 L}{2}\right)\left\|\bar{u}_{t+1}\right\|^2  + \frac{3\hat{c}\eta}{2} \|\bar{u}_{t+1} - \frac{1}{N} \sum_{i=1} \nabla_x f_i\left(x^i_{t}, y^i_{t}\right)\|^2 \nonumber\\
&+ \frac{3\hat{c}\eta}{2} \| \nabla_x F(\bar{x}_{t}, \bar{y}_{t}) - \frac{1}{N} \sum_{i=1} \nabla_x f_i \left(x^i_{t}, y^i_{t}\right) \|^2 + \frac{3\hat{c}\eta}{2}\left\|\nabla \Phi\left(\bar{x}_{t}\right) - \nabla_x F\left(\bar{x}_{t}, \bar{y}_{t}\right)\right\|^2\nonumber 
\end{align}
Taking expectation on both sides and considering 
\begin{align}
 \mathbb{E}\| \nabla_x F(\bar{x}_{t}, \bar{y}_{t}) - \nabla_x \bar{F}_t \|^{2} &\leq \frac{1}{N} \sum_{i=1}^{N} \mathbb{E}\|\nabla_x f_i(\bar{x}_{t}, \bar{y}_{t}) - \nabla_x f_{t,i}\|^{2} \nonumber\\
 &\leq \frac{L_f^{2}}{N} \sum_{i=1}^{N} \mathbb{E}\|x^i_{t} - \bar{x}_{t}\|^{2} + \frac{L_f^{2}}{N} \sum_{i=1}^{N} \mathbb{E}\|y^i_{t} - \bar{y}_{t}\|^{2} \nonumber
\end{align}
\begin{align}
 \mathbb{E}\|\nabla \Phi(\bar{x}_{t}) - \nabla_x F(\bar{x}_{t}, \bar{y}_{t}) \|^2 \leq  L_f^2 \mathbb{E} \|y^*(\bar{x}_{t}) - \bar{y}_{t} \|^2 \leq \frac{2 L_f^2 \mathbb{E}[\Phi(\bar{x}_t) - F(\bar{x}_{t}, \bar{y}_{t})]}{\mu} \nonumber
\end{align}
where the last inequality holds due to Lemma \ref{lem:1} (2). Therefore, we obtain 
\begin{align}
\mathbb{E} \Phi(\bar{x}_{t+1}) &\leq \mathbb{E} \Phi(\bar{x}_{t}) - \left(\frac{\hat{c}\eta}{2} - \frac{\hat{c}^2\eta^2 L}{2}\right)\mathbb{E} \left\|\bar{u}_{t+1}\right\|^2 + \frac{3\hat{c}\eta}{2} \mathbb{E} \|\bar{u}_{t+1} - \frac{1}{N} \sum_{i=1} \nabla_x f_i \left(x^i_{t}, y^i_{t}\right)\|^2 \nonumber \\
- \frac{\hat{c}\eta}{2} &\mathbb{E} \left\|\nabla \Phi\left(\bar{x}_{t}\right)\right\|^2 + \frac{3\hat{c}\eta L_f^{2}}{2N} \sum_{i=1}^{N} \mathbb{E} [\|x^i_{t} - \bar{x}_{t}\|^{2} + \|y^i_{t} - \bar{y}_{t}\|^{2} ]
+ \frac{3 \hat{c}\eta L_f^2}{\mu}\mathbb{E} [\Phi(\bar{x}_t) - F(\bar{x}_{t}, \bar{y}_{t})] \nonumber
\end{align}
\end{proof}

\begin{lemma} \label{lem:6}
Under the assumptions \ref{ass:1}, \ref{ass:2}  and \ref{ass:3}, and set 
$\hat{c} \leq \frac{c}{4 \kappa^2}$ and $\eta \leq \frac{1}{20 Q L_f}$, for the Algorithm \ref{alg:1}, we have
\begin{align}
& [\Phi\left(\bar{x}_{t+1}\right) - F\left(\bar{x}_{t+1}, \bar{y}_{t+1}\right)] - \left[\Phi\left(\bar{x}_t\right) - F\left(\bar{x}_t, \bar{y}_t\right)\right] \nonumber\\
\leq& - \frac{c \eta \mu}{2} \left[\Phi\left(\bar{x}_t\right) - F\left(\bar{x}_t, \bar{y}_t \right) \right] + \frac{\eta}{4 \hat{c}}  \left\|\hat{x}_{t+ 1} - \bar{x}_t\right\|^2 - \frac{\eta c}{4 }  \left\|\bar{v}_{t+1}\right\|^2 +   \frac{3 L_f^2 c \eta}{N} \sum_{i=1}^N \|x^i_{t} - \bar{x}_{t}\|^2 \nonumber\\
+& \frac{3 L_f^2 c \eta}{N} \sum_{i=1}^N \|y^i_{t} - \bar{y}_{t}\|^2 + 3 c \eta \|\bar{v}_{t+1} - \nabla_y \bar{F}_{t} \|^2
\end{align}
where $\kappa=L_{21} / \mu$.
\begin{proof}

Define $\hat{x}_{t+1} = \bar{x}_t - \hat{c} \bar{u}_{t+1}$ and $\hat{y}_{t+1} = \bar{y}_t + c \bar{v}_{t+1} $. The function $F(x, y)$ is $L_f$-smooth w.r.t $y$, and we have
\begin{align} \label{eq:19}
F\left(\bar{x}_{t + 1}, \bar{y}_{t}\right) \leq  F\left(\bar{x}_{t + 1}, \bar{y}_{t + 1}\right) - \left\langle\nabla_{y} F\left(\bar{x}_{t + 1}, \bar{y}_{t}\right), \bar{y}_{t+1} -  \bar{y}_{t}\right\rangle   + \frac{L_f}{2}\left\| \bar{y}_{t+1} - \bar{y}_{t}\right\|^{2} 
\end{align} 

For the second term in the \eqref{eq:19}, we have 
\begin{align}
&- \left\langle\nabla_{y} F\left(\bar{x}_{t + 1}, \bar{y}_{t}\right), \bar{y}_{t+1} -  \bar{y}_{t}\right\rangle = - c \eta \left\langle\nabla_{y} F\left(\bar{x}_{t + 1}, \bar{y}_{t}\right), \bar{v}_{t + 1}\right\rangle \nonumber\\
=&-\frac{c \eta }{2}\left[\left\|\nabla_{y} F\left(\bar{x}_{t + 1}, \bar{y}_{t}\right)\right\|^2 + \left\| \bar{v}_{t+1} \right\|^2 - \left\|\nabla_{y} F\left(\bar{x}_{t + 1}, \bar{y}_{t}\right) - \nabla_{y} F\left(\bar{x}_{t}, \bar{y}_{t}\right) + \nabla_{y} F\left(\bar{x}_{t}, \bar{y}_{t}\right) - \bar{v}_{t + 1} \right\|^2\right] \nonumber\\
\leq& - c \eta \mu\left[\Phi\left(\bar{x}_{t+1}\right) - F\left(\bar{x}_{t+1}, \bar{y}_t\right)\right] - \frac{c \eta}{2 }\left\|\bar{v}_{t+1} \right\|^2 + c \eta \left[L_f^2\left\|\bar{x}_{t+1} - \bar{x}_t\right\|^2 + \left\|\nabla_{y} F\left(\bar{x}_{t}, \bar{y}_{t}\right) - \bar{v}_{t + 1} \right\|^2\right] \nonumber
\end{align}
where the last inequality from the quadratic growth condition of $\mu$-PL functions \eqref{eq:11}, Combining above two inequalities, we get
\begin{align}
F\left(\bar{x}_{t+1}, \bar{y}_t\right) \leq & F\left(\bar{x}_{t+1}, \bar{y}_{t+1}\right) - c \eta \mu \left[\Phi\left(\bar{x}_{t+1}\right) - F\left(\bar{x}_{t+1}, \bar{y}_t\right)\right] - \frac{c \eta}{2}\left\|\bar{v}_{t+1}\right\|^2 \nonumber\\
&+ \frac{L_f}{2}\left\| \bar{y}_{t+1} - \bar{y}_{t}\right\|^{2} + c \eta \left[L_f^2\left\| \bar{x}_{t+1} - \bar{x}_t\right\|^2+\left\|\nabla_{y} F\left(\bar{x}_t, \bar{y}_t\right) - \bar{v}_{t+1}\right\|^2\right]  \nonumber
\end{align}

Rearranging the above inequality, we get
\begin{align} \label{eq:20}
\Phi\left(\bar{x}_{t+1}\right) - F\left(\bar{x}_{t+1}, \bar{y}_{t+1}\right) \leq & \left(1 - c \eta \mu\right)\left[\Phi\left(\bar{x}_{t+1}\right) - F\left(\bar{x}_{t+1}, \bar{y}_t\right)\right] - \frac{c\eta - L_f c^2 \eta^2}{2} \left\|\bar{v}_{t+1}\right\|^2 \nonumber\\
& + c \eta \left[L_f^2\left\| \bar{x}_{t+1} - \bar{x}_t\right\|^2+\left\|\nabla_{y} F\left(\bar{x}_t, \bar{y}_t\right) - \bar{v}_{t+1}\right\|^2\right]  
\end{align}

Furthermore, we bound $\Phi\left(\bar{x}_{t+1}\right) - F\left(\bar{x}_{t+1}, \bar{y}_t\right)$
\begin{align}  \label{eq:21}
&\Phi\left(\bar{x}_{t+1}\right) - F\left(\bar{x}_{t+1}, \bar{y}_t\right)\nonumber\\
=&\Phi\left(\bar{x}_{t+1}\right)-\Phi\left(\bar{x}_t\right)+\left[\Phi\left(\bar{x}_t\right)- F \left(\bar{x}_t, \bar{y}_t\right)\right] + F\left(\bar{x}_t, \bar{y}_t\right) - F\left(\bar{x}_{t+1}, \bar{y}_t\right).
\end{align}
Considering the last two terms, we have
\begin{align} \label{eq:22}
& F\left(\bar{x}_t, \bar{y}_t\right) - F\left(\bar{x}_{t+1}, \bar{y}_t\right) \nonumber\\
\stackrel{(a)}{\leq} & - \left\langle\nabla_{x} F\left(\bar{x}_t, \bar{y}_t\right), \bar{x}_{t + 1} - \bar{x}_t\right\rangle + \frac{ L_f}{2} \left\|\bar{x}_{t + 1} - \bar{x}_t \right\|^2 \nonumber\\
\stackrel{(b)}{=} & - \eta \left\langle \nabla_{x} F\left(\bar{x}_t, \bar{y}_t\right) - \nabla \Phi\left(\bar{x}_t\right), \hat{x}_{t + 1} - \bar{x}_t \right\rangle - \eta \left\langle\nabla \Phi\left(\bar{x}_t\right), \hat{x}_{t + 1} - \bar{x}_t \right\rangle + \frac{ L_f}{2} \left\|\bar{x}_{t + 1} - \bar{x}_t \right\|^2 \nonumber\\
\stackrel{(c)}{\leq} & \frac{\eta}{8 \hat{c}} \| \hat{x}_{t + 1} - \bar{x}_t\|^2 + 2 \hat{c} \eta \| \nabla_{x} F\left(\bar{x}_t, \bar{y}_t\right) - \nabla \Phi\left(\bar{x}_t\right) \|^2 + \Phi\left(\bar{x}_t\right) - \Phi\left(\bar{x}_{t+1}\right) + \frac{L + L_f}{2} \left\|\bar{x}_{t + 1} - \bar{x}_t \right\|^2  \nonumber\\
\stackrel{(d)}{\leq} & \frac{\eta}{8 \hat{c}} \| \hat{x}_{t + 1} - \bar{x}_t\|^2 + \frac{4 \hat{c} \eta L_f^2}{\mu} \left[\Phi\left(\bar{x}_t\right) - F\left(\bar{x}_t, \bar{y}_t\right)\right] + \Phi\left(\bar{x}_t\right) - \Phi\left(\bar{x}_{t+1}\right) + \frac{L + L_f}{2} \left\|\bar{x}_{t + 1} - \bar{x}_t \right\|^2  \nonumber\\
\leq &  \frac{4 \hat{c} \eta L_f^2}{\mu} \left[\Phi\left(\bar{x}_t\right) - F\left(\bar{x}_t, \bar{y}_t\right)\right] + \Phi\left(\bar{x}_t\right) - \Phi\left(\bar{x}_{t+1}\right) + L \left\|\bar{x}_{t + 1} - \bar{x}_t \right\|^2 + \frac{\eta}{8 \hat{c}} \| \hat{x}_{t + 1} - \bar{x}_t\|^2  \nonumber\\
= &  \frac{4 \hat{c} \eta L_f^2}{\mu} \left[\Phi\left(\bar{x}_t\right) - F\left(\bar{x}_t, \bar{y}_t\right)\right] + \Phi\left(\bar{x}_t\right) - \Phi\left(\bar{x}_{t+1}\right) + [\eta^2 L + \frac{\eta}{8 \hat{c}}] \| \hat{x}_{t + 1} - \bar{x}_t\|^2  
\end{align}
where (a) holds due to $L_f$ -smoothness of $F (\cdot, y_t)$; (b) holds due to the definition of $\hat{x}_{t+1}$ and $\bar{x}_{t+1} - \bar{x}_t = \eta(\hat{x}_{t+1} - \bar{x}_t)$; (c) follows from Young inequality and smoothness of $\Phi (\cdot)$; (d) follows by the property of PL  condition \eqref{eq:10}. Combining \eqref{eq:20}, \eqref{eq:21} and \eqref{eq:22}, we have 
\begin{align}
& \Phi\left(\bar{x}_{t+1}\right) - F\left(\bar{x}_{t+1}, \bar{y}_{t+1}\right) \nonumber\\
\leq& \left(1 - c \eta \mu \right)\left[\left(1 + \frac{4 \hat{c} \eta L_f^2}{\mu}\right)\left[\Phi\left(\bar{x}_t\right) - F\left(\bar{x}_t, \bar{y}_t\right)\right] + [\eta^2 L + \frac{\eta}{8 \hat{c}}] \| \hat{x}_{t + 1} - \bar{x}_t\|^2  \right] \nonumber\\
& - \frac{c\eta - L_f c^2 \eta^2}{2} \left\|\bar{v}_{t+1}\right\|^2 + c \eta \left[L_f^2\left\|\bar{x}_{t+1} - \bar{x}_t\right\|^2 + \left\|\nabla_{y} F\left(\bar{x}_t, \bar{y}_t\right)-\bar{v}_{t+1}\right\|^2\right] \nonumber\\
\stackrel{(a)}{\leq} &\left(1 - \frac{c \eta \mu}{2}\right)\left[\Phi\left(\bar{x}_t\right) - F\left(\bar{x}_t, \bar{y}_t\right)\right] + [\eta^2 L + \frac{\eta}{8 \hat{c}} + \eta^3 L_f^2 c]\left\| \hat{x}_{t + 1} - \bar{x}_t \right\|^2 \nonumber\\
& - \frac{c\eta - L_f c^2 \eta^2}{2} \left\|\bar{v}_{t+1}\right\|^2 + c \eta \left\|\nabla_{y} F \left(\bar{x}_t, \bar{y}_t\right)-\bar{v}_{t+1}\right\|^2 \nonumber\\
\stackrel{(b)}{\leq} & \left(1 - \frac{c \eta \mu}{2}\right)\left[\Phi\left(\bar{x}_t\right) - F\left(\bar{x}_t, \bar{y}_t\right)\right] + \frac{\eta}{4 \hat{c}} \left\|\hat{x}_{t+ 1} - \bar{x}_t\right\|^2 - \frac{\eta c}{4 }\left\|\bar{v}_{t+1}\right\|^2 \nonumber\\
&+ c \eta \left\|\nabla_{y} F \left(\bar{x}_t, \bar{y}_t\right)-\bar{v}_{t+1}\right\|^2
\end{align}
where in $(a)$  we need $\left(1 - c \eta \mu\right)\left(1 + \frac{4 \hat{c} \eta L_f^2}{\mu}\right) \leq \left(1 - \frac{c \eta \mu}{2}\right)$.
This holds if $\frac{4 \hat{c} \eta L_f^2}{\mu} \leq \frac{c \eta \mu}{2} \Rightarrow \hat{c} \leq \frac{c}{8 \kappa^2}$; (b) follows since $\eta \leq \frac{1}{20 \hat{c} L_f}$ and $c \eta \leq \frac{1}{20 L_f}$. Rearranging the term, we have
\begin{align}
& [\Phi\left(\bar{x}_{t+1}\right) - F\left(\bar{x}_{t+1}, \bar{y}_{t+1}\right)] - \left[\Phi\left(\bar{x}_t\right) - F\left(\bar{x}_t, \bar{y}_t\right)\right] \nonumber\\
\leq & - \frac{c \eta \mu}{2} \left[\Phi\left(\bar{x}_t\right) - F\left(\bar{x}_t, \bar{y}_t \right) \right] + \frac{\eta}{4 \hat{c}}  \left\|\hat{x}_{t+ 1} - \bar{x}_t\right\|^2 - \frac{\eta c}{4 }  \left\|\bar{v}_{t+1}\right\|^2 + c \eta  \left\|\nabla_{y} F\left(\bar{x}_t, \bar{y}_t\right)-\bar{v}_{t+1}\right\|^2\nonumber\\
\leq& - \frac{c \eta \mu}{2} \left[\Phi\left(\bar{x}_t\right) - F\left(\bar{x}_t, \bar{y}_t \right) \right] + \frac{\eta}{4 \hat{c}}  \left\|\hat{x}_{t+ 1} - \bar{x}_t\right\|^2 - \frac{\eta c}{4 }  \left\|\bar{v}_{t+1}\right\|^2 +   \frac{3 L_f^2 c \eta}{N} \|x_{t} - \bar{x}_{t}\|^2 \nonumber\\
+& \frac{3 L_f^2 c \eta}{N} \|y_{t} - \bar{y}_{t}\|^2 + 3 c \eta \|\bar{v}_{t+1} - \nabla_y \bar{F}_{t} \|^2
\end{align}
where $\|\nabla_y F(\bar{x}_{t}, \bar{y}_{t}) - \bar{v}_{t+1} \|^2 \leq \frac{3 L_f^2}{N} \sum_{i=1}^N\|x^i_{t} - \bar{x}_{t}\|^2 + \frac{3 L_f^2}{N} \sum_{i=1}^N \|y^i_{t} - \bar{y}_{t}\|^2 + 3\|\bar{v}_{t+1} - \nabla_y \bar{F}_{t} \|^2$
\end{proof}
\end{lemma}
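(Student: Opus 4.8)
The plan is to treat $P_t := \Phi(\bar{x}_t) - F(\bar{x}_t,\bar{y}_t)$ as a primal--dual gap and establish its one-step evolution by analyzing the dual ($y$) ascent and primal ($x$) descent separately, then gluing them. I would first introduce the virtual points $\hat{x}_{t+1} = \bar{x}_t - \hat{c}\,\bar{u}_{t+1}$ and $\hat{y}_{t+1} = \bar{y}_t + c\,\bar{v}_{t+1}$, so that averaging the client updates (in both the local-round and server-round branches of Algorithm \ref{alg:1}) gives $\bar{x}_{t+1}-\bar{x}_t = \eta(\hat{x}_{t+1}-\bar{x}_t)$ and $\bar{y}_{t+1}-\bar{y}_t = c\eta\,\bar{v}_{t+1}$. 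The dual estimate starts from $L_f$-smoothness of $F(\bar{x}_{t+1},\cdot)$, namely inequality \eqref{eq:19}. I would expand its cross term by the polarization identity with $a=\nabla_y F(\bar{x}_{t+1},\bar{y}_t)$ and $b=c\eta\,\bar{v}_{t+1}$, lower-bound $\|a\|^2$ through the quadratic-growth form of the PL condition \eqref{eq:11} to extract $-c\eta\mu[\Phi(\bar{x}_{t+1})-F(\bar{x}_{t+1},\bar{y}_t)]$, and absorb the residual $\|a-b\|^2$ by inserting $\nabla_y F(\bar{x}_t,\bar{y}_t)$ and using $L_f$-smoothness in $x$. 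Rearranging yields the recursion \eqref{eq:20}, expressed via the intermediate gap $\Phi(\bar{x}_{t+1})-F(\bar{x}_{t+1},\bar{y}_t)$.

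Next I would control that intermediate gap through the telescoping split \eqref{eq:21}, writing it as $[\Phi(\bar{x}_{t+1})-\Phi(\bar{x}_t)] + P_t + [F(\bar{x}_t,\bar{y}_t)-F(\bar{x}_{t+1},\bar{y}_t)]$. The last bracket is bounded in \eqref{eq:22} by $L_f$-smoothness of $F(\cdot,\bar{y}_t)$, after which I split $\nabla_x F(\bar{x}_t,\bar{y}_t)$ as $\nabla\Phi(\bar{x}_t)$ plus its deviation: Young's inequality controls the deviation against $\|\hat{x}_{t+1}-\bar{x}_t\|^2$ with weight $\eta/\hat{c}$; smoothness of $\Phi$ converts $-\eta\langle\nabla\Phi(\bar{x}_t),\hat{x}_{t+1}-\bar{x}_t\rangle$ into $\Phi(\bar{x}_t)-\Phi(\bar{x}_{t+1})$ plus a quadratic penalty; and the identity $\|\nabla_x F(\bar{x}_t,\bar{y}_t)-\nabla\Phi(\bar{x}_t)\|^2 = L_f^2\|y^*(\bar{x}_t)-\bar{y}_t\|^2 \le \tfrac{2L_f^2}{\mu}P_t$ (from \eqref{eq:10} and Lemma \ref{lem:1}) feeds the deviation back into $P_t$. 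The $\Phi$-telescoping cancels, leaving $\Phi(\bar{x}_{t+1})-F(\bar{x}_{t+1},\bar{y}_t) \le (1+\tfrac{4\hat{c}\eta L_f^2}{\mu})P_t + (\eta^2 L+\tfrac{\eta}{8\hat{c}})\|\hat{x}_{t+1}-\bar{x}_t\|^2$.

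I would then substitute this into \eqref{eq:20} and simplify coefficients. The product $(1-c\eta\mu)(1+\tfrac{4\hat{c}\eta L_f^2}{\mu})$ must collapse to $(1-\tfrac{c\eta\mu}{2})$, and this is exactly where the hypothesis $\hat{c}\le c/(8\kappa^2)$ enters, forcing $\tfrac{4\hat{c}\eta L_f^2}{\mu}\le\tfrac{c\eta\mu}{2}$ while the leftover higher-order cross term is nonnegative and hence droppable. The aggregate coefficient of $\|\hat{x}_{t+1}-\bar{x}_t\|^2$, namely $\eta^2 L+\tfrac{\eta}{8\hat{c}}+\eta^3 L_f^2 c$ (the last piece from $c\eta L_f^2\|\bar{x}_{t+1}-\bar{x}_t\|^2 = c\eta^3 L_f^2\|\hat{x}_{t+1}-\bar{x}_t\|^2$ in \eqref{eq:20}), would be absorbed into the target $\tfrac{\eta}{4\hat{c}}$ using $\eta\le\tfrac{1}{20\hat{c}L_f}$, $c\eta\le\tfrac{1}{20L_f}$ and $L=L_f(1+\kappa)$. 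Finally, the surviving $c\eta\|\nabla_y F(\bar{x}_t,\bar{y}_t)-\bar{v}_{t+1}\|^2$ is split by the triangle inequality into a consensus part, bounded by $\tfrac{L_f^2}{N}\sum_i(\|x^i_t-\bar{x}_t\|^2+\|y^i_t-\bar{y}_t\|^2)$ through $L_f$-smoothness exactly as in Lemma \ref{lem:3}, and the gradient-estimator part $\|\bar{v}_{t+1}-\nabla_y\bar{F}_t\|^2$; multiplying by $c\eta$ produces the three remaining terms in the claimed bound.

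The main obstacle is the contraction step: ensuring $(1-c\eta\mu)(1+\tfrac{4\hat{c}\eta L_f^2}{\mu})\le(1-\tfrac{c\eta\mu}{2})$ and \emph{simultaneously} collapsing the several distinct $\|\hat{x}_{t+1}-\bar{x}_t\|^2$ coefficients into the single clean $\tfrac{\eta}{4\hat{c}}$. This is delicate because it couples the three step sizes $\hat{c}, c, \eta$ to the condition number $\kappa$ at once, and the $x$-direction penalty must be kept small enough relative to $\hat{c}$ without sacrificing the $-c\eta\mu/2$ dual contraction; the rest of the argument is bookkeeping by comparison.
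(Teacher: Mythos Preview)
Your proposal is correct and follows essentially the same route as the paper: same virtual points $\hat{x}_{t+1},\hat{y}_{t+1}$, same dual smoothness-plus-PL step yielding \eqref{eq:20}, same telescoping split \eqref{eq:21} with the primal bound \eqref{eq:22} via Young's inequality and quadratic growth, and the same contraction $(1-c\eta\mu)(1+\tfrac{4\hat{c}\eta L_f^2}{\mu})\le 1-\tfrac{c\eta\mu}{2}$ enforced by $\hat{c}\le c/(8\kappa^2)$, followed by collapsing the $\|\hat{x}_{t+1}-\bar{x}_t\|^2$ coefficients and the final triangle-inequality split of $\|\nabla_y F(\bar{x}_t,\bar{y}_t)-\bar{v}_{t+1}\|^2$. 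One trivial slip: $\|\nabla_x F(\bar{x}_t,\bar{y}_t)-\nabla\Phi(\bar{x}_t)\|^2 \le L_f^2\|y^*(\bar{x}_t)-\bar{y}_t\|^2$ is an inequality (from $L_f$-smoothness), not an identity, but the subsequent bound via \eqref{eq:10} is unaffected.
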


\begin{lemma} \label{lem:7} 
For every $t \in [0, T]$ the
iterates generated by Algorithm \ref{alg:1} satisfy
\begin{align}
\mathbb{E}\|\bar{u}_{t+1} - \nabla_x \bar{F}_t\|^2 &= (1 - \alpha)^{2}\mathbb{E}\|\bar{u}_{t} - \nabla_x \bar{F}_{t-1}\|^{2} + \frac{8 (1 - \alpha)^2 L_f^2}{N^2b} \frac{Q - 1}{Q} \sum_{i=1}^{N} \mathbb{E}[\hat{c}^2 \eta^2\|u^i_{t} - \bar{u}_{t}\|^2 \nonumber\\
&+ c^2 \eta^2 \|v^i_{t} - \bar{v}_{t}\|^2 ] + \frac{4 (1 - \alpha)^2 L_f^2}{N b} \mathbb{E}[\hat{c}^2 \eta^2\|\bar{u}_{t}\|^2 + c^2 \eta^2 \|\bar{v}_{t}\|^2 ]
 + \frac{2\alpha^2\sigma^2}{Nb}  \nonumber\\
\mathbb{E}\|\bar{v}_{t+1} - \nabla_y \bar{F}_t\|^2 &= (1 - \beta)^{2}\mathbb{E}\|\bar{v}_{t} - \nabla_y  \bar{F}_{t-1}\|^{2} +  \frac{8(1 - \beta)^2 L_f^2}{N^2 b} \frac{Q - 1}{Q} \sum_{i=1}^{N} \mathbb{E}[\hat{c}^2 \eta^2\|u^i_{t} - \bar{u}_{t}\|^2 \nonumber\\
&+ c^2 \eta^2 \|v^i_{t} - \bar{v}_{t}\|^2 ] + \frac{4 (1 - \beta)^2 L_f^2}{N b} \mathbb{E}[\hat{c}^2 \eta^2\|\bar{u}_{t}\|^2 + c^2 \eta^2 \|\bar{v}_{t}\|^2 ]
 +  \frac{2\beta^2\sigma^2}{Nb} \nonumber
\end{align}
\begin{proof}
We borrow the proof steps from Lemma A.6 \cite{khanduri2021stem}, which study the application of momentum-based variance reduced technology in FL. Recall the definition of  momentum-based variance reduced estimator $ \bar{u}_{t+1} = \frac{1}{N}\sum_{i=1}^{N} [\nabla_x f_i(x^i_{t}, y^i_{t}; \mathcal{B}^i_{t}) + (1 - \alpha)(\bar{u}_{t} - \nabla_x f_i (x^i_{t-1}, y^i_{t-1}; \mathcal{B}^i_{t}))]$ and $ \bar{v}_{t+1} = \frac{1}{N}\sum_{i=1}^{N} [\nabla_y f_i(x^i_{t}, y^i_{t}; \mathcal{B}^i_{t}) + (1 - \beta)(\bar{u}_{t} - \nabla_y f_i (x^i_{t-1}, y^i_{t-1}; \mathcal{B}^i_{t}))]$, we have easily get the final results. 
\end{proof}
\end{lemma}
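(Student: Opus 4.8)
The plan is to turn each estimator update into a one-step recursion for the corresponding gradient-tracking error and then bound the stochastic fluctuation it introduces; since the arguments for $\bar u_{t+1}-\nabla_x\bar F_t$ and $\bar v_{t+1}-\nabla_y\bar F_t$ are word-for-word identical after swapping $(\alpha,u)$ for $(\beta,v)$, I describe only the first. I would first average line~16 of Algorithm~\ref{alg:1} over $i$ and subtract $\nabla_x\bar F_t=\frac1N\sum_i\nabla_x f_i(x^i_t,y^i_t)$, which gives the exact identity
\begin{align}
\bar u_{t+1}-\nabla_x\bar F_t=(1-\alpha)\big(\bar u_t-\nabla_x\bar F_{t-1}\big)+\frac1N\sum_{i=1}^N\big(\xi^i_t-(1-\alpha)\zeta^i_t\big),\nonumber
\end{align}
where $\xi^i_t:=\nabla_x f_i(x^i_t,y^i_t;\mathcal{B}^i_t)-\nabla_x f_i(x^i_t,y^i_t)$ and $\zeta^i_t:=\nabla_x f_i(x^i_{t-1},y^i_{t-1};\mathcal{B}^i_t)-\nabla_x f_i(x^i_{t-1},y^i_{t-1})$. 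This identity holds regardless of whether $\mathrm{mod}(t,Q)=0$: the server step only overwrites the values of $u^i_t$ and $x^i_t$ and leaves the algebraic form of the recursion intact. Conditioning on the history up to (but excluding) the draw of $\{\mathcal{B}^i_t\}_i$, Assumption~\ref{ass:1}(i) gives $\mathbb E[\xi^i_t]=\mathbb E[\zeta^i_t]=0$ while $\bar u_t-\nabla_x\bar F_{t-1}$ is measurable, so the cross term vanishes and $\mathbb E\|\bar u_{t+1}-\nabla_x\bar F_t\|^2$ equals $(1-\alpha)^2\,\mathbb E\|\bar u_t-\nabla_x\bar F_{t-1}\|^2$ plus the variance of $\frac1N\sum_i(\xi^i_t-(1-\alpha)\zeta^i_t)$.

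Next I would bound that variance. Because the mini-batches are independent across clients and each summand is mean zero, the variance of the average equals $\frac1{N^2}\sum_i\mathbb E\|\xi^i_t-(1-\alpha)\zeta^i_t\|^2$. Writing $\xi^i_t-(1-\alpha)\zeta^i_t=\alpha\,\xi^i_t+(1-\alpha)(\xi^i_t-\zeta^i_t)$ and using $\|a+b\|^2\le2\|a\|^2+2\|b\|^2$, the first piece contributes $2\alpha^2\sigma^2/b$ by Lemma~\ref{lem:3}, eq.~\eqref{eq:14}, and the second piece is $2(1-\alpha)^2$ times the mini-batch variance of the per-sample gradient difference $\nabla_x f_i(x^i_t,y^i_t;\xi)-\nabla_x f_i(x^i_{t-1},y^i_{t-1};\xi)$, which by the (mean-squared form of) smoothness Assumption~\ref{ass:3} is at most $\frac{2(1-\alpha)^2L_f^2}{b}\big(\|x^i_t-x^i_{t-1}\|^2+\|y^i_t-y^i_{t-1}\|^2\big)$. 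Summing over $i$ and dividing by $N^2$ already yields the claimed $\frac{2\alpha^2\sigma^2}{Nb}$ term, together with a term proportional to $\frac{L_f^2}{N^2 b}\sum_i\big(\|x^i_t-x^i_{t-1}\|^2+\|y^i_t-y^i_{t-1}\|^2\big)$.

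Finally I would re-express that one-step displacement. On a local step ($\mathrm{mod}(t,Q)\ne0$), lines~12--13 give $\|x^i_t-x^i_{t-1}\|^2+\|y^i_t-y^i_{t-1}\|^2=\hat c^2\eta^2\|u^i_t\|^2+c^2\eta^2\|v^i_t\|^2$; on a communication step one uses $x^i_t=\bar x_t=\bar x_{t-1}-\hat c\eta\bar u_t$ and $y^i_t=\bar y_t=\bar y_{t-1}+c\eta\bar v_t$, so $x^i_t-x^i_{t-1}=(\bar x_{t-1}-x^i_{t-1})-\hat c\eta\bar u_t$ and likewise for $y$. Applying the identity $\sum_i\|a^i\|^2=\sum_i\|a^i-\bar a\|^2+N\|\bar a\|^2$ to $\{u^i_t\}_i$ and $\{v^i_t\}_i$ splits $\sum_i\big(\|x^i_t-x^i_{t-1}\|^2+\|y^i_t-y^i_{t-1}\|^2\big)$ into a client-drift part $\sum_i\big(\hat c^2\eta^2\|u^i_t-\bar u_t\|^2+c^2\eta^2\|v^i_t-\bar v_t\|^2\big)$ and an averaged part $N\big(\hat c^2\eta^2\|\bar u_t\|^2+c^2\eta^2\|\bar v_t\|^2\big)$; the factor $\frac{Q-1}{Q}$ in front of the drift part records that $u^i_t-\bar u_t$ and $v^i_t-\bar v_t$ vanish on the one-in-$Q$ communication steps (where the residual displacement folds into the $\|\bar u_t\|^2,\|\bar v_t\|^2$ terms), and the enlarged constants $8$ and $4$ in the statement leave room for the Young-inequality splittings needed at the communication step. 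Collecting terms gives the first display, and the second follows identically with $(\alpha,u)\mapsto(\beta,v)$. I expect this last step to be the main obstacle: the communication step must be handled so that the displacement is charged only to the current drift $\|u^i_t-\bar u_t\|^2$ and averaged momentum $\|\bar u_t\|^2$, rather than to an accumulation of drifts from earlier iterations of the round — precisely the bookkeeping of Lemma~A.6 in \cite{khanduri2021stem}, which we follow.
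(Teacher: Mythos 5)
Your proposal is correct and follows exactly the route the paper intends: the paper's own ``proof'' is a one-line deferral to Lemma~A.6 of \cite{khanduri2021stem}, and your chain (exact recursion for the tracking error, vanishing cross term by unbiasedness, splitting the increment as $\alpha\,\xi^i_t+(1-\alpha)(\xi^i_t-\zeta^i_t)$, mean-square smoothness, then substitution of the update rules) is precisely that argument, worked out in more detail than the paper supplies. The one loose end you flag --- how the communication-step displacement $x^i_t-x^i_{t-1}=(\bar x_{t-1}-x^i_{t-1})-\hat c\eta\bar u_t$ gets charged to the single-time-$t$ drift $\sum_i\|u^i_t-\bar u_t\|^2$ rather than to the accumulated round drift --- is a looseness already present in the lemma statement itself (whose ``$=$'' should in any case be ``$\le$''), so it does not count against your reconstruction.
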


\begin{lemma} \label{lem:A8}
Assume that the stochastic partial derivatives $\{\bar{u}_t\}$ and $\{\bar{v}_t\}$ are generated from Algorithm \ref{alg:1}, and set $s_{t}=\lfloor t / Q\rfloor $, $\eta \leq \frac{1}{20 Q L}, \alpha = c_1 \eta^2, \beta = c_2 \eta^2$, where we set $c_1 \leq \frac{30 L^2}{bN}$ and $c_2 \leq \frac{30 L^2}{bN}$, we have
\begin{align}
\frac{2}{5 N}\sum_{t = s_t Q}^{\bar{s}} \sum_{i=1}^{N} \mathbb{E} [\|u^i_{t} - \bar{u}_{t} \|^{2} + \|v^i_{t} - \bar{v}_{t}\|^{2} ] &\leq \frac{1}{12} \sum_{t=s_t Q}^{\bar{s}} \mathbb{E}[\hat{c}^2\|\bar{u}_{t}\|^{2} + c^2\|\bar{v}_{t}\|^{2}] \nonumber\\
+ \left[\frac{ \sigma^{2} (c_1^{2} + c_2^{2})}{30 b L^{2}} + \frac{\zeta^{2} (c_1^{2} + c_2^{2})}{12 L^{2}} \right] \sum_{t = s_t Q}^{\bar{s}} \eta^{2}
\end{align}
\end{lemma}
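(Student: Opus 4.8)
The plan is to exploit that, inside every local round $[s_tQ,(s_t+1)Q-1]$, the server step (lines 7--10 of Algorithm \ref{alg:1}) resets all estimators to their average, so $\sum_i\|u^i_{s_tQ}-\bar u_{s_tQ}\|^2=\sum_i\|v^i_{s_tQ}-\bar v_{s_tQ}\|^2=0$ at the round boundary; one then unrolls the momentum recursions over the round and controls the accumulated per-step discrepancy via smoothness and the stepsize/momentum scaling.

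First I would subtract the averages in lines 16--17, obtaining $u^i_{\tau+1}-\bar u_{\tau+1}=(1-\alpha)(u^i_\tau-\bar u_\tau)+(\delta^i_\tau-\bar\delta_\tau)$, where $\delta^i_\tau:=\nabla_x f_i(x^i_\tau,y^i_\tau;\mathcal{B}^i_\tau)-(1-\alpha)\nabla_x f_i(x^i_{\tau-1},y^i_{\tau-1};\mathcal{B}^i_\tau)$ (and analogously $\tilde\delta^i_\tau$ for $v$, with $\beta$ in place of $\alpha$). Iterating from $s_tQ$ and using $(1-\alpha)^k\le 1$ with Cauchy--Schwarz over the at most $Q$ summands gives $\|u^i_t-\bar u_t\|^2\le Q\sum_{\tau=s_tQ}^{t-1}\|\delta^i_\tau-\bar\delta_\tau\|^2$; summing over $i$, discarding the centering by Lemma \ref{lem:2}, and then summing over $t$ picks up a second factor $Q$, so
\begin{align}
\sum_{t=s_tQ}^{\bar s}\sum_{i=1}^N\mathbb{E}\big[\|u^i_t-\bar u_t\|^2+\|v^i_t-\bar v_t\|^2\big]\;\le\;Q^2\sum_{\tau=s_tQ}^{\bar s}\sum_{i=1}^N\mathbb{E}\big[\|\delta^i_\tau\|^2+\|\tilde\delta^i_\tau\|^2\big]. \nonumber
\end{align}

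Next I would bound the per-step discrepancy. Writing $\delta^i_\tau=[\nabla_x f_i(x^i_\tau,y^i_\tau;\mathcal{B}^i_\tau)-\nabla_x f_i(x^i_{\tau-1},y^i_{\tau-1};\mathcal{B}^i_\tau)]+\alpha\nabla_x f_i(x^i_{\tau-1},y^i_{\tau-1};\mathcal{B}^i_\tau)$, the first bracket is bounded in expectation by $L_f^2\eta^2(\hat c^2\|u^i_\tau\|^2+c^2\|v^i_\tau\|^2)$ via the mean-square $L_f$-smoothness of the component gradients (Assumption \ref{ass:3}) and the local updates $x^i_\tau-x^i_{\tau-1}=-\hat c\eta u^i_\tau$, $y^i_\tau-y^i_{\tau-1}=c\eta v^i_\tau$; the $\alpha$-weighted term, using that $\mathcal{B}^i_\tau$ is independent of the past together with Lemma \ref{lem:3} and Assumption \ref{ass:1}(ii), contributes $O(\alpha^2)$ times a variance term $\sigma^2/b$, a heterogeneity term $\zeta^2$, and lower-order gradient-magnitude terms. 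Using $\sum_i\|u^i_\tau\|^2\le 2\sum_i\|u^i_\tau-\bar u_\tau\|^2+2N\|\bar u_\tau\|^2$ (and likewise for $v$), the right-hand side of the displayed inequality becomes a self-coupling term $c_3Q^2L_f^2\eta^2(\hat c^2+c^2)\sum_\tau\sum_i\mathbb{E}[\|u^i_\tau-\bar u_\tau\|^2+\|v^i_\tau-\bar v_\tau\|^2]$, plus $c_3Q^2L_f^2\eta^2N\sum_\tau(\hat c^2\|\bar u_\tau\|^2+c^2\|\bar v_\tau\|^2)$, plus $c_4Q^2(\alpha^2+\beta^2)N\sum_\tau(\sigma^2/b+\zeta^2)$, for absolute constants $c_3,c_4$.

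Finally, since $\eta\le\frac{1}{20QL}$ gives $Q^2L_f^2\eta^2\le\frac{1}{400}\frac{L_f^2}{L^2}\le\frac{1}{400}$, the self-coupling coefficient is $\le\frac12$ (the stepsizes $\hat c,c$ being bounded), so that term can be absorbed into the left-hand side; then multiplying through by $\frac{2}{5N}$, using $Q^2L_f^2\eta^2\le\frac{1}{400}$ again for the $\|\bar u_\tau\|^2,\|\bar v_\tau\|^2$ group and $Q^2(\alpha^2+\beta^2)=(c_1^2+c_2^2)\eta^2\cdot Q^2\eta^2\le\frac{(c_1^2+c_2^2)\eta^2}{400L^2}$ for the noise group, and invoking $c_1,c_2\le\frac{30L^2}{bN}$, the coefficients collapse to exactly $\frac{1}{12}$ on $\sum_t(\hat c^2\|\bar u_t\|^2+c^2\|\bar v_t\|^2)$ and $\frac{\sigma^2(c_1^2+c_2^2)}{30bL^2}+\frac{\zeta^2(c_1^2+c_2^2)}{12L^2}$ on $\sum_t\eta^2$. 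I expect the main obstacle to be the middle step: organizing the decomposition of $\delta^i_\tau$ (and $\tilde\delta^i_\tau$) so that the stochastic-gradient pieces genuinely carry $\alpha^2$ (resp.\ $\beta^2$) rather than $1$, separating cleanly the averaged-estimator residuals and the self-coupling feedback, and verifying that the feedback coefficient stays below $\frac12$ so the absorption is legitimate; the remaining constant-chasing to land on exactly $\frac{1}{12},\frac{1}{30},\frac{1}{12}$ is mechanical but must be tracked carefully.
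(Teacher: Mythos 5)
Your overall strategy is the same as the paper's: exploit the reset $\sum_i\|u^i_{s_tQ}-\bar u_{s_tQ}\|^2=0$ at each round boundary, unroll the momentum recursion for the estimator consensus error over at most $Q$ local steps, split the per-step increment into a smoothness-controlled drift piece (which yields the self-coupling and the $\|\bar u_t\|^2,\|\bar v_t\|^2$ terms) and an $\alpha^2$-, $\beta^2$-weighted stochastic piece (which yields the $\sigma^2/b$ and $\zeta^2$ terms), and absorb the self-coupling using $\eta\le\frac{1}{20QL}$. The only structural difference is that the paper runs a Young-inequality recursion $(1+\gamma)(1-\alpha)^2\sum_i\|u^i_t-\bar u_t\|^2+(1+\tfrac1\gamma)(\cdot)$ with $\gamma=1/Q$ and then uses $(1+\tfrac{27}{25Q})^Q\le 3$, whereas you unroll first and apply Cauchy--Schwarz to the at most $Q$ increments; these are interchangeable and produce comparable constants.

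There is, however, one step in your write-up that would fail as literally stated. In your displayed inequality you reduce to $Q^2\sum_\tau\sum_i\mathbb{E}[\|\delta^i_\tau\|^2+\|\tilde\delta^i_\tau\|^2]$ after ``discarding the centering by Lemma \ref{lem:2}.'' That discard is only legitimate for the gradient-difference bracket of $\delta^i_\tau$. For the $\alpha$-weighted piece, the uncentered quantity is $\alpha^2\|\nabla_x f_i(x^i_{\tau-1},y^i_{\tau-1};\mathcal{B}^i_\tau)\|^2$, i.e.\ $\alpha^2$ times a full stochastic gradient norm, which is not controlled by Assumption \ref{ass:1}(ii) or Lemma \ref{lem:3} and would require a bounded-gradient assumption the paper does not make in this setting (these are not ``lower-order gradient-magnitude terms''). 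The paper keeps that piece centered as $\alpha^2\sum_i\|\nabla_x f_i(x^i_{\tau-1},y^i_{\tau-1};\mathcal{B}^i_\tau)-\frac1N\sum_j\nabla_x f_j(x^j_{\tau-1},y^j_{\tau-1};\mathcal{B}^j_\tau)\|^2$, which Lemma \ref{lem:3} bounds by $\frac{2N\sigma^2}{b}+6N\zeta^2$ plus a consensus error $12L_f^2\sum_i(\|x^i_{\tau-1}-\bar x_{\tau-1}\|^2+\|y^i_{\tau-1}-\bar y_{\tau-1}\|^2)$; that consensus error in turn folds back into the self-coupling term via the local-update identity (the paper's \eqref{eq:29}) and is absorbed together with the drift piece. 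You flag the decomposition of $\delta^i_\tau$ as the main obstacle and you do cite Lemma \ref{lem:3}, so the intent is recoverable, but the fix is mandatory: apply Lemma \ref{lem:2} only to the first bracket, keep the $\alpha$-part centered, and account for the extra consensus-error feedback it generates before the absorption step.
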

\begin{proof}
\begin{align}
&\sum_{i=1}^{N}\mathbb{E}\|u^i_{t+1} - \bar{u}_{t+1} \|^{2}  \nonumber\\
=& \sum_{i=1}^{N}\mathbb{E} \| \nabla_x f_i (x^i_{t}, y^i_{t}; \mathcal{B}^i_{t}) + (1-\alpha)(u^i_{t} - \nabla_x f_i(x^i_{t-1}, y^i_{t-1}; \mathcal{B}^i_{t})) \nonumber\\
-& \frac{1}{N} \sum_{j=1}^{N}[\nabla_x f_j (x^j_{t}, y^j_{t}; \mathcal{B}^j_{t}) + (1-\alpha)(u^j_{t} - \nabla_x f_j(x^j_{t-1}, y^j_{t-1}; \mathcal{B}^j_{t}))] \|^{2} \nonumber\\
=& \sum_{i=1}^{N} \mathbb{E} \|(1-\alpha)(u^i_{t} - \bar{u}_{t}) + [ \nabla_x f_i (x^i_{t}, y^i_{t}; \mathcal{B}^i_{t}) - \frac{1}{N} \sum_{i=1}^{N}  \nabla_x f_j (x^j_{t},y^j_{t}; \mathcal{B}^j_{t}) \nonumber\\
&- (1-\alpha)[\nabla_x f_i (x^i_{t-1}, y^i_{t-1}; \mathcal{B}^i_{t}) - \frac{1}{N} \sum_{j=1}^{N}  \nabla_x f_j (x^j_{t-1}, y^j_{t-1}; \mathcal{B}^j_{t})] \|^{2} \nonumber\\
\stackrel{(a)}{\leq}& (1 + \gamma)(1 - \alpha)^{2} \sum_{i=1}^{N} \mathbb{E}\|u^i_{t} - \bar{u}_{t} \|^{2} \nonumber\\
+& (1 + \frac{1}{\gamma}) \sum_{i=1}^{N} \mathbb{E} \| [\nabla_x f_i (x^i_{t}, u^i_{t}; \mathcal{B}^i_{t}) - \frac{1}{N} \sum_{j=1}^{N} \nabla_x f_j (x^j_{t}, y^j_{t}; \mathcal{B}^j_{t})] \nonumber\\
-& (1-\alpha)[ \nabla_x f_i (x^i_{t-1}, y^i_{t-1}; \mathcal{B}^i_{t}) - \frac{1}{N} \sum_{j=1}^N \nabla_x f_j (x^j_{t-1}, y^j_{t-1}; \mathcal{B}^j_{t})] \|^{2} \nonumber
\end{align}
where (a) is due to Young's inequality. For the second term, we have 
\begin{align} 
& \sum_{i=1}^{N} \mathbb{E} \| \nabla_x f_i (x^i_{t}, y^i_{t}; \mathcal{B}^i_{t}) - \frac{1}{N} \sum_{j=1}^{N} \nabla_x f_j (x^j_{t}, y^j_{t}; \mathcal{B}^j_{t}) \nonumber\\
&- (1-\alpha)[ \nabla_x f_i (x^i_{t-1}, y^i_{t-1}; \mathcal{B}^i_{t}) - \frac{1}{N} \sum_{j=1}^{N} \nabla_x f_j (x^j_{t-1}, y^j_{t-1}; \mathcal{B}^j_{t})] \|^{2}\nonumber\\
\leq& 2 \sum_{i=1}^{N} \mathbb{E} \|[\nabla_x f_i (x^i_{t}, y^i_{t}; \mathcal{B}^i_{t}) - \frac{1}{N} \sum_{j=1}^{N} \nabla_x f_j (x^j_{t}, y^j_{t}; \mathcal{B}^j_{t})] \nonumber\\
&- [\nabla_x f_i (x^i_{t-1}, y^i_{t-1}; \mathcal{B}^i_{t}) 
- \frac{1}{N} \sum_{j=1}^{N} \nabla_x f_j (x^j_{t-1}, y^j_{t-1}; \mathcal{B}^j_{t})]\|^{2} \nonumber\\
&+ 2 \alpha^{2} \sum_{i=1}^{N} \mathbb{E}\|\nabla_x f_i (x^i_{t-1}, y^i_{t-1}; \mathcal{B}^i_{t}) - \frac{1}{N} \sum_{j=1}^{N} \nabla_x f_j (x^j_{t-1}, y^j_{t-1}; \mathcal{B}^j_{t})\|^{2} \nonumber\\
\stackrel{(a)}{\leq}& 2 \sum_{i=1}^{N} \mathbb{E} \| \nabla_x f_i (x^i_{t}, y^i_{t}; \mathcal{B}^i_{t}) - \nabla_x f_i (x^i_{t-1}, y^i_{t-1}; \mathcal{B}^i_{t}) \|^{2} \nonumber\\
&+ 2 \alpha^{2} \sum_{i=1}^{N} \mathbb{E}\| \nabla_x f_i (x^i_{t-1}, y^i_{t-1}; \mathcal{B}^i_{t}) - \frac{1}{N} \sum_{j=1}^{N} \nabla_x f_j (x^j_{t-1}, y^j_{t-1}; \mathcal{B}^j_{t})\|^{2} \nonumber\\
\stackrel{(b)}{\leq}& 2 L_f^{2} \sum_{i=1}^{N} \mathbb{E}[\|x^i_{t} - x^i_{t-1}\|^2 + \|y^i_{t} - y^i_{t-1}\|^2 ]   + 2 \alpha^{2} \sum_{i=1}^{N} \mathbb{E} \|\nabla_x f_i (x^i_{t-1}, y^i_{t-1}; \mathcal{B}^i_{t}) \nonumber\\
&- \frac{1}{N} \sum_{j=1}^{N} \nabla_x f_j (x^j_{t-1}, y^j_{t-1}; \mathcal{B}^j_{t})\|^{2} \nonumber
\end{align}
where (a) is due to Lemma \ref{lem:2}; (b) is due to Assumption \ref{ass:2}. For the last term in the above inequality, we have
\begin{align} \label{eq:32}
&\sum_{i=1}^{N} \mathbb{E} \|\nabla_x f_i (x^i_{t-1}, y^i_{t-1}; \mathcal{B}^i_{t}) - \frac{1}{N} \sum_{j=1}^{N} \nabla_x f_j (x^j_{t-1}, y^j_{t-1}; \mathcal{B}^j_{t})\|^{2} \nonumber\\
=& \sum_{i=1}^{N}\mathbb{E}\| [\nabla_x f_i (x^i_{t-1}, y^i_{t-1}; \mathcal{B}^i_{t}) - \nabla_x f_{t-1,i}] - \frac{1}{N} \sum_{j=1}^{N} [\nabla_x f_j (x^j_{t-1}, y^j_{t-1}; \mathcal{B}^j_{t}) - \nabla_x f_{t-1,j}] \nonumber\\
&+ [\nabla_x f_{t-1,i} - \nabla_x \bar{F}_{t-1}] \|^{2} \nonumber\\
\leq & 2 \sum_{i=1}^{N}\mathbb{E} \| [\nabla_x f_i (x^i_{t-1}, y^i_{t-1}; \mathcal{B}^i_{t}) - \nabla_x f_{t-1,i}] - \frac{1}{N} \sum_{j=1}^{N} [\nabla_x f_j (x^j_{t-1}, y^j_{t-1}; \mathcal{B}^j_{t}) -  \nabla_x f_{t-1,j}]\|^{2} \nonumber\\
&+ 2\sum_{i=1}^{N}\mathbb{E}\|\nabla_x f_{t-1,i} - \nabla_x \bar{F}_{t-1} \|^2 \nonumber\\
\stackrel{(a)}{\leq}& 2 \sum_{i=1}^{N}\mathbb{E} \|\nabla_x f_i (x^i_{t-1}, y^i_{t-1}; \mathcal{B}^i_{t}) - \nabla_x f_{t-1,i} \|^2 + 2\sum_{i=1}^{N}\mathbb{E}\|\nabla_x f_{t-1,i} - \nabla_x \bar{F}_{t-1} \|^2 \nonumber\\
\stackrel{(b)}{\leq}& \frac{2 N \sigma^{2}}{b} + 6 N \zeta^{2} + 12 L_f^{2} \sum_{i=1}^N \mathbb{E}\|x^i_{t-1} - \bar{x}_{t-1}\|^{2} + 12 L_f^{2} \sum_{i=1}^N \mathbb{E}\|y^i_{t-1} -  \bar{y}_{t-1}\|^{2}
\end{align}
where (a) is due to Lemma \ref{lem:2} and the last inequality is due to Lemma \ref{lem:3}. Therefore, by combining above inequalities, we have 
\begin{align}
&\sum_{i=1}^{N} \mathbb{E} \|u^i_{t+1} - \bar{u}_{t+1}\|^{2} 
\leq (1 - \alpha)^{2}(1 + \gamma) \sum_{i=1}^{N} \mathbb{E} \| u^i_{t} - \bar{u}_{t}\|^{2} + \frac{4 N \sigma^{2}}{b}(1+\frac{1}{\gamma}) \alpha^{2} \nonumber\\
&+ 12 N \zeta^{2}(1 + \frac{1}{\gamma}) \alpha^{2} + 2 L_f^{2}(1+\frac{1}{\gamma})  \sum_{i=1}^N \mathbb{E}[\|x_{t, i} - x^i_{t-1}\|^{2} + \|y^i_{t} - y^i_{t-1}\|^{2}] \nonumber\\
&+  24 L_f^{2}(1 + \frac{1}{\gamma}) \alpha^{2}  \sum_{i=1}^N \mathbb{E}[\|x^i_{t-1} -  \bar{x}_{t-1, i}\|^{2} + \|y^i_{t-1} -  \bar{y}_{t-1, i}\|^{2}]\nonumber\\
\leq& (1-\alpha)^{2}(1 + \gamma) \sum_{i=1}^{N} \mathbb{E} \|u^i_{t} - \bar{u}_{t}\|^{2} + \frac{4 N \sigma^{2}}{b}(1+\frac{1}{\gamma}) \alpha^{2} + 12 N \zeta^{2}(1 + \frac{1}{\gamma}) \alpha^{2} \nonumber\\
&+ 4 L_f^{2}(1+\frac{1}{\gamma}) \sum_{i=1}^{N} \mathbb{E}\big[\|\hat{c} \eta (u^i_{t} - \bar{u}_{t})\|^{2} + \|\hat{c}\eta \bar{u}_{t}\|^{2} + \|c\eta (v^i_{t} - \bar{v}_{t})\|^{2} + \|c\eta \bar{v}_{t}\|^{2}\big] \nonumber\\
&+ 24 L_f^{2}(1+\frac{1}{\gamma}) \alpha^{2} (Q-1) \sum_{s=s_{t} + 1}^{t-1} \eta^{2} \sum_{i=1}^{N}\mathbb{E}[\|\hat{c}(u^i_{s}-\bar{u}_{s})\|^{2} + \|c(v^i_{s} - \bar{v}_{s})\|^{2}] \label{eq:27}
\end{align} 
where the inequality is due to the update step 11 in algorithm \ref{alg:1} and the fact that.
(1) if $t = s_t Q$, we have 
\begin{align}
\sum_{i=1}^{N}\left\|x^i_{s_{t} Q} - \bar{x}_{s_{t} Q}\right\|^{2}=0 
\end{align}
(2) if  $t \geq s_t Q$, we have
\begin{align} \label{eq:29}
\sum_{i=1}^{N}\left\|x^i_{t} - \bar{x}_{t}\right\|^{2} &= \sum_{i=1}^{N}\left\|x^i_{s_{t} Q} - \bar{x}_{s_{t} Q} - \left(\sum_{s = s_t Q}^{t-1} \hat{c}\eta u^i_{s+1} - \sum_{s=s_{t} Q}^{t-1}\hat{c}\eta  \bar{u}_{s+1}\right)\right\|^{2} \nonumber\\
&= \sum_{i=1}^{N}\left\|\sum_{s=s_{t} Q}^{t-1} \hat{c} \eta \left[u^i_{s+1} - \bar{u}_{s+1}\right]\right\|^{2} \nonumber\\
&\leq (Q-1) \sum_{s=s_{t} Q}^{t-1} \hat{c}^2 \eta^{2} \sum_{i=1}^{N}\| u^i_{s+1}-\bar{u}_{s+1}\|^{2} 
\end{align}

Given that $\alpha, \beta, \hat{c}, c \leq 1$, then based on \eqref{eq:27}, we have 
\begin{align} \label{eq:35}
&\sum_{i=1}^{N}\mathbb{E}[\|u^i_{t+1} - \bar{u}_{t+1}\|^{2} + \|v^i_{t+1} - \bar{v}_{t+1}\|^{2}] \nonumber\\
=& [(1 + \gamma) + 8 L_f^{2}(1 + \frac{1}{\gamma})\eta^2]\sum_{i=1}^{N} \mathbb{E} [\|u^i_{t} - \bar{u}_{t}\|^{2} + \|v^i_{t} - \bar{v}_{t}\|^{2}] + 12 N \zeta^{2}(1 + \frac{1}{\gamma}) (\alpha^{2} + \beta^{2} )\nonumber\\
+& 8 N L_f^{2}(1+\frac{1}{\gamma}) \eta^2\mathbb{E}[\hat{c}^2\|\bar{u}_{t}\|^{2} + c^2\|\bar{v}_{t}\|^{2}]
+ \frac{4 N \sigma^{2}}{b}(1 + \frac{1}{\gamma}) (\alpha^{2} + \beta^{2}) \nonumber\\
+& 24 L_f^{2}(1+\frac{1}{\gamma}) (\alpha^{2} + \beta^2) (Q-1) \sum_{s=s_{t}}^{t-1} \eta^{2} \sum_{i=1}^{N}\mathbb{E}[\hat{c}^2\|u^i_{s} - \bar{u}_{s}\|^{2} + c^2\|v^i_{s} - \bar{v}_{s}\|^{2}]
\end{align}
Set $\gamma=\frac{1}{Q}$ and $\eta \leq \frac{1}{20 L Q}$
\begin{align} \label{eq:36}
(1+\gamma) + 8 L_f^{2}(1+\frac{1}{\gamma}) \eta^{2} & \leq
1+\frac{1}{Q} + 8 L^{2} (1+Q) \eta^{2} \nonumber\\
& \leq 1 + \frac{1}{Q} + \frac{Q + 1}{50 Q^{2}} \nonumber\\
& \leq 1+\frac{27}{25 Q}
\end{align}
Putting the \cref{eq:36}  in \cref{eq:35}, and considering $\gamma=\frac{1}{Q}$ and $c\eta \leq \frac{1}{20 L Q}$, we have 
\begin{align} \label{eq:41}
&\sum_{i=1}^{N}\mathbb{E}[\|u^i_{t+1} - \bar{u}_{t+1}\|^{2} + \|v^i_{t+1}-\bar{v}_{t}\|^{2}] \nonumber\\
\leq& (1+\frac{27}{25 Q}) \sum_{i=1}^{N} \mathbb{E}[\|u^i_{t} - \bar{u}_{t} \|^{2} + \|v^i_{t}-\bar{v}_{t}\|^{2}] + 8 N L_f^{2}(1 + Q) \eta^{2} \mathbb{E}[\hat{c}^2\| \bar{u}_{t}\|^{2} + c^2\|\bar{v}_{t}\|^{2} ] \nonumber\\
&+ \frac{4 N \sigma^{2}}{b}(1 + Q) (\alpha^{2} + \beta^2 ) + 12 N \zeta^{2}(1+Q) (\alpha^{2} + \beta^2) \nonumber\\
&+ 24 (\alpha^{2} + \beta^2) L_f^{2}(1+Q)(Q-1) \sum_{s=s_t Q}^{t-1} \eta^{2} \sum_{i=1}^{N}\mathbb{E}[\hat{c}^2\| u^i_{s}-\bar{u}_{s}\|^{2} + c^2\| v^i_{s}-\bar{v}_{s}\|^{2}] \nonumber\\
\leq& (1+\frac{27}{25 Q}) \sum_{i=1}^{N}\mathbb{E}[\| u^i_{t} - \bar{u}_{t}\|^{2} + \| v_{t, i} - \bar{v}_{t}\|^{2}] + \frac{4 N L \eta}{5} \mathbb{E}[\hat{c}^2\| \bar{u}_{t}\|^{2} + c^2\| \bar{v}_{t} \|^{2}] \nonumber\\
&+  \frac{2N\sigma^2 (c_1^2 + c_2^2)}{5 b L} \eta^{3} + \frac{6 N \zeta^{2} (c_1^{2} + c_2^2)}{5 L} \eta^{3} \nonumber\\
&+ 24 L_f^{2} Q^{2} (c_1^2 + c_2^2) \eta^{4} \sum_{s=s_t Q}^{t-1} \eta^{2} \sum_{i=1}^{N} \mathbb{E}[\hat{c}^2\|u^i_{s}-\bar{u}_{s}\|^{2} + c^2\|v^i_{s} - \bar{v}_{s}\|^{2}]
\end{align}
When $\mod(t, Q) = 0$ (i.e.  $t = s_t Q$),
$\sum_{i=1}^{N}\|u^i_{t} - \bar{u}_t\|^{2}=0$ and $\sum_{i=1}^{N}\|v^i_{t} - \bar{v}_t\|^{2}=0$. Applying \eqref{eq:41} recursively 
, we get
\begin{align}
&\sum_{i=1}^{N} \mathbb{E}[\|u^i_{t+1} - \bar{u}_{t+1} \|^{2} + \|v^i_{t+1} - \bar{v}_{t+1}\|^{2}] \nonumber\\
\leq& \frac{4 N L}{5} \sum_{s=s_t Q}^{t}(1 + \frac{27}{25q})^{t-s} \eta \mathbb{E}[\|\hat{c}\bar{u}_{s}\|^{2} + \|c\bar{v}_{s}\|^{2}] + \frac{(2N \sigma^{2} + 6 N \zeta^{2} b) (c_1^{2} + c_2^2)}{5b L}  \sum_{s=s_t Q}^{t}(1+\frac{27}{25 Q})^{t-s} \eta^{3} \nonumber\\
+& 24 L_f^{2} Q^{2} (c_1^{2} + c_2^2) \sum_{s=s_t Q}^{t}(1+\frac{27}{25 Q})^{t-s} \eta^{4} \sum_{\bar{s}=s_t Q}^{s} \eta_{\bar{s}}^{2} \sum_{i=1}^{N} \mathbb{E} [\hat{c}^2\|(u_{\bar{s},i} - \bar{u}_{\bar{s}})\|^{2} + c^2\|(v_{\bar{s},i} - \bar{v}_{\bar{s}})\|^{2}] \nonumber\\
\leq& \frac{4 N L}{5}  \sum_{s=s_{t} Q}^{t}(1+\frac{27}{25 Q})^{Q} \eta \mathbb{E} [\|\hat{c} \bar{u}_{s}\|^{2} + \|c\bar{v}_{s}\|^{2}] + \frac{(2N \sigma^{2} + 6 N \zeta^{2} b) (c_1^{2} + c_2^2)}{5b L}  \sum_{s=s_{t} Q}^{t}\left(1+\frac{27}{25 Q}\right)^{Q} \eta^{3} \nonumber\\
&+ 24 L_f^{2} Q^{3} (c_1^{2} + c_2^2) (\frac{1}{20 L Q})^{5}(1+\frac{27}{25 Q})^{Q} \sum_{s=s_{t} Q}^{t} \eta \sum_{i=1}^{N} \mathbb{E}[\hat{c}^2\|u^i_{s} - \bar{u}_{s} \|^{2} + c^2\|v^i_{s} - \bar{v}_{s}\|^{2}] \nonumber\\
\leq& 3 N L \sum_{s=s_tQ}^{t} \eta \mathbb{E}[\hat{c}^2\|\bar{u}_{s}\|^{2} + c^2\|\bar{v}_{s}\|^{2}] + \frac{(6 N \sigma^{2} + 18 N \zeta^{2} b) (c_1^{2} + c_2^2)}{5b L} \sum_{s=s_tQ}^{t} \eta^{3} \nonumber\\
&+ 72 L_f^{2} Q^{3} (c_1^{2} + c_2^2)(\frac{1}{20 L Q})^5 \sum_{s=s_tQ}^{t} \eta \sum_{i=1}^{N} \mathbb{E}[\hat{c}^2\|u^i_{s} - \bar{u}_{s} \|^{2} + c^2\|v^i_{s} - \bar{v}_{s} \|^{2}]
\end{align}
where the third inequality is due to $(1 + 27 / 25q)^{Q} \leq e^{27 / 25} \leq 3$. Multiplying $\eta$ on both side and summing over $[s_t Q, \bar{s}]$ in one inner loop, we have
\begin{align}
&\sum_{t = s_t Q}^{\bar{s}} \eta \sum_{i=1}^{N} \mathbb{E}[\| u^i_{t+1} - \bar{u}_{t + 1} \|^{2} + \| v^i_{t+1} - \bar{v}_{t + 1} \|^{2} ] \nonumber\\
\leq& 3 N L \sum_{t=s_{t} Q}^{\bar{s}} \eta \sum_{s=s_{t} Q}^{t} \eta \mathbb{E}[\|\hat{c} \bar{u}_{s}\|^{2} + \|c \bar{v}_{s}\|^{2}] + \frac{(6 N \sigma^{2} + 18 N \zeta^{2} b) (c_1^{2} + c_2^2) }{5 b L} \sum_{t = s_t Q}^{\bar{s}} \eta \sum_{s = s_t Q}^{t} \eta^{3} \nonumber\\
&+ 72 L_f^{2} Q^{3} (c_1^{2} + c_2^2)(\frac{1}{20 L Q})^5 \sum_{t=s_t Q}^{\bar{s}} \eta \sum_{s=s_t Q}^{t} \eta \sum_{i=1}^{N} \mathbb{E}[\|u^i_{s} - \bar{u}_{s}\|^{2} + \|v^i_{s} - \bar{v}_{s}\|^{2}] \nonumber\\
\leq& 3 N L (\sum_{t=s_t Q}^{\bar{s}} \eta) \sum_{t=s_{t} Q}^{\bar{s}} \eta \mathbb{E}[\hat{c}^2 \|\bar{u}_{t} \|^{2} + c^2\|\bar{v}_{t} \|^{2} ] + \frac{(6 N \sigma^{2} + 18 N \zeta^{2} b) (c_1^{2} + c_2^2) }{5 b L} (\sum_{t=s_t Q}^{\bar{s}} \eta) \sum_{t=s_t Q}^{\bar{s}} \eta^{3} \nonumber\\
&+ 72 L_f^{2} Q^{3} (c_1^{2} + c_2^2) (\frac{1}{20 L Q})^{5}(\sum_{t = s_t Q}^{\bar{s}} \eta) \sum_{t = s_t Q}^{\bar{s}} \eta \sum_{i=1}^{N} \mathbb{E}[\hat{c}^2\| u^i_{t} - \bar{u}_{t}\|^{2} + c^2\| v^i_{t} - \bar{v}_{t}\|^{2}] \nonumber\\
\leq& \frac{N}{5} \sum_{t = s_t Q}^{\bar{s}} \eta \mathbb{E}[\hat{c}^2\|\bar{u}_{t}\|^{2} + c^2 \|\bar{v}_{t}\|^{2} ] + \left[\frac{2 N \sigma^{2} (c_1^{2} + c_2^2)}{25 b L^{2}}  + \frac{N \zeta^{2} (c_1^{2} + c_2^2)}{5 L^{2}} \right] \sum_{t=s_t Q}^{\bar{s}} \eta^{3} \nonumber\\
&+  \frac{72 L_f^{2} q^{4} (c_1^{2} + c_2^2)}{(20 L q)^6 } \sum_{t=s_{t} Q}^{\bar{s}} \eta \sum_{i=1}^{N} \mathbb{E}[\|u^i_{t} - \bar{u}_{t}\|^{2}  + \| v^i_{t} - \bar{v}_{t}\|^{2}]
\end{align}
where the last inequality holds by the fact that $\eta \leq \frac{1}{20 L Q}$. Therefore,
\begin{align}
[1 - 72 L^{2} q^{4} (c_1^{2} +c_2^2) (\frac{1}{20 L Q})^{6} ]\sum_{t=s_t Q}^{\bar{s}} \eta \sum_{i=1}^{N} \mathbb{E} [\|u^i_{t}-\bar{u}_{t}\|^{2} + \|v^i_{t} - \bar{v}_{t}\|^{2}] \nonumber\\
\leq \frac{N}{5} \sum_{t = s_t Q}^{\bar{s}} \eta \mathbb{E}[\hat{c}^2 \|\bar{u}_{t}\|^{2} + c^2 \|\bar{v}_{t}\|^{2}] 
+ \left[\frac{2 N \sigma^{2} (c_1^{2} + c_2^2)}{25 b L^{2}}  + \frac{N \zeta^{2} (c_1^{2} + c_2^2)}{5 L^{2}} \right] \sum_{t = s_t Q}^{\bar{s}} \eta^{3}
\end{align}
Given that $c_1 \leq \frac{30 L^2}{bN}$ and $c_2 \leq \frac{30 L^2}{bN}$, and $1 - 72 L^{2} q^{4} (c_1^{2} + c_2^{2})(\frac{1}{20 L Q})^{6} \geq \frac{24}{25}$. By multiply $\frac{5}{12N}$ on both size, we have
\begin{align} \label{eq:42}
\frac{2}{5 N}\sum_{t = s_t Q}^{\bar{s}} \sum_{i=1}^{N} \mathbb{E} [\|u^i_{t} - \bar{u}_{t} \|^{2} + \|v^i_{t} - \bar{v}_{t}\|^{2} ] &\leq \frac{1}{12} \sum_{t=s_t Q}^{\bar{s}} \mathbb{E}[\hat{c}^2\|\bar{u}_{t}\|^{2} + c^2\|\bar{v}_{t}\|^{2}] \nonumber\\
+ \left[\frac{ \sigma^{2} (c_1^{2} + c_2^{2})}{30 b L^{2}} + \frac{\zeta^{2} (c_1^{2} + c_2^{2})}{12 L^{2}} \right] \sum_{t = s_t Q}^{\bar{s}} \eta^{2}
\end{align}
\end{proof}
\subsection{Proof of Theorem}
In this section, we show the Proof of \Cref{th:1}.

\begin{proof}
Set 
$\eta = \frac{1}{20 Q L}$, $\alpha = c_1 \eta^2, \beta = c_2 \eta^2, c_1 = \frac{30 L^2}{b N \kappa^{1 - \nu}}, c_2 = \frac{30 L^2}{b N \kappa^{2 - 2 \nu}}, c = \frac{1}{6 \kappa^{1 - \nu}}, \hat{c} = \frac{1}{54 \kappa^{3 - \nu}}$, where $\nu \in [0,1]$


Recall Lemma \ref{lem:5}, Lemma \ref{lem:7} and Lemma \ref{lem:6}, we have following inequalities:
\begin{align}
\mathbb{E}\Phi(\bar{x}_{t+1}) &\leq \mathbb{E} \Phi(\bar{x}_{t}) - \left(\frac{\hat{c}\eta}{2} - \frac{\hat{c}^2\eta^2 L}{2}\right)\mathbb{E}\left\|\bar{u}_{t+1}\right\|^2 + \frac{3\hat{c}\eta}{2} \mathbb{E}\|\bar{u}_{t+1} - \frac{1}{N} \sum_{i=1} \nabla_x f_i\left(x^i_{t}, y^i_{t}\right)\|^2 \nonumber \\
- \frac{\hat{c}\eta}{2} &\mathbb{E}\left\|\nabla \Phi\left(\bar{x}_{t}\right)\right\|^2 + \frac{3\hat{c}\eta L_f^{2}}{N} \sum_{i=1}^{N} \mathbb{E} [\|x^i_{t} - \bar{x}_{t}\|^{2} + \|y^i_{t} - \bar{y}_{t}\|^{2} ]
+ \frac{3 \hat{c}\eta L_f^2}{\mu}[\Phi(\bar{x_t}) - F(\bar{x}_{t}, \bar{y}_{t})]  \nonumber \end{align}

\begin{align}
&\mathbb{E}\|\bar{u}_{t+1} - \nabla_x \bar{F}_t\|^2 - \mathbb{E}\|\bar{u}_{t} - \nabla_x \bar{F}_{t-1}\|^{2}=  - \alpha \mathbb{E}\|\bar{u}_{t} - \nabla_x \bar{F}_{t-1}\|^{2} + \frac{8 L_f^2}{N^2b} \sum_{i=1}^{N} \mathbb{E}[\hat{c}^2 \eta^2\|u^i_{t} - \bar{u}_{t}\|^2 \nonumber\\
+& c^2 \eta^2 \|v^i_{t} - \bar{v}_{t}\|^2 ] + \frac{4 L_f^2}{N b} \mathbb{E}[\hat{c}^2 \eta^2\|\bar{u}_{t}\|^2 + c^2 \eta^2 \|\bar{v}_{t}\|^2 ]
 + \frac{2\alpha^2\sigma^2}{Nb}  \nonumber\\
&\mathbb{E}\|\bar{v}_{t+1} - \nabla_y \bar{F}_t\|^2 - \mathbb{E}\|\bar{v}_{t} - \nabla_y  \bar{F}_{t-1}\|^{2}= - \beta \mathbb{E}\|\bar{v}_{t} - \nabla_y  \bar{F}_{t-1}\|^{2} +  \frac{8 L_f^2}{N^2 b} \sum_{i=1}^{N} \mathbb{E}[\hat{c}^2 \eta^2\|u^i_{t} - \bar{u}_{t}\|^2 \nonumber\\
&+ c^2 \eta^2 \|v^i_{t} - \bar{v}_{t}\|^2 ] + \frac{4 L_f^2}{N b} \mathbb{E}[\hat{c}^2 \eta^2\|\bar{u}_{t}\|^2 + c^2 \eta^2 \|\bar{v}_{t}\|^2 ]
 +  \frac{2\beta^2\sigma^2}{Nb} \nonumber
\end{align}

\begin{align}
& [\Phi\left(\bar{x}_{t+1}\right) - F\left(\bar{x}_{t+1}, \bar{y}_{t+1}\right)] - \left[\Phi\left(\bar{x}_t\right) - F\left(\bar{x}_t, \bar{y}_t\right)\right] \nonumber\\
\leq& - \frac{c \eta \mu}{2} \left[\Phi\left(\bar{x}_t\right) - F\left(\bar{x}_t, \bar{y}_t \right) \right] + \frac{\eta}{4 \hat{c}}  \left\|\hat{x}_{t+ 1} - \bar{x}_t\right\|^2 - \frac{\eta c}{4 }  \left\|\bar{v}_{t+1}\right\|^2 +   \frac{3 L_f^2 c \eta}{N} \sum_{i=1}^N \|x^i_{t} - \bar{x}_{t}\|^2 \nonumber\\
+& \frac{3 L_f^2 c \eta}{N} \sum_{i=1}^N \|y^i_{t} - \bar{y}_{t}\|^2 + 3 c \eta \|\bar{v}_{t+1} - \nabla_y \bar{F}_{t} \|^2
\end{align}
Next, we define a Lyapunov function, for any $t \geq 1$, we have 
$\Gamma_{t} = \Phi(\bar{x}_{t}) + \frac{3 \hat{c} \eta}{2 \alpha} \|\bar{u}_{t+1} - \frac{1}{N} \sum_{i=1}^{N} \nabla_x F\left(x^i_{t}, y^i_{t}\right) \|^2
+ \frac{6 \hat{c} L_f^2}{c \mu^2} \left[\Phi\left(\bar{x}_t\right) - F\left(\bar{x}_t, \bar{y}_t \right) \right] + \frac{18 \hat{c} \eta L_f^2}{\mu^2 \beta}  \|\bar{v}_{t + 1} - \nabla_y  \bar{F}_{t}\|^{2} $ .

\begin{align}
&\Gamma_{t + 1} - \Gamma_{t} =  - \left(\frac{\hat{c}\eta}{2} - \frac{\hat{c}^2\eta^2 L}{2}\right)\left\|\bar{u}_{t+1}\right\|^2 - \frac{\hat{c}\eta}{2} \left\|\nabla \Phi\left(\bar{x}_{t}\right)\right\|^2 + \frac{3\hat{c}\eta L_f^{2}}{N} \sum_{i=1}^{N} [\|x^i_{t} - \bar{x}_{t}\|^{2} + \|y^i_{t} - \bar{y}_{t}\|^{2} ]
\nonumber\\
&+ \frac{3 \hat{c}\eta L_f^2}{\mu}[\Phi(\bar{x}_t) - F(\bar{x}_{t}, \bar{y}_{t})] + \frac{3\hat{c}\eta}{2} \|\bar{u}_{t+1} - \frac{1}{N} \sum_{i=1} \nabla_x F\left(x^i_{t}, y^i_{t}\right)\|^2 - \frac{3\hat{c}\eta}{2} \|\bar{u}_{t+1} - \frac{1}{N} \sum_{i=1} \nabla_x F\left(x^i_{t}, y^i_{t}\right)\|^2 \nonumber \\
&+  \frac{12 \hat{c} \eta L_f^2}{\alpha N^2b} \sum_{i=1}^{N} \mathbb{E}[\hat{c}^2 \eta^2\|u^i_{t + 1} - \bar{u}_{t + 1}\|^2 + c^2 \eta^2 \|v^i_{t + 1} - \bar{v}_{t + 1}\|^2 ] + \frac{6 \hat{c} \eta L_f^2}{\alpha N b} \mathbb{E}[\hat{c}^2 \eta^2\|\bar{u}_{t + 1}\|^2 + c^2 \eta^2 \|\bar{v}_{t + 1}\|^2 ] \nonumber\\
&+ \frac{3 \hat{c} \eta \alpha \sigma^2}{Nb} - \frac{3 \hat{c}\eta L_f^2}{\mu}[\Phi(\bar{x}_t) - F(\bar{x}_{t}, \bar{y}_{t})] + \frac{3 \hat{c}^2 \eta L_f^2}{2 c \mu^2}  \left\| \bar{u}_{t+1}\right\|^2 - \frac{3 \eta \hat{c} L_f^2}{2 \mu^2}  \left\|\bar{v}_{t+1}\right\|^2 \nonumber\\
&+ \frac{18 L_f^4 \hat{c} \eta}{N \mu^2} \sum_{i=1}^N \|x^i_{t} - \bar{x}_{t}\|^2 + \frac{18 L_f^4 \hat{c} \eta}{N \mu^2} \|y^i_{t} - \bar{y}_{t}\|^2 + \frac{18 \hat{c} \eta L_f^2}{\mu^2}  \|\bar{v}_{t+1} - \nabla_y \bar{F}_{t} \|^2 \nonumber\\
&- \frac{18 \hat{c} \eta L_f^2}{\mu^2} \mathbb{E}\|\bar{v}_{t + 1} - \nabla_y  \bar{F}_{t}\|^{2} + \frac{144 \hat{c} \eta  L_f^4}{\beta N^2 b \mu^2}  \sum_{i=1}^{N} \mathbb{E}[\hat{c}^2 \eta^2\|u^i_{t + 1} - \bar{u}_{t + 1}\|^2 + c^2 \eta^2 \|v^i_{t + 1} - \bar{v}_{t + 1}\|^2 ]  \nonumber\\
&+  \frac{72 \hat{c} \eta L_f^4}{N b \beta \mu^2} \mathbb{E}[\hat{c}^2 \eta^2\|\bar{u}_{t + 1}\|^2 + c^2 \eta^2 \|\bar{v}_{t + 1}\|^2 ] +  \frac{36 \beta \sigma^2 \hat{c} \eta L_f^2}{Nb \mu^2} 
\end{align}
Rearrange the terms, we have
\begin{align}
& \frac{\hat{c}\eta}{2} \left\|\nabla \Phi\left(\bar{x}_{t}\right)\right\|^2 = [\Gamma_{t} - \Gamma_{t + 1}] - \left(\frac{\hat{c}\eta}{2} - \frac{\hat{c}^2\eta^2 L}{2} - \frac{3 \hat{c}^2 \eta L_f^2}{2 c \mu^2} - \frac{6 \hat{c}^3 \eta^3 L_f^2}{\alpha N b} -  \frac{72 \hat{c}^3 \eta^3 L_f^4}{N b \beta \mu^2} \right)\left\|\bar{u}_{t+1}\right\|^2  \nonumber\\
& - (\frac{3 \eta \hat{c} L_f^2}{2 \mu^2} - \frac{6 \hat{c} c^2 \eta^3 L_f^2}{\alpha N b} -  \frac{72 \hat{c} c^2 \eta^3 L_f^4}{N b \beta \mu^2})\left\|\bar{v}_{t+1}\right\|^2 +  \frac{36 \beta \sigma^2 \hat{c} \eta L_f^2}{Nb \mu^2} + \frac{3 \hat{c} \eta \alpha \sigma^2}{Nb}
\nonumber\\
+&  (\frac{12 \hat{c} \eta L_f^2}{\alpha N^2b} + \frac{144 \hat{c} \eta  L_f^4}{\beta N^2 b \mu^2} ) \sum_{i=1}^{N} \mathbb{E}[\hat{c}^2 \eta^2\|u^i_{t + 1} - \bar{u}_{t + 1}\|^2 + c^2 \eta^2 \|v^i_{t + 1} - \bar{v}_{t + 1}\|^2 ] \nonumber\\
& + (\frac{18 L_f^4 \hat{c} \eta}{N \mu^2}  + \frac{3\hat{c}\eta L_f^{2}}{N}) \sum_{i=1}^{N} [\|x^i_{t} - \bar{x}_{t}\|^{2} + \|y^i_{t} - \bar{y}_{t}\|^{2} ] \label{eq:37}
\end{align}

Taking the \eqref{eq:37} telescoping sum over t from $s_t$ to $(s_t + 1)Q$, we have 
\begin{align}
& \sum_{t=s_t}^{(s_t + 1)Q - 1} \left\|\nabla \Phi \left( \bar{x}_{t} \right)\right\|^2 \nonumber\\
\leq& \sum_{t=s_t}^{(s_t + 1)Q - 1} \frac{2[\Gamma_{t} - \Gamma_{t + 1}]}{\hat{c}\eta} - \left( 1 - \hat{c} \eta L - \frac{3 \hat{c} L_f^2}{ c \mu^2} - \frac{12 \hat{c}^2 \eta^2 L_f^2}{\alpha N b} - \frac{144 \hat{c}^2 \eta^2 L_f^4}{N b \beta \mu^2} \right) \sum_{t=s_t}^{(s_t + 1)Q - 1} \left\|\bar{u}_{t+1}\right\|^2  \nonumber\\
& - (\frac{3 L_f^2}{\mu^2} - \frac{12 c^2 \eta^2 L_f^2}{\alpha N b} - \frac{144 c^2 \eta^2 L_f^4}{N b \beta \mu^2}) \sum_{t=s_t}^{(s_t + 1)Q - 1} \left\|\bar{v}_{t+1}\right\|^2 + \sum_{t=s_t}^{(s_t + 1)Q - 1} [\frac{72 \beta \sigma^2 L_f^2}{Nb \mu^2} + \frac{6 \alpha \sigma^2}{Nb}]
\nonumber\\
+&  (\frac{24 L_f^2}{\alpha N^2b} + \frac{288 L_f^4}{\beta N^2 b \mu^2} ) \sum_{t=s_t}^{(s_t + 1)Q - 1} \sum_{i=1}^{N} \mathbb{E}[\hat{c}^2 \eta^2\|u^i_{t + 1} - \bar{u}_{t + 1}\|^2 + c^2 \eta^2 \|v^i_{t + 1} - \bar{v}_{t + 1}\|^2 ] \nonumber\\
& + (\frac{36 L_f^4}{N \mu^2}  + \frac{6 L_f^{2}}{N}) \sum_{t=s_t}^{(s_t + 1)Q - 1} \sum_{i=1}^{N} [\|x^i_{t} - \bar{x}_{t}\|^{2} + \|y^i_{t} - \bar{y}_{t}\|^{2} ] \nonumber\\
\leq& \sum_{t=s_t}^{(s_t + 1)Q - 1} \frac{2[\Gamma_{t} - \Gamma_{t + 1}]}{\hat{c}\eta} - \left( 1 - \hat{c} \eta L - \frac{3 \hat{c} L_f^2}{ c \mu^2} - \frac{12 \hat{c}^2 \eta^2 L_f^2}{\alpha N b} - \frac{144 \hat{c}^2 \eta^2 L_f^4}{N b \beta \mu^2} \right) \sum_{t=s_t}^{(s_t + 1)Q - 1} \left\|\bar{u}_{t+1}\right\|^2  \nonumber\\
& - (\frac{3 L_f^2}{\mu^2} - \frac{12 c^2 \eta^2 L_f^2}{\alpha N b} - \frac{144 c^2 \eta^2 L_f^4}{N b \beta \mu^2}) \sum_{t=s_t}^{(s_t + 1)Q - 1} \left\|\bar{v}_{t+1}\right\|^2 + \sum_{t=s_t}^{(s_t + 1)Q - 1} [\frac{72 \beta \sigma^2 L_f^2}{Nb \mu^2} + \frac{6 \alpha \sigma^2}{Nb}]
\nonumber\\
+&  (\frac{24 L_f^2}{\alpha N^2b} + \frac{288 L_f^4}{\beta N^2 b \mu^2}) \sum_{t=s_t}^{(s_t + 1)Q - 1} \sum_{i=1}^{N} \mathbb{E}[\hat{c}^2 \eta^2\|u^i_{t + 1} - \bar{u}_{t + 1}\|^2 + c^2 \eta^2 \|v^i_{t + 1} - \bar{v}_{t + 1}\|^2 ] \nonumber\\
+& \frac{(\frac{36 L_f^{4}}{\mu^2} + 6 L_f^{2} )(Q - 1)(q \times \frac{1}{20QL} \times \frac{1}{20QL})}{N} \sum_{t=s_t}^{(s_t + 1)Q - 1} \sum_{i=1}^{N}
[[ \hat{c}^2 \sum_{i=1}^{N}\mathbb{E}\| u_{t + 1,i}-\bar{u}_{t+1}\|^{2} +  c^2 \mathbb{E}\| v^i_{t+1} - \bar{v}_{t+1}\|^{2}]] \nonumber
\end{align}
where the last inequality holds dues to  \eqref{eq:29} and $\eta \leq \frac{1}{20 Q L}$ Then summing over all the restarts and divice by the T on the bothe size, we have
\begin{align}
&\frac{1}{T} \sum_{t=0}^{T-1}\mathbb{E}
\left\|\nabla \Phi\left(\bar{x}_{t}\right)\right\|^2 \nonumber\\
\leq& \frac{2[\Gamma_0  - \Gamma_{T}]}{\hat{c} \eta T} - \left(1 - \hat{c}\eta L - \frac{3 \hat{c} L_f^2}{ c \mu^2} - \frac{12 L_f^2 \eta^2 \hat{c}^2}{N b \alpha } - \frac{144 \hat{c}^2 \eta^2 L_f^4}{N b \beta \mu^2} \right)\frac{1}{T} \sum_{t=0}^{T-1}\left\|\bar{u}_{t+1}\right\|^2 \nonumber\\
&- \left(\frac{3 L_f^2}{ \mu^2 } - \frac{12 L_f^2 \eta^2 c^2}{N b \alpha} - \frac{144 c^2 \eta^2 L_f^4}{N b \beta \mu^2}\right) \frac{1}{T} \sum_{t=0}^{T-1} \left\|\bar{v}_{t+1}\right\|^{2} + \frac{6 \alpha \sigma^2 }{Nb} + \frac{72 \beta \sigma^2 L_f^2}{Nb  \mu^2} \nonumber\\
&+ \left(\frac{24 L_f^2 \eta^2}{N b \alpha} + \frac{288 L_f^4 \eta^2}{N b \beta \mu^2} + \frac{3 + 18 \kappa^2}{200} \right) \frac{1}{NT} \sum_{t=1}^{T} \sum_{i=1}^{N}
[[ \hat{c}^2 \sum_{i=1}^{N}\mathbb{E}\| u^i_{t + 1}-\bar{u}_{t + 1}\|^{2} +  c^2 \mathbb{E}\| v^i_{t + 1} - \bar{v}_{t + 1}\|^{2}]] \nonumber \\
\leq& \frac{2[\Gamma_0 - \Gamma_{T}]}{\hat{c} \eta T} - \left(1 - \hat{c}\eta L - \frac{3 \hat{c} L_f^2}{ c \mu^2} - \frac{12 L_f^2 \eta^2 \hat{c}^2}{N b \alpha } - \frac{144 \hat{c}^2 \eta^2 L_f^4}{N b \beta \mu^2} - \frac{ \hat{c}^2 \kappa^{2}}{12}  \right)\frac{1}{T} \sum_{t=0}^{T-1}\left\|\bar{u}_{t+1}\right\|^2 \nonumber\\
&- \left(\frac{3 L_f^2}{ \mu^2 } - \frac{12 L_f^2 \eta^2 c^2}{N b \alpha} - \frac{144 c^2 \eta^2 L_f^4}{N b \beta \mu^2} - \frac{ c^2 \kappa^{2}}{12}\right) \frac{1}{T} \sum_{t=0}^{T-1} \left\|\bar{v}_{t+1}\right\|^{2} + \frac{6 \alpha \sigma^2 }{Nb} + \frac{72 \beta \sigma^2 L_f^2 }{Nb \mu^2} \nonumber\\
& + \left[\frac{  \sigma^{2} (c_1^{2} + c_2^{2})}{30 b L^{2}} + \frac{\zeta^{2} (c_1^{2} + c_2^{2})}{12 L^{2}} \right] \kappa^{2} \eta^{2} \label{eq:40}
\end{align}
where last inequality holds due to Lemma \ref{lem:A8} and the fact that $\alpha = c_1 \eta^2, \beta = c_2 \eta^2, c_1 = \frac{30 L^2}{b N \kappa^{1 - \nu}}, c_2 = \frac{30 L^2}{b N \kappa^{2 - 2 \nu}}, c = \frac{1}{6 \kappa^{1 - \nu}}, \hat{c} \leq \frac{1}{54 \kappa^{3 - \nu}}$, where $\nu \in [0,1]$, $\kappa > 1$. Therefore, $ \left(\frac{24 L_f^2 \eta^2 }{N b \alpha} + \frac{288  \kappa^2 L_f^2 \eta^2}{N b \beta} + \frac{3 + 18 \kappa^2}{200}  \right) \max \{\hat{c}^2, c^2 \} \leq  \frac{24}{30 \times 36} + \frac{288 \kappa^2}{30 \times 36} + \frac{3}{200 \times 36} + \frac{18 \kappa^{2 \nu}}{200 \times 36} \leq \frac{2}{5} \kappa^{2}$. 

Furthermore, for the second term in the \eqref{eq:40}, $\eta \leq \frac{1}{20 Q L}$, then
$1 - \hat{c}\eta L - \frac{3 \hat{c} L_f^2}{ c \mu^2} - \frac{12 L_f^2 \eta^2 \hat{c}^2}{N b \alpha } - \frac{144 \hat{c}^2 \eta^2 L_f^4}{N b \beta \mu^2} - \frac{ \hat{c}^2 \kappa^{2}}{12} \geq 1 - \frac{1}{120} - \frac{3}{9} - \frac{12}{30} - \frac{144}{ 30 \times 36} - \frac{1}{12 \times 36} \geq 0$
where $\hat{c} \leq \frac{c}{9 \kappa^2} = \frac{1}{54 \kappa^{3 - \nu}}$. In addition, for the third term in the \eqref{eq:40}, we have $\frac{3 L_f^2}{ \mu^2 } - \frac{12 L_f^2 \eta^2 c^2}{N b \alpha} - \frac{144 c^2 \eta^2 L_f^4}{N b \beta \mu^2} - \frac{ c^2 \kappa^{2}}{12} \geq 3\kappa^2 - \frac{12}{30 \times 36} - \frac{144 \kappa^2}{30 \times 36} - \frac{ \kappa^{2\nu}}{12 \times 36} \geq 0$

\begin{align}
&\frac{1}{T} \sum_{t=0}^{T-1}\mathbb{E} 
\left\|\nabla \Phi\left(\bar{x}_{t}\right)\right\|^2 \nonumber\\
\leq& \frac{2[\Phi(\bar{x}_{0})  - \Phi(\bar{x}_{T})]}{\hat{c} \eta T} + \frac{3}{\alpha T}\mathbb{E} \|\bar{u}_{1} - \nabla_x \bar{F}_0\|^{2}  + \frac{36 L_f^2}{\mu^2  \beta T} \mathbb{E}\|\bar{v}_{1} - \nabla_y \bar{F}_0\|^{2} + \frac{12 L_f^2}{c\eta \mu^2 T} [\Phi(\bar{x}_0) - F(\bar{x}_0, \bar{y}_0)] \nonumber\\
&+ \frac{6 \alpha \sigma^2 }{Nb} + \frac{72 \beta \sigma^2 L_f^2 }{Nb \mu^2}  + \left[\frac{  \sigma^{2} (c_1^{2} + c_2^{2})}{30 b L^{2}} + \frac{\zeta^{2} (c_1^{2} + c_2^{2})}{12 L^{2}} \right] \kappa^2 \eta^{2} \nonumber\\
\leq& \frac{2[\Phi(\bar{x}_{0})  - \Phi(\bar{x}_{T})]}{\hat{c} \eta T} + \frac{3 \sigma^2}{\alpha T B N} + \frac{36 L_f^2 \sigma^2}{\mu^2 \beta T BN}  + \frac{12 L_f^2}{c\eta \mu^2 T} [\Phi(\bar{x}_0) - F(\bar{x}_0, \bar{y}_t)] \nonumber\\
&+ \frac{6 \alpha \sigma^2 }{Nb} + \frac{72 \beta \sigma^2 L_f^2 }{Nb \mu^2}  + \left[\frac{ \sigma^{2} (c_1^{2} + c_2^{2})}{30 b L^{2}} + \frac{\zeta^{2} (c_1^{2} + c_2^{2})}{12 L^{2}} \right] \kappa^{2} \eta^{2}
\end{align}

Finally, we analyze the convergence of FedSGDA.
$b=O(\kappa^{\nu})$ for $\nu \in [0, 1], c_1 = \frac{30 L^2}{b N \kappa^{1 - \nu}}, c_2 = \frac{30 L^2}{b N \kappa^{2 - 2 \nu}}, c = \frac{1}{6 \kappa^{1 - \nu}}, \hat{c} = \frac{1}{54 \kappa^{3 - \nu}}$, 
$T = \kappa^{3 - \nu} T_0$, $Q= \frac{T_0^{1/3}}{N^{2/3}}$, 
$\eta = \frac{1}{20 Q L} = \frac{N^{2/3}}{20 L T_0^{1/3}}$, we have $\alpha = c_1 \eta^2 =  \frac{3 N^{1/3}}{40 T_0^{2/3} b \kappa^{1 - \nu}}$, $\beta = c_2 \eta^2 =  \frac{3 N^{1/3}}{40 T_0^{2/3} b \kappa^{2 - 2\nu}}$. $B =  \frac{T_0^{1/3} b \kappa^{1 - \nu}}{N^{2/3}}$

\begin{align}
&\frac{1}{T} \sum_{t=0}^{T-1}\mathbb{E}
\left\|\nabla \Phi\left(\bar{x}_{t}\right)\right\|^2 \nonumber\\
\leq& \frac{2160 L[\Phi(\bar{x}_{0})  - \Phi^{*}]}{(N T_0)^{2/3}} + \frac{40 \sigma^2}{ \kappa^{3 - \nu} (N T_0)^{2/3} }  + \frac{480 \sigma^2}{\kappa^2 (N T_0)^{2/3}}  + \frac{240 L_f }{ (N T_0)^{2/3}} [\Phi(\bar{x}_0) - F(\bar{x}_0, \bar{y}_0)]  \nonumber\\
&+ \frac{9 \sigma^2 }{20 b \kappa  (N T_0)^{2/3}} + \frac{27 \sigma^2}{5 (N T_0)^{2/3}} + \left[\frac{3 \sigma^{2}}{20 b} + \frac{15 \zeta^{2}}{40} \right]  \frac{1}{(N T_0)^{2/3}} \nonumber\\
\end{align}

where $\Phi^{*}$ is the optimal.
To let the right hand is less than $\varepsilon^2$, we get $T_0 = O(N^{-1}\varepsilon^{-3})$ and $T = O(\kappa^{3 - \nu} N^{-1} \varepsilon^{-3})$. Considering the $b= \kappa^{\nu}$, Communication Complexity $\frac{T}{Q} = \kappa^{3 - \nu} (N T_0)^{2 / 3} = \kappa^{3 - \nu} \varepsilon^{-2}$. Sample complexity $b T = O(\kappa^3 N^{-1} \varepsilon^{-3})$. When $\nu = 1, b = \kappa$, Communication Complexity $\frac{T}{Q} = \kappa^{2} \varepsilon^{-2}$
\end{proof}
\section{Experiments}
\subsection{Ablation Test results}
\begin{figure}[ht]
\centering
\includegraphics[width=.5\textwidth]{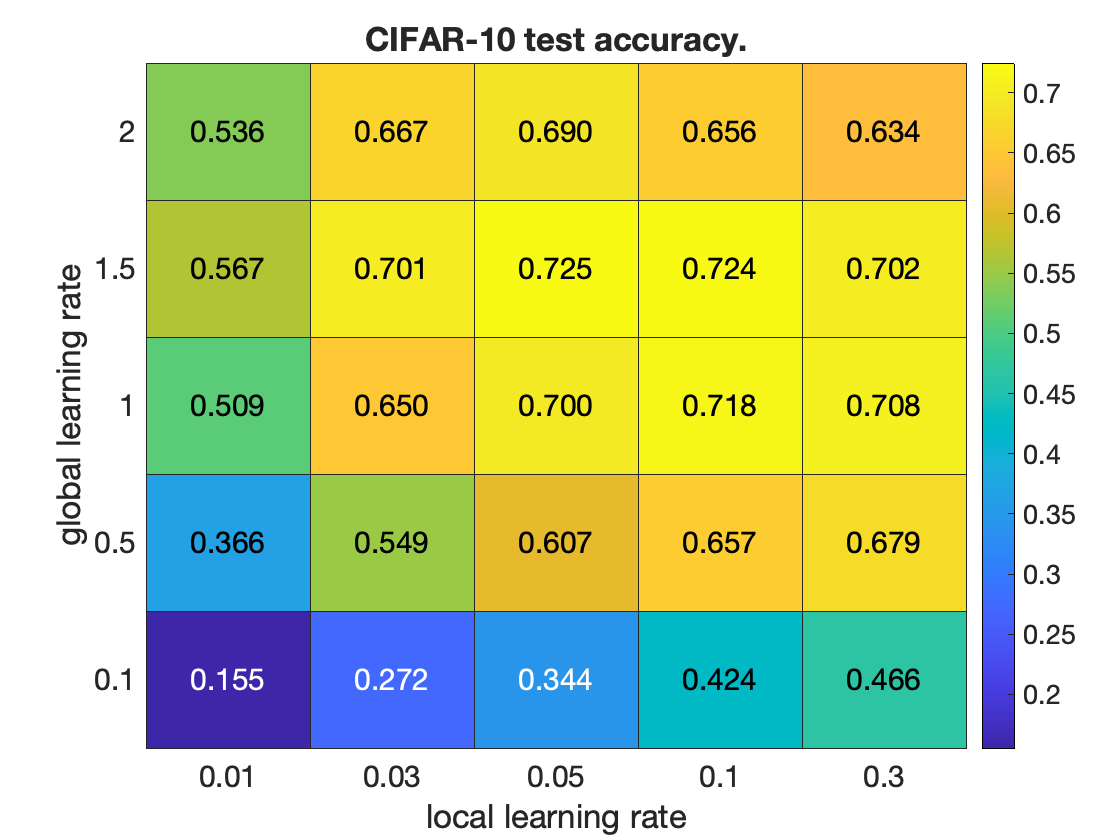}
\caption{ CIFAR-10 test accuracy on Fair Classification with various global learning rate and local learning rate $\eta_x$ ( $\eta_y$ = 0.1  $\eta_x$ )} \label{fig:3}
\end{figure}

In Fair classification, we also explore the combinations of the global and local learning rates of FedSGDA+ and report the test accuracy on CIFAR-10 in \ref{fig:3}. Results show both global and local learning rates have a significant effect on the final performance. 

In AUROC maximization, we present  results of Fed-SGDA-M with different momentum parameters and local update number $Q$ in \ref{tb:2}. When momentum parameters are 0.9 ( $1 - \alpha = 0.1 $ and $1 - \beta = 0.1 $), we put less weight on the history information and the FedSGDA-M tends to be local SGDA. When the Q increase, the model performance decrease due to client drift. However, when the momentum parameter is 0.9, with the help of momentum variables, the effect of a larger local update number is reduced.


\begin{table} [ht]
 \centering
  \caption{ CIFAR-10 test accuracy on AUROC maximization with various inner loop Q and momentum constant $\alpha$ / $\beta$}
  \label{tab:5}
\begin{tabular}{c c c c c c}
\hline 
momentum  & 0.1 & 0.3 & 0.5 & 0.7 & 0.9\\
\hline 
Q = 10 & 0.6115 $\pm$ 0.0055 & 0.6029 $\pm$ 0.0067 & 0.6068 $\pm$ 0.0068 & 0.6027 $\pm$ 0.0056 &0.6006 $\pm$ 0.0070 \\
Q = 20 & 0.6104 $\pm$ 0.0040 & 0.6059 $\pm$ 0.0031  & 0.6043 $\pm$ 0.0068 & 0.5988 $\pm$ 0.0063  & 0.5912 $\pm$ 0.0085 \\
Q = 50 & 0.6109 $\pm$ 0.0069  & 0.6043 $\pm$ 0.0126  & 0.6049 $\pm$ 0.0171  &  0.6027 $\pm$ 0.0075  & 0.5874 $\pm$ 0.0093 \\
\hline
\end{tabular} \label{tb:2}
\end{table}



\subsection{Model Architectures}
\vspace*{-6pt}
\begin{table} [ht]
  \centering
  \caption{ Model Architecture for Fashion-MNIST}
  \label{tab:4}
\begin{tabular}{ll}
\hline Layer Type & Shape \\
\hline Convolution + Tanh & $3 \times 3 \times 5 $ \\
Max Pooling & $2 \times 2$ \\
Convolution + Tanh & $3 \times 3 \times 10$ \\
Max Pooling & $2 \times 2$ \\
Fully Connected + Tanh & 100 \\
Fully Connected + Tanh & 1 / 10 \\
\hline
\end{tabular}
\end{table}
\vspace{-12pt}
\begin{table} [H]
  \centering
  \caption{ Model Architecture for CIFAR-10}
  \label{tab:5}
\begin{tabular}{llc}
\hline Layer Type & Shape & padding\\
\hline Convolution + ReLU & $3 \times 3 \times 16 $ & 1 \\
Max Pooling & $2 \times 2$ \\
Convolution + ReLU & $3 \times 3 \times 32$ & 1\\
Max Pooling & $2 \times 2$ \\
Convolution + ReLU & $3 \times 3 \times 64$ & 1\\
Max Pooling & $2 \times 2$ \\
Fully Connected + ReLU & 512 \\
Fully Connected + ReLU & 64 \\
Fully Connected + ReLU & 1 / 10 \\
\hline
\end{tabular}
\end{table}

\end{document}